\icmltitlerunning{Bridging Multi-Task Learning and Meta-Learning}
\begin{document}

\twocolumn[
\icmltitle{Bridging Multi-Task Learning and Meta-Learning:\\
Towards Efficient Training and Effective Adaptation}



\icmlsetsymbol{equal}{*}

\begin{icmlauthorlist}
\icmlauthor{Haoxiang Wang}{uiuc}
\icmlauthor{Han Zhao}{uiuc}
\icmlauthor{Bo Li}{uiuc}

\icmlaffiliation{uiuc}{University of Illinois at Urbana-Champaign, Urbana, IL, USA}
\end{icmlauthorlist}
\icmlcorrespondingauthor{Haoxiang Wang}{hwang264@illinois.edu}
\icmlcorrespondingauthor{Han Zhao}{hanzhao@illinois.edu}
\icmlcorrespondingauthor{Bo Li}{lbo@illinois.edu}

\icmlkeywords{Machine Learning, ICML}

\vskip 0.3in
]



\printAffiliationsAndNotice{}  
\begin{abstract}
Multi-task learning (MTL) aims to improve the generalization of several related tasks by learning them jointly. As a comparison, in addition to the joint training scheme, modern meta-learning allows unseen tasks with limited labels during the test phase, in the hope of fast adaptation over them. Despite the subtle difference between MTL and meta-learning in the problem formulation, both learning paradigms share the same insight that the shared structure between existing training tasks could lead to better generalization and adaptation. In this paper, we take one important step further to understand the close connection between these two learning paradigms, through both theoretical analysis and empirical investigation. Theoretically, we first demonstrate that MTL shares the same optimization formulation with a class of gradient-based meta-learning (GBML) algorithms. We then prove that for over-parameterized neural networks with sufficient depth, the learned predictive functions of MTL and GBML are close. In particular, this result implies that the predictions given by these two models are similar over the same unseen task. Empirically, we corroborate our theoretical findings by showing that, with proper implementation, MTL is competitive against state-of-the-art GBML algorithms on a set of few-shot image classification benchmarks. Since existing GBML algorithms often involve costly second-order bi-level optimization, our first-order MTL method is an order of magnitude faster on large-scale datasets such as mini-ImageNet. We believe this work could help bridge the gap between these two learning paradigms, and provide a computationally efficient alternative to GBML that also supports fast task adaptation. 
\end{abstract}

\section{Introduction}\label{sec:intro}
Multi-task learning has demonstrated its efficiency and effectiveness on learning shared representations with training data from multiple related tasks simultaneously~\cite{caruana1997multitask,ruder2017overview,overview-mtl}. Such shared representations could transfer to many real-world applications, such as object detection~\cite{zhang2014facial}, image segmentation~\cite{kendall2018multi}, multi-lingual machine translation~\cite{dong2015multi}, and language understanding evaluation~\cite{wang2018glue}.
On the other hand, in addition to the joint training scheme, modern meta-learning can leverage the shared representation to fast adapt to unseen tasks with only minimum limited data during the test phase~\cite{hospedales2020metalearning}. As a result, meta-learning has drawn increasing attention and been applied to a wide range of learning tasks, including few-shot learning~\cite{snell2017prototypical,matching-net,metaOptNet}, meta reinforcement learning~\cite{maml}, speech recognition~\cite{hsu2020meta} and bioinformatics~\cite{luo2019mitigating}.

Despite their subtle differences in problem formulation and objectives, both MTL and meta-learning aim to leverage the correlation between different tasks to enable better generalization to either seen or unseen tasks. However, a rigorous exploration of this intuitive observation is severely lacking in the literature. As a result, while being effective on fast adaptation to unseen tasks, many meta-learning algorithms still suffer from expensive computational costs~\citep{nichol2018first,howtotrainmaml,hospedales2020metalearning}. On the other hand, while being efficient in training, due to its problem formulation, MTL does not allow adaptation to unseen tasks, at least in a straightforward manner. Hence, a natural question to ask is, 
\begin{quoting}
\itshape
\vspace*{-0.2em}
    Can we combine the best of both worlds from MTL and meta-learning, i.e., fast adaptation to unseen tasks with efficient training? 
\vspace*{-0.1em}
\end{quoting}
To answer this question, one needs to first understand the relationship between MTL and meta-learning in greater depth. To this end, in this paper, we take the \emph{first} attempt with the goal to bridge these two learning paradigms. In particular, we focus on a popular class of meta-learning methods, gradient-based meta-learning (GBML), which takes a bi-level optimization formulation inspired from the Model-Agnostic Meta-Learning (MAML)~\cite{maml}. From an optimization perspective, we first show that MTL and a class of GBML algorithms share the same optimization formulation. Inspired by this simple observation, we then prove that, for sufficiently wide neural networks, these two methods lead to close predictors: on any test task, the predictions given by these two methods are similar, and the gap is inversely proportional to the network depth. Our theoretical results imply that, in principle, it is possible to improve the existing MTL methods to allow fast adaptation to unseen tasks, without loss in its training efficiency, and thus provide an affirmative answer to the above question.

Empirically, to corroborate our findings, we first conduct a series of experiments on synthetic data to show the increasing closeness between the predictors given by MTL and GBML, as the network depth grows. We then perform extensive experiments to show that with proper implementation, MTL can achieve similar or even better results than the \emph{state-of-the-art} GBML algorithms, while enjoys \emph{significantly lower} computational costs. This indicates that MTL could be potentially applied as a powerful and efficient alternative to GBML for meta-learning applications. 

Our contributions could be briefly summarized as follows:

\begin{itemize}[leftmargin=*,align=left,noitemsep,nolistsep]
    \item \emph{Bridging MTL and GBML from the optimization perspective:} We show that MTL and a class of GBML algorithms share the same {optimization formulation}. In particular, GBML takes a {regularized bi-level optimization} while MTL adopts the simple {joint training}.
    \item \emph{Closeness in the function space:} 
    we prove that for over-parameterized neural nets with sufficient width, the learned predictive functions of MTL and a GBML algorithm are close in the function space, indicating their predictions are similar on unseen (test) tasks. Furthermore, we empirically validate this theoretical result on synthetic data.
    \item \emph{Empirical performance and efficiency:} Motivated by our theoretical results, we implement MTL with modern deep neural nets, and show that the performance of MTL is competitive against MetaOptNet \cite{metaOptNet}, a \textit{state-of-the-art} GBML algorithm, on few-shot image classification benchmarks. Notably, the training of MTL is \textit{an order of magnitude faster} than that of MetaOptNet, due to its first-order optimization. The code is released at \url{https://github.com/AI-secure/multi-task-learning}
\end{itemize}

\section{Related Work}\label{sec:related-works}
\textbf{Multi-Task Learning}~~Multi-task learning (MTL) is a method to jointly learn shared representations from multiple training tasks~\cite{caruana1997multitask}. Past research on MTL is abundant. Theoretical results on learning shared representation with MTL have shown that the joint training scheme is more sample efficient than single-task learning, at least under certain assumptions of task relatedness, linear features and model classes~\cite{maurer2016benefit,tripuraneni2020theory}. Other works on MTL include designing more efficient optimization methods to explore the task and feature relationships~\citep{evgeniou2007multi,argyriou2008convex,zhang2010convex,zhao2020efficient}. 

\textbf{Meta-Learning}~~Meta-learning, or learning-to-learn, is originally proposed for few-shot learning tasks~\cite{learningtolearn,baxter1998theoretical}, where the goal is fast adaptation to unseen tasks. Among various meta-learning methods, a line of works following MAML, termed as gradient-based meta-learning (GBML)~\citep{maml,imaml}, has been increasingly applied in many downstream application domains. Recent works on understanding GBML have shown that MAML is implicitly performing representation learning, which is the key to its empirical success~\citep{raghu2019rapid}. In particular, \citet{saunshi2020sample} compares MTL and Reptile \cite{reptile}, a first-order variant of MAML, in a toy setting of \textit{1d} subspace learning with \textit{2-layer} \textit{linear} models, and shows the upper bounds of their sample complexity are of the same order. In contrast, our theory is compatible with \textit{non-linear} neural nets of \textit{any depth} and has no restriction on the input dimension, which is a more realistic and practical setting. In addition to GBML, the considered MTL implementation shares some similarities with metric-based meta-learning (i.e., metric learning) methods in few-shot learning scenarios \citep{snell2017prototypical,matching-net}, since here we also only keep the trained hidden layers for test.

\section{Preliminaries}\label{sec:prelim}
We first provide a brief discussion to the common network architectures, training algorithms, and evaluation protocols for MTL and GBML.

\subsection{Neural Networks Architectures}
Consider a $L$-layer fully-connected neural network $f_\theta \colon \bR^{d} \mapsto \bR^{k}$, which contains $\width_{l}$ neurons in the $l$-th hidden layer for $l\in[L-1]$. Denote the parameters of the first $L-1$ layers (i.e., hidden layers) as $\theta^{\body}$, and the last hidden layer output (i.e., network representation/features) as $\phitop_{\theta^\body}\colon \bR^{d}\mapsto \bR^{\width_{L-1}}$. For simplicity, we assume the output layer (i.e., network \textit{head}) has no bias, and denote it as $w \in \bR^{\width_{L-1}\times k}$. Thus, for any input $x\in \bR^{d}$, the neural network output can be expressed as $$f_\theta(x) = \phitop_{\theta^\body}(x) ~ w ~.$$
Networks used in MTL often have a \emph{multi-head} structure, where each head corresponds to a training task. In this case, the shared hidden layers are treated as the shared representations. Formally, denote a $L$-layer $N$-head neural network as $\fmtl_\thetamtl \colon \bR^{d} \times [N] \mapsto \bR^{k}$, s.t. for any input $x\in \bR^{d}$ and \textit{head index} $i\in [N]$, the network output is
\begin{align}
    \fmtl_{\thetamtl}(x,i) =\phitop_{\thetamtl^\body}(x) ~ \wmtl{i},
\end{align}
where $\phi_{\thetamtl^\body}(x)$ is the last hidden layer output, $\thetamtl^\body$ is the parameters of first $L-1$ layers, and $w_i$ is the $i$-th head in the output layer. Note that the network parameters are the union of parameters in the hidden layers and multi-head output layer, i.e., $\thetamtl = \{\thetamtl^\body\}\cup \{\wmtl{i}\}_{i\in[N]}$.

\vspace{-0.5em}
\subsection{Multi-Task Learning}\label{sec:prelim:mtl}
In MTL, a multi-head neural net with $N$ heads is trained over $N$ training tasks each with $n$ samples \cite{ruder2017overview}. For $i\in[N]$, denote the data for the $i$-th task as $(X_i,Y_i)$, where $X_i\in \bR^{n\times d}$ and $Y_i \in \bR^{n\times k}$. The training of MTL is to minimize the following objective given loss function $\ell$,
\begin{align}\label{eq:prelim:mtl-loss}
    \min_{\thetamtl} \loss_\mtl(\thetamtl) \coloneqq
    \sum_{i\in [N]} \ell \left(\phitop_{\thetamtl^\body}(X_i)~\wmtl{i}, ~Y_i\right)
\end{align}
where $\phitop_{\thetamtl^\body}(X_i) \coloneqq \big(\phitop_{\thetamtl}(x)\big)_{x\in[X_i]}\in \bR^{n\times \width_{L-1}}$.
\subsection{Gradient-Based Meta-Learning and ANIL}
\label{sec:prelim:gbml}
Here we introduce a representative algorithm of GBML, \textit{Almost-No-Inner-Loop} (ANIL) \cite{raghu2019rapid}, which is a simplification of MAML. The setup is the same as Sec. \ref{sec:prelim:mtl}, where $N$ {training tasks} each with $n$ sample-label pairs are provided, i.e., $\{X_i,Y_i\}_{i=1}^N$. In practice, a training protocol, \textit{query-support split} (cf.\ Appendix $\ref{supp:background}$ for more details), is often adopted. However, recent work has shown that such a split is not necessary~\cite{bai2021how}. Hence, we do not consider the query-support split through this work.

ANIL minimizes the following loss over $\theta=\{\theta^\body,w\}$,
\begin{align}
    &\min_\theta \loss_\anil(\theta) \coloneqq \sum_{i\in[N]} \ell(\phitop_{\theta^\body}(X_i) ~ w_i^*, ~Y_i) \label{eq:prelim:anil-no-split:outer-loop}\\
    &\text{s.t. }w_i^*=~\innerloop(w, \phitop_{\theta^\body}(X_i), Y_i, \tau,\lambda)\label{eq:prelim:anil-no-split:inner-loop}
\end{align} 
where \eqref{eq:prelim:anil-no-split:inner-loop} is the common \textit{inner-loop} optimization of GBML, which \textit{runs $\tau$ steps of gradient descent w.r.t. $w$ on the loss $\ell(\phitop_{\theta^\body}(X_i) w, Y_i)$, with learning rate $\lambda$}. 

Notably, \citet{lin2021to} empirically shows that with a frozen head $w$ across training, ANIL has no performance drop, indicating that optimizing over $w$ in the outer loop \eqref{eq:prelim:anil-no-split:outer-loop} is insignificant. Thus, the corresponding training objective of this  ANIL without the outer-loop optimization of $w$ is
\begin{align}
    &\min_{\theta^\body} \loss_\anil(\theta) \coloneqq \sum_{i\in[N]} \ell(\phitop_{\theta^\body}(X_i) ~ w_i^*, ~Y_i) \label{eq:prelim:anil-frozen-head:outer-loop}
\end{align}
with $w_i^*$ defined in the same way as \eqref{eq:prelim:anil-no-split:inner-loop}.

\vspace{-0.5em}
\subsection{Fine-Tuning for Test Task Adaptation}\label{sec:prelim:fine-tune}
In the test phase of few-shot learning, an arbitrary test task $\task$ consists of
$(\xyxy) \in \bR^{n\times d}\times  \bR^{n\times k}\times \bR^{n'\times d}\times \bR^{n'\times k}$,
\label{eq:prelim:def-test-task}
where $(X,Y)$ are \textit{query} data and $(X',Y')$ are \textit{support} data. Note that the original formulation of MTL with multi-head network structures does not support adaptation to unseen tasks. To compare MTL and GBML on an equal footing, in this work, we adopt the following same test protocol on both MTL and GBML. First, a randomly initialized head $\wtest$ is appended to the last hidden layer of networks trained under MTL or GBML. Then, the head $\wtest$ is \textit{fine-tuned} on labelled \textit{support} samples $(X',Y')$, and the network makes predictions on the \textit{query} samples $X$.
Specifically, for a trained MTL model with parameters $\thetamtl$, its prediction on $X$ after fine-tuning $\wtest$ on $(X',Y')$ for $\hat\tau$ steps is
\begin{align}
    &F_\mtl(\xxy) = \phitop_{\thetamtl^\body}(X) ~\wtest^*\label{eq:fine-tune:mtl:1}\\
    &\text{s.t. }\wtest^* = \innerloop(\wtest, \phitop_{\thetamtl^\body}(X'),Y',\hat{\tau},\lambda)\label{eq:fine-tune:mtl:2}
\end{align}
where $\wtest^*$ is the fined-tuned test head after $\hat\tau$ steps of gradient descent on $\wtest$, and $\innerloop$ is defined in the same way as \eqref{eq:prelim:anil-no-split:inner-loop}. Similarly, the prediction of a trained ANIL model with parameters $\theta$ on $X$ is
\begin{align}
    &F_\anil(\xxy) = \phitop_{\theta^\body}(X) ~\wtest^*\label{eq:fine-tune:anil:1}\\
    &\text{s.t. }\wtest^* = \innerloop(\wtest, \phitop_{\theta^\body}(X'),Y',\hat{\tau},\lambda)\label{eq:fine-tune:anil:2}
\end{align}

\vspace{-1em}
\section{Theoretical Analysis}
\label{sec:theory}
In this section, we compare MTL with a class of GBML algorithms, and show that \textit{(i)} essentially, they optimize the same objective with different optimization approaches, \textit{(ii)} the learned predictors from both algorithms are close under a certain norm in the function space. 

\vspace{-1em}
\subsection{A Taxonomy of Gradient-Based Meta-Learning}
Various GBML methods often differ in the way on how to design the \emph{inner-loop optimization} in Eq.~\eqref{eq:prelim:anil-no-split:inner-loop}. For example, MAML, ANIL and some other MAML variants usually take \textit{a few} gradient descent steps (typically $1$$\sim$$10$ steps), which is treated as an \textit{early stopping} type of regularization~\cite{imaml,grant2018recasting}. As a comparison, another line of GBML algorithms uses the explicit \textit{$\ell_2$ regularization} in the inner loop instead~\cite{imaml,metaOptNet,r2d2,zhou2019efficient,goldblum2020unraveling}. In addition to regularization, variants of GBML methods also differ in the exact layers to optimize in the inner loop: While MAML optimizes all network layers in the inner loop, some other GBML algorithms are able to achieve state-of-the-art performance \cite{metaOptNet} by only optimizing the \textit{last layer} in the inner loop.

Based on the different \textit{regularization} strategies and \textit{optimized layers} in the inner-loop, we provide a taxonomy of GBML algorithms in Table~\ref{tab:taxonomy}.
\begin{table*}[tb]
    \centering
    \caption{A taxonomy of gradient-based meta-learning algorithms based on the algorithmic design of Eq.~\eqref{eq:prelim:anil-no-split:inner-loop}.}
    \label{tab:taxonomy}
    \vspace{+5pt}
    \begin{tabular}{l|*2l}\toprule
    Inner-Loop Optimized Layers &  \textbf{Early Stopping} & \textbf{$\ell_2$ Regularizer}\\\midrule
    \textbf{Last} Layer  &  \multirow{2}{*}{ANIL \citep{raghu2019rapid}} & MetaOptNet \citep{metaOptNet} \\
                         &                                               & R2D2 \citep{r2d2} \\\midrule
    \textbf{All} Layers  &  \multirow{2}{*}{MAML \citep{maml}} & iMAML \citep{imaml} \\
                        &                                            & Meta-MinibatchProx \citep{zhou2019efficient}\\\bottomrule
    \end{tabular}
\end{table*}

For algorithms that only optimize the last layer in the inner-loop, we formulate their training objectives in a \textit{unified framework}:
\begin{align}
    & \quad   \min_{\theta^\body} ~~ \sum_{i\in[N]} \ell \big(\phitop_{\theta^\body}(X_{i})~w_i^*, ~Y_{i}\big) \label{eq:unified-meta:outer}\\
    &   \text{s.t.} ~w_i^* = \argmin_{w_i} ~\ell \big(\phitop_{\theta^\body}(X_{i})~w_i, ~Y_{i}\big) + R(w_i)\label{eq:unified-meta:inner}
\end{align}
\begin{table}[tb]
\vspace*{-1em}
\centering
\caption{Typical instantiations of the problem~\eqref{eq:unified-meta:outer}. $\alpha \in \bR^+$ controls the strength of the regularization.}
\label{tab:instan}
\vspace{+5pt}
\begin{tabular}{*4l}
\toprule
              & ANIL & MetaOptNet         & R2D2               \\
\midrule
$\ell$        & Cross-Entropy & SVM Loss            & Squared Loss          \\
\midrule
$R(w_i)$ & Early Stopping & $\alpha\|w_i\|_2^2$ & $\alpha\|w_i\|_2^2$\\
\bottomrule
\end{tabular}
\end{table}

Certainly, there are abundant choices for the loss function $\ell$ and the regularization $R(w_i)$, and we summarize the typical choices used in the literature in Table~\ref{tab:instan}. For algorithms that optimize all layers in the inner loop, a unified framework similar to \eqref{eq:unified-meta:outer} and \eqref{eq:unified-meta:inner} is provided in Appendix \ref{supp:background}.

\subsection{Equivalence Between GBML and MTL from an Optimization Perspective}
In this section, we provide a simple observation that, surprisingly, the optimization objective of MTL shares the same formulation as that of GBML algorithms, \eqref{eq:unified-meta:outer}. Specifically, the objective function of MTL, \eqref{eq:prelim:mtl-loss}, can be re-written as
\begin{align}\label{eq:meta=mtl:mtl-obj}
   \min_{\thetamtl^\body, \{\wmtl{i}\}_{i=1}^N}
    \overbrace{\sum_{i\in [N]} \ell \left(\phitop_{\thetamtl^\body}(X_i)~\wmtl{i}, ~Y_i\right)}^{\loss_\mtl(\thetamtl)} 
\end{align}
where $\{\wmtl{i}\}_{i=1}^N$ are heads of a multi-head neural net, and $\thetamtl=\{\thetamtl^\body\}\cup\{\wmtl{i}\}_{i=1}^N$. As a comparison, if we plug \eqref{eq:unified-meta:inner} into \eqref{eq:unified-meta:outer}, the GBML objective \eqref{eq:unified-meta:outer} can be simplified as
\begin{align}\label{eq:meta=mtl:meta-obj}
    \min_{\theta^\body} \overbrace{\Big[\min_{\{w_i\}_{i=1}^N} \sum_{i\in[N]} \ell \Big(\phitop_{\theta^\body}(X_{i})~w_i, Y_{i}\Big) + R(w_i)\Big]}^{\loss_\gbml(\theta^\body)}
\end{align}
Note that different from~\eqref{eq:meta=mtl:mtl-obj}, the heads $\{w_i\}_{i=1}^N$ in~\eqref{eq:meta=mtl:meta-obj} are transient, in the sense that GBML algorithms do not explicitly save them during training. On the other hand, $\theta^\body$ contains all parameters to optimize in \eqref{eq:meta=mtl:meta-obj}, and $\theta^\body$ is optimized over $\ell_\gbml(\theta^\body)$, which is obtained by plugging in the minimizer of $\{w_i\}_{i=1}^N$ on the regularized loss. In other words, \eqref{eq:meta=mtl:meta-obj} is a bi-level optimization problem, with outer-loop optimization on network parameters $\theta^\body$ and inner-loop optimization on the transient heads $\{w_i\}_{i=1}^N$.

Clearly, up to the regularization term, the optimization problems \eqref{eq:meta=mtl:mtl-obj} and \eqref{eq:meta=mtl:meta-obj} share the same structure and formulation. In terms of the algorithms used to solve these two optimization problems, it is worth pointing out that GBML usually solves \eqref{eq:meta=mtl:meta-obj} as a \textit{bi-level} program where for each fixed $\theta^\body$, the algorithm will first compute the optimal heads $\{w_i\}_{i=1}^N$ as a function of $\theta^\body$, whereas in MTL,  \eqref{eq:meta=mtl:mtl-obj} is solved by the simple \textit{joint optimization} over both $\thetamtl^\body$ and $\{\wmtl{i}\}_{i=1}^N$.

From the discussions above, we conclude that the optimization formulation of GBML is equivalent to that of MTL, where the only difference lies in the optimization algorithms used to solve them. Motivated by this observation, in the next section, we explore the equivalence of these two algorithms in terms of the predictors obtained after convergence, when the networks are sufficiently wide. 

\subsection{Closeness Between MTL and GBML from a Functional Perspective}\label{sec:mtl=meta:functional}
In this section, we theoretically analyze MTL and a representative GBML algorithm, ANIL \cite{raghu2019rapid}, from a \textit{functional} perspective, and show that the learned predictors of MTL and ANIL after convergence are close under a certain norm. Due to the page limit, we defer detailed proofs to appendix, and mainly focus on discussing the implications of our theoretical results. Before we proceed, we first formally introduce the problem setup and training protocol used in the following analysis.

\textbf{Problem Setup}~~To simplify our analysis and presentation, we consider the squared loss, i.e., $\ell(\hat y,y) = \frac{1}{2}\|\hat y - y\|_2^2$. Note that the use of squared loss is standard for theoretical analyses of neural net optimization~\cite{ntk,du2019icml,AllenZhu2018ACT}. Furthermore, recently, \citet{hui2020evaluation} has also empirically demonstrated the effectiveness of squared loss in classification tasks from various domains. For the activation function and initialization scheme of neural nets, we focus on networks with ReLU activation and He's initialization\footnote{This is the common and default initialization scheme in Keras and PyTorch.}~\cite{resnet}, which is also standard in practice.

With the squared loss, the objectives of MTL and ANIL, i.e., \eqref{eq:prelim:mtl-loss} and \eqref{eq:prelim:anil-no-split:outer-loop}, can be simplifed to 
\begin{align*}
    \loss_\mtl(\thetamtl_t) & = \frac{1}{2} \sum_{i\in[N]}\left\|\vecop\left(\phitop_{\thetamtl^\body}(X_i)~\wmtl{i} - Y_i\right)\right\|_2^2, \\
    \loss_\anil(\theta_t) &= \frac{1}{2}\sum_{i \in [N]} \Big\|\vecop\Big(\phitop_{\theta^\body}(X_i) ~ w_i^* - Y_i\Big)\Big\|_2^2,
\end{align*}
where $\vecop(\cdot)$ is the vectorizaton operation and $w_i^*=\innerloop(w, \phitop_{\theta^\body}(X_i), Y_i, \tau,\lambda)$. During the test phase, for networks trained by MTL and ANIL, the predictions on any test task are obtained by fine-tuning an output head and predicting with this fine-tuned head (cf.\ Sec.~\ref{sec:prelim:fine-tune}). 

\textbf{Training Dynamics}~~We consider gradient flow (i.e., continuous-time gradient descent) for the training of both MTL and ANIL, which is a common setting used for theoretical analysis of neural nets with more than two layers~\citep{ntk,lee2019wide,CNTK}. In more detail, let $\eta$ be the learning rate, then the training dynamics of MTL and ANIL can be described by
\begin{align}\label{eq:grad-flow:mtl-meta}
    \frac{d \thetamtl_t}{dt} = -\eta \nabla_{\thetamtl_t} \loss_\mtl(\thetamtl_t), ~~\frac{d \theta_t}{dt} = -\eta \nabla_{\theta_t} \loss_\anil(\thetamtl_t)
\end{align}
where $\theta_t$ and $\thetamtl_t$ are network parameters at training step $t$.

\textbf{NTK and NNGP Kernels}~~Our forthcoming theoretical analysis involves both the Neural Tangent Kernel (NTK)~\cite{ntk,du2019icml,AllenZhu2018ACT} and the Neural Network Gaussian Process (NNGP) kernel~\citep{lee2018deep,novak2019bayesian}, which are tools used to understand the training trajectories of neural nets by reduction to classic kernel machines. For completeness, here we provide a brief introduction to both, so as to pave the way for our following presentation. Let the kernel functions of NTK and NNGP be $\NTK(\cdot,\cdot)$ and $\NNGP(\cdot,\cdot)$, respectively. Analytically, the NTK and NNGP for networks of $L$ layers can be computed recursively layer by layer~\citep{lee2019wide,CNTK}. Numerically, both kernels can be computed by using the Neural Tangents package \cite{neuraltangents2020}. Furthermore, without loss of generality, we assume the inputs are normalized to have unit variance, following~\citep{xiao2020dis}, and we adopt the NTK parameterization~\citep{lee2019wide}, which is the same with the standard neural net parameterization in terms of network output and training dynamics. More details about the parametrization can be found in Appendix~\ref{supp:background}.

With the above discussions clearly exposed, now we are ready to state the following main lemma, which serves as the basic for our main result in this section. In particular, by leveraging tools from \citet{lee2019wide} and \citet{meta-ntk}, we are able to prove that for sufficiently wide neural nets trained under gradient flow of ANIL or MTL (i.e., by \eqref{eq:grad-flow:mtl-meta}), their predictions on any test task are equivalent to a special class of kernel regression, with kernels that we name as \textit{(i)} ANIL Kernel $\metaNTK_\anil$ and \textit{(ii)} MTL Kernel $\metaNTK_\mtl$. Notice that both $\metaNTK_\anil$ and $\metaNTK_\mtl$ are composite kernels built on the NTK $\NTK$ and NNGP kernels $\NNGP$. 

\begin{lemma}[Test Predictions of MTL \& ANIL] \label{lemma:test-predict}
Consider an arbitrary test task $\task = (\xyxy)$, as defined in Sec. \ref{eq:prelim:def-test-task}. For arbitrarily small $\delta>0$, there exists $\eta^*,h^*\in \bR_+$ such that for networks with width greater than $h^*$ and trained under gradient flow with learning rate $\eta < \eta^*$, with probability at least $1-\delta$ over random initialization, the test predictions on $\task$ (i.e., Eq.~\eqref{eq:fine-tune:mtl:1} and~\eqref{eq:fine-tune:anil:1}) are 
\begin{align}
    &F_\mtl(\xxy) = G_{\hat\tau}(\xxy) \label{eq:lemma:test-pred:F_mtl}\\
     &\quad+\metaNTK_\mtl'((\xx,\hat\tau),\X) \metaNTK_\mtl^{-1}(\X,\X)\cdot \Y,\nonumber\\
     &F_\anil(\xxy) = G_{\hat\tau}(\xxy) \label{eq:lemma:test-pred:F_anil}\\
     &\quad+\metaNTK_\anil'((\xx,\hat\tau),\X) \metaNTK_\anil^{-1}(\X,\X)\big[\Y-G_\tau(\X,\X,\Y)\big],\nonumber
\end{align}
up to an error of $\cO(\frac{1}{\sqrt{h^*}})$ measured in $\ell_2$ norm. In above equations, we used shorthand $\X = (X_i)_{i=1}^N\in \bR^{Nn\times d}$ and $\Y=\vecop((Y_i)_{i=1}^N) \in \bR^{Nnk}$. Besides, the function $G$, kernels $\metaNTK_\mtl$ \& $\metaNTK_\anil$, and their variants $\metaNTK_\mtl'$ \& $\metaNTK_\anil'$, are defined below. 
    
\begin{itemize}[leftmargin=*,align=left]
    \item \textbf{Function $G$.} The function $G$ is defined as
    \begin{align}
     &\quad G_{\hat\tau}(\xxy) \nonumber\\
     &=\NNGP(X,X')  \NNGP(X',X')^{-1} (I-e^{-\lambda \NNGP(X',X') \hat\tau}) Y'\nonumber
    \end{align}
    and $G_{\tau}(\XXY) = \vecop((G_{\tau}(X_i,X_i,Y_i))_{i=1}^N)$.
    \item \textbf{MTL Kernels.} The kernel $\metaNTK_\mtl(\X,\X)$ is a block matrix of $N\times N$ blocks. Its $(i,j)$-th block for any $i,j\in[N]$ is
        \begin{align}
            [\metaNTK_\mtl(\X,\X)]_{ij} = \NTK (X_i,X_j) - \indicator [i\neq j] \NNGP(X_i,X_j)~. \nonumber
        \end{align}
        Besides, $\metaNTK_\mtl'$ is variant of the kernel function $\metaNTK_\mtl$, and $\metaNTK_\mtl'((X,X',\hat\tau),\X)$ is also a block matrix, of $1 \times N$ blocks, with the $(1,j)$-th block as
        \begin{align*}
            & [\metaNTK_\mtl'((X,X',\hat\tau),\X)]_{1j} =
            \NTK(X,X_j) - \NNGP(X,X_j)\nonumber \\ 
            &~~ - \NNGP(X,X')\T_{\NNGP}^{\hat\tau}(X') \Big[\NTK(X',X_j) - \NNGP(X',X_j)\Big] 
        \end{align*}
        where the function $T$ is defined as
        \begin{equation}\label{eq:T}
            \T_{\NNGP}^{\hat\tau}(X') = \NNGP(X',X')^{-1}\left(I - e^{-\lambda \NNGP(X')\hat\tau}\right)
        \end{equation}
    \item \textbf{ANIL kernels.} $\metaNTK_\anil(\X,\X)$ is also a block matrix of $N\times N$ blocks. Its $(i,j)$-th block for any $i,j\in[N]$ is
    \begin{align*}
        &\quad [\metaNTK_\mathrm{ANIL}(\X,\X)]_{ij} \nonumber\\
        &= e^{-\lambda \NNGP(X_i,X_i)\tau} \NTK(X_i,X_j) e^{-\lambda \NNGP(X_j,X_j)\tau},
    \end{align*}
    while $\metaNTK_\anil'((\xx,\hat\tau),\X)$ is a block matrix of $1 \times N$ blocks, with the $(1,j)$-th block as 
    \begin{align*}
        & [\metaNTK_\anil'((\xx,\hat\tau),\X)]_{1j} =\NTK(X,X_j)e^{-\lambda \NNGP(X_j,X_j) \tau}  \nonumber\\
        & - \NNGP(X,X') \T_{\NNGP}^{\hat\tau}(X') \NTK(X',X_j)e^{-\lambda \NNGP(X_j,X_j) \tau} 
    \end{align*}
    \end{itemize}
\end{lemma}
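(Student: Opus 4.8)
The plan is to reduce both training procedures to their linearized (NTK) dynamics, solve the resulting linear recursions in closed form, and then simply read off the composite kernels. The starting point is that for networks of width exceeding $h^*$ under the NTK parameterization, the results of \citet{lee2019wide} guarantee that throughout gradient flow the network remains in the lazy regime: the empirical tangent kernel stays within $\cO(1/\sqrt{h^*})$ of its deterministic infinite-width limit $\NTK$, while the contribution of the last-layer (head) gradients to that kernel concentrates to the NNGP kernel $\NNGP$. This concentration is precisely the source of the claimed $\cO(1/\sqrt{h^*})$ error, and it licenses replacing every nonlinear dynamic below by its linearized counterpart at that uniform cost. I would therefore work entirely with the linearized model and track how the body parameters $\theta^\body$ and the heads evolve separately.

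First I would resolve the inner loop. Since it optimizes only the last layer on the squared loss with the body held fixed, it is ordinary linear regression over the deterministic feature Gram matrices, which in the wide limit equal $\NNGP(X_i,X_i)$. Running the inner loop with rate $\lambda$ for time $\tau$ then has a closed-form solution generated by the matrix exponential $e^{-\lambda\NNGP(X_i,X_i)\tau}$; substituting this minimizer back produces the partial fit $G_\tau(\X,\X,\Y)$ on the training labels together with the exponential pre-conditioners that flank the NTK in $[\metaNTK_\anil]_{ij}=e^{-\lambda\NNGP(X_i,X_i)\tau}\NTK(X_i,X_j)e^{-\lambda\NNGP(X_j,X_j)\tau}$. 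For MTL there is no inner loop: body and the $N$ heads evolve under one joint tangent kernel, and the decisive structural point is that heads are not shared across tasks, so the head-gradient ($\NNGP$) contribution survives only on the diagonal blocks. This is exactly why $[\metaNTK_\mtl]_{ij}=\NTK(X_i,X_j)-\indicator[i\neq j]\NNGP(X_i,X_j)$, and why the body must fit the full residual $\Y$ in MTL but only the residual $\Y-G_\tau(\X,\X,\Y)$ in ANIL, the latter because the inner loop has already pre-explained $G_\tau$.

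Next I would solve the outer-loop (body) gradient flow. In the linearized regime the training residual evolves by a linear ODE whose generator is the relevant composite Gram matrix $\metaNTK_\mtl(\X,\X)$ or $\metaNTK_\anil(\X,\X)$; assuming these are positive definite on the training inputs, the flow converges as $t\to\infty$ to the minimum-norm interpolant encoded by the inverse $\metaNTK^{-1}(\X,\X)$. The converged body is then frozen, and the test prediction decomposes into two pieces. The first is the prediction one obtains by fine-tuning the fresh head $\wtest$ on the support set at the initialization-level features; being again linear regression on $\NNGP$ features for $\hat\tau$ steps, it yields the common term $G_{\hat\tau}(\xxy)$ and the operator $\T_{\NNGP}^{\hat\tau}(X')$, and is identical for MTL and ANIL because both share the same test protocol. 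The second is the contribution of the body's learned deviation from initialization, propagated to the query point through the fine-tuned head; this is carried by the cross kernels $\metaNTK_\mtl'$ and $\metaNTK_\anil'$, whose $\NNGP(X,X')\T_{\NNGP}^{\hat\tau}(X')$ factors record how the support-fitted head re-weights the body features. Assembling the two pieces reproduces \eqref{eq:lemma:test-pred:F_mtl} and \eqref{eq:lemma:test-pred:F_anil}.

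The main obstacle is the bi-level coupling in ANIL: each outer-loop body gradient must be differentiated through the inner-loop head solution, so the effective meta-gradient is not the bare NTK but the NTK conjugated by the inner-loop Jacobian. Controlling this composition uniformly across training, and showing that the inner-loop Jacobian concentrates to the deterministic $e^{-\lambda\NNGP\tau}$ factors with the same $\cO(1/\sqrt{h^*})$ accuracy, is the delicate part; for this I would invoke the meta-NTK machinery of \citet{meta-ntk} rather than re-derive the concentration from scratch. A secondary technicality is justifying the interchange of the infinite-width limit with the $t\to\infty$ convergence of the flow, which needs the positive-definiteness of the composite kernels (and invertibility of the NNGP Gram matrices entering $G$ and $\T_{\NNGP}^{\hat\tau}$) to hold so that the exponential convergence rate does not degrade as the width grows and the small learning rate $\eta<\eta^*$ keeps the discretization error subdominant.
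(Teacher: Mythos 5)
Your proposal follows essentially the same route as the paper's proof: linearization in the lazy regime via \citet{lee2019wide} together with the meta-NTK machinery of \citet{meta-ntk}, closed-form inner-loop solutions producing the $e^{-\lambda \NNGP \tau}$ factors flanking the NTK, the multi-head structure killing off-diagonal head gradients to give the $\indicator[i\neq j]$ term in the MTL kernel, solving the outer-loop linearized flow at $t\to\infty$, and splitting the test prediction into the shared $G_{\hat\tau}$ fine-tuning term plus the cross-kernel term. The only step you gloss over is the paper's explicit (WLOG, following \citet{CNTK}) assumption that randomly initialized outputs satisfy $\|f_{\theta_0}(x)\|_2 \leq \cO(\width^{-1/2})$, which is what removes the otherwise $\cO(1)$ random $f_{\theta_0}$ terms from the linearized predictions and makes the stated formulas hold at the claimed $\cO(1/\sqrt{h^*})$ accuracy.
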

\textbf{Remark}~~The function $G$ is implicitly related to task adaptation. For instance, on the test task $\task=(\xyxy)$, $G_{\hat\tau}(\xxy)$ is equivalent to the output of a trained wide network on $X$, where the network is trained on data $(X',Y')$ with learning rate $\lambda$ for $\hat\tau$ steps from the initialization. 


\begin{proof}[Proof Sketch]
Lemma~\ref{lemma:test-predict} is a key lemma used in our analysis, hence we provide a high-level sketch of its proof. The main idea is that, for over-parametrized neural nets, we could approximate the network output function by its first-order Taylor expansion with the corresponding NTKs and NNGPs~\citep{lee2019wide}, provided the network parameters do not have a large displacement during training. Under this case, we can further prove the global convergence of both MTL and ANIL by leveraging tools from \citet{meta-ntk}. The last step is then to analytically compute the corresponding kernels, as shown in Lemma~\ref{lemma:test-predict}.
\end{proof}

With Lemma~\ref{lemma:test-predict}, we proceed to derive the main result in this section. Namely, the predictions given by MTL and ANIL over any test task are close. Intuitively, from \eqref{eq:lemma:test-pred:F_mtl} and \eqref{eq:lemma:test-pred:F_anil}, we can see the test predictions of MTL and ANIL admit a similar form, even though they use different kernels. Inspired by this observation, a natural idea is to bound the difference between the MTL and ANIL kernels by analyzing their spectra, which leads to the following theorem:

\begin{theorem}\label{thm:closeness}
Consider an arbitrary test task, $\task=(\xyxy) \in \bR^{n\times d}\times  \bR^{n\times k}\times \bR^{n'\times d}\times \bR^{n'\times k}$. For any $\epsilon>0$, there exists a constant $h^* = \cO(\epsilon^{-2})$ s.t.\ if the network width $h$ is greater than $h^*$, for ReLU networks with He's initialization, the average difference between the predictions of ANIL and MTL on the query samples $X$ is bounded by 
\begin{align}\label{eq:pred-diff}
    &\quad  \|F_\anil(\xxy) - F_\mtl(\xxy)\|_2 \nonumber \\
    & \leq \cO\left(\lambda \tau  + \frac{1}{L}\right) + \epsilon.
\end{align}
\end{theorem}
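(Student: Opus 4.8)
The plan is to work directly from the closed forms in Lemma~\ref{lemma:test-predict} and bound the difference term by term. Subtracting \eqref{eq:lemma:test-pred:F_anil} from \eqref{eq:lemma:test-pred:F_mtl}, the task-adaptation term $G_{\hat\tau}(\xxy)$ is identical on both sides and cancels exactly, leaving
$$F_\anil(\xxy) - F_\mtl(\xxy) = \metaNTK_\anil'\,\metaNTK_\anil^{-1}\big[\Y - G_\tau(\X,\X,\Y)\big] - \metaNTK_\mtl'\,\metaNTK_\mtl^{-1}\,\Y$$
up to the $\cO(1/\sqrt{h^*})$ finite-width error of Lemma~\ref{lemma:test-predict}; choosing $h^* = \cO(\epsilon^{-2})$ makes this error at most $\epsilon$, which supplies the additive $\epsilon$ in the bound. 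It then remains to show the right-hand side is $\cO(\lambda\tau + 1/L)$ in $\ell_2$ norm, treating $N$, $n$, $k$ and the labels $\Y$ as $\cO(1)$.

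First I would isolate the $\cO(\lambda\tau)$ contributions. Since $\NNGP$ has bounded operator norm, $\|e^{-\lambda\NNGP\tau} - I\| = \cO(\lambda\tau)$ and, from its definition, $\|G_\tau(\X,\X,\Y)\| = \cO(\lambda\tau)$. Replacing every $e^{-\lambda\NNGP\tau}$ factor inside $\metaNTK_\anil$ and $\metaNTK_\anil'$ by the identity and dropping $G_\tau$ costs only $\cO(\lambda\tau)$ in the predictor, after multiplying by the $\cO(1)$ operator $\metaNTK_\anil'\metaNTK_\anil^{-1}$. This reduces the ANIL kernels to their $\lambda\tau\to 0$ limits, whose blocks are $[\metaNTK_\anil]_{ij}\to\NTK(X_i,X_j)$ and $[\metaNTK_\anil']_{1j}\to\NTK(X,X_j) - \NNGP(X,X')\T_{\NNGP}^{\hat\tau}(X')\NTK(X',X_j)$. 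The crucial structural observation is that, comparing these limits against the MTL kernels, every $\NTK$ term cancels: writing $A,A'$ for the limiting ANIL kernels and $B,B'$ for the MTL kernels, the residuals $[A-B]_{ij}=\indicator[i\neq j]\NNGP(X_i,X_j)$ and $[A'-B']_{1j}=\NNGP(X,X_j) - \NNGP(X,X')\T_{\NNGP}^{\hat\tau}(X')\NNGP(X',X_j)$ consist of $\NNGP$ factors only.

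Next I would push the residuals through the identity
$$A'A^{-1} - B'B^{-1} = (A'-B')A^{-1} + B'B^{-1}(B-A)A^{-1}.$$
Here $\|A'-B'\|$ and $\|B-A\|$ are $\cO(1)$ (pure $\NNGP$), while the kernels themselves are dominated by $\NTK$ at scale $L$, so $\|B'B^{-1}\| = \cO(1)$; the whole expression is thus governed by $\|A^{-1}\|$. The heart of the argument, and the step I expect to be the main obstacle, is the spectral estimate $\|\metaNTK_\anil^{-1}\| = \|\NTK(\X,\X)^{-1}\| = \cO(1/L)$, i.e.\ $\lambda_{\min}(\NTK(\X,\X)) = \Omega(L)$, for ReLU networks with He's initialization. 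This is exactly where depth enters the bound: through the arc-cosine recursion, the NTK accumulates $L$ PSD layer contributions each with unit diagonal, so $\NTK$ has diagonal $L$, and the deep-layer contributions behave like the identity (Hadamard products of the derivative kernels drive off-diagonals to zero), forcing $\lambda_{\min}(\NTK)=\Omega(L)$. The same reasoning yields $\lambda_{\min}(\metaNTK_\mtl)=\Omega(L)$, since $\metaNTK_\mtl$ differs from $\NTK$ only by the $\cO(1)$ off-diagonal $\NNGP$ blocks. Making the constants in these spectral statements uniform over the fixed, generic data is the delicate part, for which I would invoke the depth-dependent kernel-spectrum analysis of \cite{xiao2020dis} together with the recursions of \cite{lee2019wide,CNTK}.

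Combining the pieces gives $\|A'A^{-1} - B'B^{-1}\| = \cO(1/L)$, and multiplying by $\|\Y\|=\cO(1)$ produces the $\cO(1/L)$ contribution. Adding the $\cO(\lambda\tau)$ term from the kernel expansion and the $\epsilon$ from the finite-width approximation yields the claimed bound $\cO(\lambda\tau + 1/L) + \epsilon$.
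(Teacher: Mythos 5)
Your proposal is correct and shares the skeleton of the paper's argument: start from Lemma~\ref{lemma:test-predict}, cancel $G_{\hat\tau}$, peel off the $\cO(\lambda\tau)$ contributions coming from $G_\tau$ and the factors $e^{-\lambda\NNGP\tau}$, reduce the problem to comparing the $\NTK$-based and $\metaNTK_\mtl$-based regression operators, and close with the deep-ReLU spectral facts of \citet{xiao2020dis} ($\lambda_{\min}(\NTK(\X,\X))=\Omega(L)$, $\|\NNGP\|_{\op}=\cO(1)$, test kernels of size $\cO(L)$), plus $h^*=\cO(\epsilon^{-2})$ for the finite-width error. Where you genuinely diverge is the middle algebra. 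The paper exploits the exact factorizations $\metaNTK_\anil(\X,\X)=D\,\NTK(\X,\X)\,D$ and $\metaNTK_\anil'=\metaNTK_\mtl'\,D$ with $D=\mathrm{diag}\{e^{-\lambda\NNGP(X_i,X_i)\tau}\}$, which collapses the whole difference into the single term $\metaNTK_\mtl'\big[\NTK(\X,\X)^{-1}-\metaNTK_\mtl^{-1}(\X,\X)\big]\Y$; it then proves the dedicated Lemma~\ref{lemma:kernel-inv-diff}, $\|\NTK(\X,\X)^{-1}-\metaNTK_\mtl^{-1}(\X,\X)\|_{\op}\leq\cO(1/L^2)$, via a Woodbury-identity analysis that uses the near-rank-one structure $\NNGP\approx\mathbf{1}\mathbf{1}^\top+\cO(1/L^2)$, and the final $\cO(1/L)$ comes from multiplying by $\|\metaNTK_\mtl'\|_{\op}=\cO(L)$. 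You instead bound the train-kernel residual $B-A$ and the test-kernel residual $A'-B'$ separately through the identity $A'A^{-1}-B'B^{-1}=(A'-B')A^{-1}+B'B^{-1}(B-A)A^{-1}$, observing that both residuals are pure-$\NNGP$ hence $\cO(1)$, so each term is $\cO(1)\cdot\cO(1/L)$. Your route is more elementary and arguably more robust: it needs only the spectral gap $\lambda_{\min}=\Omega(L)$ and boundedness of $\NNGP$, not the finer difference-of-inverses lemma nor the exact relation $\metaNTK_\anil'=\metaNTK_\mtl'\,D$ (whose exactness is delicate given the two forms of $\metaNTK_\mtl'$ in Lemma~\ref{lemma:anil-mtl-kernels}); what the paper's route buys in exchange is an isolated, reusable spectral lemma and a sharper $\cO(1/L^2)$ intermediate estimate with explicit $N$-dependence. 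The one place to tighten your write-up is the de-exponentiation step: rather than asserting that multiplying by the $\cO(1)$ operator $\metaNTK_\anil'\metaNTK_\anil^{-1}$ controls the error, note that $\metaNTK_\anil'\metaNTK_\anil^{-1}=A'A^{-1}D^{-1}$ exactly, so the replacement error is $\|A'A^{-1}(D^{-1}-I)\|_{\op}\leq\cO(1)\cdot\cO(\lambda\tau)$, which is precisely the estimate the paper performs.
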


\textbf{Remark}~~The bound \eqref{eq:pred-diff} is dominated by $\cO(\lambda \tau + \frac{1}{L})$. Notice that $\lambda$ and $\tau$ are the inner-loop learning rate and adaptation steps of ANIL. In practical implementations, $\lambda \tau \in [0.01,0.5]$, which is small. In the state-of-the-art meta-learning models, the network depth $L \geq 12$, hence $1/L$ is also small. Since the bound holds for \emph{any} test data, it implies that the average discrepancy between the learned predictors of MTL and ANIL is small. Notice that we only study the effect of hyperparameters of models and algorithms (e.g., $L,\lambda, \tau$), and consider dataset-specific parameters (e.g., $N,n,k,d$) as constants.

\begin{proof}[Proof Sketch]
The first step is to apply the analytic forms of $F_\anil$ and $F_\mtl$ in Lemma \ref{lemma:test-predict} to compute their difference. We then prove that the norm of the difference is bounded as
\begin{align*}
& \qquad \|F_\anil(\xxy) - F_\mtl(\xxy) \|_{2} \\
&\leq\cO\left(L \|\NTK(\X,\X)^{-1} - \metaNTK_\mtl^{-1}(\X,\X)\|_{op}\right) + \cO(\lambda \tau+\frac{1}{\sqrt h})
\end{align*}
Then, by leveraging theoretical tools from \citet{xiao2020dis}, we obtain an in-depth structure of the spectrum of the MTL kernel $\metaNTK_\mtl$ for deep ReLU nets, in order to prove that $$ \|\NTK(\X,\X)^{-1} - \metaNTK_\mtl^{-1}(\X,\X)\|_{\op} \leq \cO(\frac {1}{L^2}),$$ with a fine-grained analysis. Finally, defining $h^* = \cO(\eps^{-2})$, we obtain the bound \eqref{eq:pred-diff} for networks with $h > h^*$.
\end{proof}

Theorem~\ref{thm:closeness} could also be extended to ResNets, which have been widely adopted in modern meta-learning applications:
\begin{corollary}\label{corollary:resnets}
\vspace{-1.0em}
For (i) Residual ReLU networks \cite{resnet} and (ii) Residual ReLU networks with Layer Normalization \cite{layernorm}, Theorem \ref{thm:closeness} holds true.
\vspace{-0.5em}
\end{corollary}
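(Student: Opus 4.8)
The strategy is to isolate exactly where the proof of Theorem~\ref{thm:closeness} uses the fully-connected ReLU architecture, and then to verify that each such ingredient survives when the architecture is replaced by its residual (and layer-normalized residual) counterpart. Inspecting the proof sketch, the architecture enters in only two places: (i) through Lemma~\ref{lemma:test-predict}, which expresses $F_\mtl$ and $F_\anil$ as kernel regressions built from $\NTK$ and $\NNGP$; and (ii) through the spectral estimate $\|\NTK(\X,\X)^{-1} - \metaNTK_\mtl^{-1}(\X,\X)\|_{\op} \leq \cO(1/L^2)$, which is the only place the deep-ReLU analysis of \citet{xiao2020dis} is invoked. Everything in between—namely the reduction
\begin{align*}
&\|F_\anil(\xxy) - F_\mtl(\xxy)\|_2 \\
&\leq \cO\big(L\,\|\NTK(\X,\X)^{-1} - \metaNTK_\mtl^{-1}(\X,\X)\|_{\op}\big) + \cO(\lambda\tau + \tfrac{1}{\sqrt h})
\end{align*}
—is purely linear-algebraic in the kernel matrices (the shared term $G_{\hat\tau}$ cancels in the difference of \eqref{eq:lemma:test-pred:F_mtl} and \eqref{eq:lemma:test-pred:F_anil}), and is therefore insensitive to the choice of architecture.

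For ingredient (i), I would first confirm that Lemma~\ref{lemma:test-predict} extends verbatim to residual ReLU networks and to layer-normalized residual ReLU networks. Both families admit a well-defined infinite-width NTK/NNGP limit together with the corresponding linearization around initialization, so the global-convergence and linearization arguments borrowed from \citet{lee2019wide} and \citet{meta-ntk} go through unchanged; the only modification is that $\NTK$ and $\NNGP$ now denote the kernels induced by the residual architecture. Consequently the analytic forms \eqref{eq:lemma:test-pred:F_mtl} and \eqref{eq:lemma:test-pred:F_anil}, as well as the definitions of $\metaNTK_\mtl$ and $\metaNTK_\anil$, are retained, and the displayed reduction holds with the residual-net kernels substituted in.

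The substance of the corollary therefore reduces to re-establishing ingredient (ii). Writing $\NTK(\X,\X) - \metaNTK_\mtl(\X,\X)$ as the block matrix whose diagonal blocks vanish and whose $(i,j)$ off-diagonal block equals $\NNGP(X_i,X_j)$, and using the identity $\NTK^{-1} - \metaNTK_\mtl^{-1} = \NTK^{-1}(\metaNTK_\mtl - \NTK)\metaNTK_\mtl^{-1}$ (arguments $(\X,\X)$ suppressed), the $\cO(1/L^2)$ bound follows once one shows that the off-diagonal NNGP contribution $\|\metaNTK_\mtl(\X,\X) - \NTK(\X,\X)\|_{\op}$, measured relative to the spectra of $\NTK$ and $\metaNTK_\mtl$, shrinks at the rate dictating this bound. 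For plain ReLU nets this is precisely the content of the depth-dependent spectral analysis of \citet{xiao2020dis}; the task is to rerun that fine-grained analysis for the residual and layer-normalized recursions.

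This last point is the main obstacle. Residual connections and layer normalization alter the signal-propagation dynamics that drive the off-diagonal decay: the variance/correlation map iterated over depth—whose fixed point and local convergence rate govern how $\NNGP(X_i,X_j)$ behaves relative to the diagonal—acquires an identity (residual) term, and layer normalization rescales the pre-activation variances. I would therefore re-derive, for each of the two architectures, the depth dependence of this map and verify that its convergence to the fixed point remains polynomial and still yields the $\cO(1/L)$-scale decay of the relevant off-diagonal quantities needed for the $\cO(1/L^2)$ spectral bound. Since \citet{xiao2020dis} already characterize signal propagation for exactly these architectures, the remaining work is to transcribe their fixed-point and rate estimates into the same spectral argument used in the fully-connected case; the constants (and possibly the precise exponent of the correction) must be checked per architecture, but the polynomial-in-$1/L$ structure is expected to persist, which suffices to recover \eqref{eq:pred-diff}.
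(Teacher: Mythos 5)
Your decomposition of the problem is correct: the architecture enters only through the kernels in Lemma~\ref{lemma:test-predict} and through the spectral estimate $\|\NTK(\X,\X)^{-1} - \metaNTK_\mtl^{-1}(\X,\X)\|_{\op} \leq \cO(1/L^2)$, and the linear-algebraic reduction in between (including the cancellation of $G_{\hat\tau}$) is architecture-insensitive. The gap lies in how you treat the second ingredient for plain residual ReLU networks. You propose to rerun the signal-propagation analysis of \citet{xiao2020dis} and conjecture that ``the polynomial-in-$1/L$ structure is expected to persist.'' That conjecture is false for this architecture: the very analysis you cite (Appendix C.2 of \citet{xiao2020dis}) shows that the residual NTK and NNGP equal the fully-connected kernels multiplied by an overall factor $e^{L}$, i.e., they grow \emph{exponentially} in depth. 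Consequently the spectral bound does not survive in the form your program requires --- it becomes $\cO(e^{-L}/L^{2})$ --- while simultaneously $\|\metaNTK_\mtl'((\xx,\hat\tau),\X)\|_{\op}$ inflates to $\cO(e^{L}L)$, so a verbatim transcription of the fully-connected argument breaks down at its first step. What rescues the corollary, and what is the actual content of the paper's proof, is a cancellation you never identify: both $\metaNTK_\mtl'$ and $\metaNTK_\mtl$ (and likewise the ANIL kernels) carry the \emph{same} $e^{L}$ factor, which cancels inside the predictor,
\begin{align*}
e^{L}\metaNTK_\mtl'((X,X',\hat\tau),\X)\cdot\big(e^{L}\metaNTK_\mtl(\X,\X)\big)^{-1} = \metaNTK_\mtl'((X,X',\hat\tau),\X)\,\metaNTK_\mtl(\X,\X)^{-1},
\end{align*}
so $F_\mtl$ and $F_\anil$ are unchanged and Theorem~\ref{thm:closeness} carries over with \emph{no} new spectral analysis. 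Your plan could in principle be repaired by factoring out $e^{L}$ before running the spectral argument, but that repair is precisely the cancellation observation, and it is the idea missing from your proposal.

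For case (ii), residual ReLU networks with LayerNorm, your expectation happens to be correct and your plan essentially coincides with the paper's: \citet{xiao2020dis} (Appendix C.3) show that normalization restores the fully-connected kernel structure up to negligible corrections, so Theorem~\ref{thm:closeness} applies directly with no re-derivation. But since your treatment of case (i) rests on a structural assumption that fails, and the step that would replace it is left as an unverified program, the proposal as written does not establish the corollary.
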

\begin{proof}[Proof Sketch]
By leveraging tools from \citet{xiao2020dis}, we show that the residual connection only puts an extra factor $e^L$ on the MTL kernel $\metaNTK_\mtl$. However, plugging it in to the expression for $F_\mtl$ derived in Lemma \ref{lemma:test-predict}, one can find that the extra factors cancel out, since
\begin{align*}
&\qquad e^{L}\metaNTK_\mtl'((X,X',\hat\tau),\X) \cdot (e^{L} \metaNTK_\mtl(\X,\X))^{-1}\\ &=\metaNTK_\mtl'((X,X',\hat\tau),\X)  \metaNTK_\mtl(\X,\X)^{-1}.    
\end{align*}
Similar observation also holds for $\metaNTK_\anil$ and $F_\anil$. Thus, Theorem \ref{thm:closeness} applies to residual ReLU networks as well.

For residual ReLU nets with LayerNorm, $\metaNTK_\mtl$ and $\metaNTK_\anil$ have identical kernel spectra and structures as the regular ReLU nets, up to a difference of a negligible order. Hence, Theorem \ref{thm:closeness} also applies to this class of networks.
\end{proof}

See Appendix \ref{supp:proof} for the full proof of Lemma \ref{lemma:anil-mtl-kernels}, Theorem \ref{thm:closeness} and Corollary \ref{corollary:resnets}.

\section{Experiments}
\label{sec:exp}

In this section, we first provide an empirical validation of Theorem \ref{thm:closeness} on synthetic data. Then, we perform a large-scale empirical study of MTL with unseen task adaptation on few-shot image classification benchmarks to compare with state-of-the-art meta-learning algorithms. The code is released at \url{https://github.com/AI-secure/multi-task-learning}
\vspace{-0.5em}
\subsection{Closeness between MTL and GBML predictions}\label{sec:exp:theory-validate}
\begin{figure}[t!]
    \centering
    \hbox{\hspace{+1.25em}\includegraphics[width=0.85\linewidth]{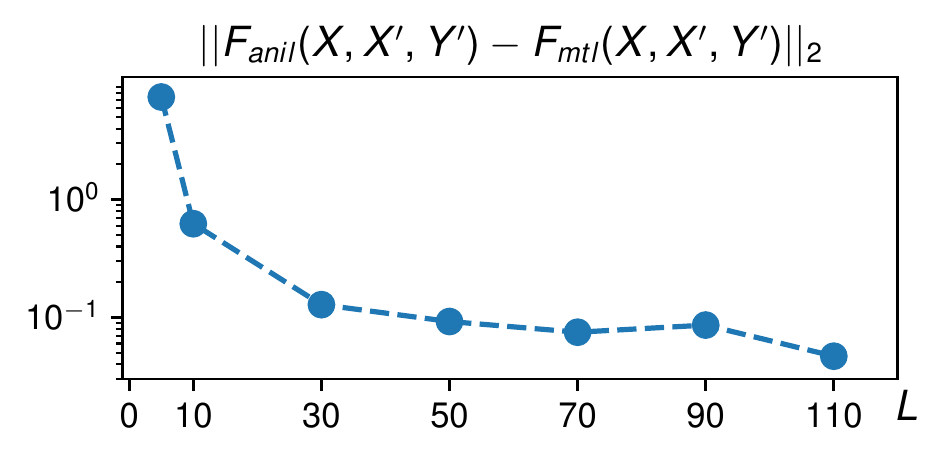}}
    \includegraphics[width=0.84\linewidth]{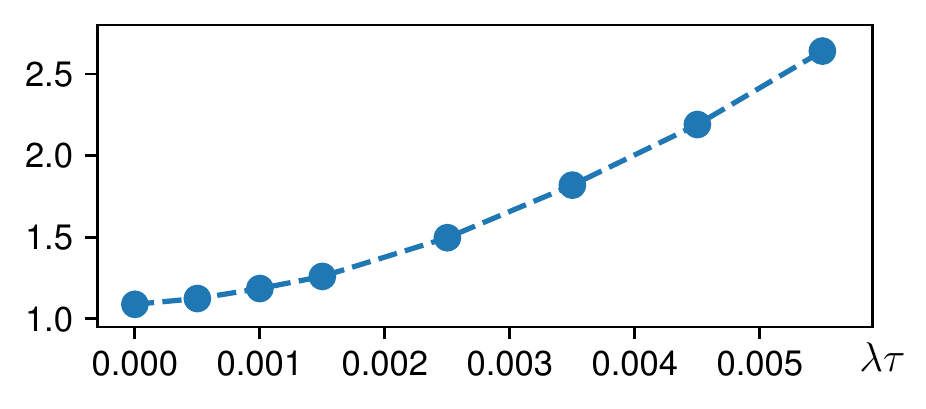}
    \vspace*{-0.2em}
    \caption{Empirical validation of Theorem \ref{thm:closeness} on synthetic data. We vary $L$ in the first figure with fixed $\lambda \tau = 0$, and vary $\lambda \tau$ in the two figures with fixed $L=10$, to observe the corresponding trends in the prediction difference $\|F_\anil(\xxy) - F_\mtl(\xxy)\|_2$.}\label{fig:thm-validate}
\vspace{-1em}
\end{figure}

\textbf{Problem Setting}~~We consider a few-shot regression problem to verify the theoretical claims in Theorem~\ref{thm:closeness}, and adopt the notation defined in Sec.~\ref{sec:prelim}. For each training task $i$ with data $(X_i,Y_i)$, it has two task-specific parameters $\mu_i\in \bR^d$ and $\nu_i\in \bR+$. The data points in $X_i$ are sampled i.i.d. from $\mathcal{N}(\mu_i,\nu_i^2 I)$, and the label of each point $x$ is generated by a quadratic function $y = \nu_i(x-\mu_i)^2$. Similarly, any test task $\task=(\xyxy)$ also has its task-specific parameters $(\mu_\test,\nu_\test)$, and the points from its query and support set $X,X'$ are drawn i.i.d. from $\mathcal{N}(\mu_\test,\nu_\test^2 I)$, with the label of each point $x$ following $y = \nu_\test(x-\mu_\test)^2$.

\textbf{Dataset Synthesis}~~We fix the input dimension $d=10$ and generate $N=20$ training tasks each with $n=10$ data points. In each test task, there are $5$ support and $10$ query data points. For each training or test task, its task-specific parameters $(\mu,\nu)$ are generated by $\mu\sim \mathcal{N}(0,I)$ and $\nu\sim \mathrm{Unif}(1.3,1.6)$. 

\textbf{Implementation Details}~~We implement the functions $F_\mtl$ and $F_\anil$ in \eqref{eq:lemma:test-pred:F_mtl} and \eqref{eq:lemma:test-pred:F_anil} by using the empirical kernel functions of NTK and NNGP provided by Neural Tangents \cite{neuraltangents2020}. As suggested by \citet{neuraltangents2020}, we construct neural nets with width as 512 to compute kernels. Following Sec.~\ref{sec:mtl=meta:functional}, the networks use the ReLU activation and He's initialization \cite{resnet}.

\textbf{Results}~~We generate $20$ test tasks over 5 runs for the empirical evaluation, and we vary the values of $\lambda \tau$ and $L$ appearing in the bound \eqref{eq:pred-diff} of Theorem \ref{thm:closeness}. Figure \ref{fig:thm-validate} shows that as $\lambda \tau$ decreases or $L$ increases, the norm of the prediction difference $\|F_\mtl(\xxy) - F_\anil(\xxy)\|_2$ decreases correspondingly, which is in agreement with \eqref{eq:pred-diff}. More experimental details can be found in Appendix \ref{supp:exp}.

Note that Theorem \ref{thm:closeness} is built on fully connected nets, thus it is not directly applicable to modern convolutional neural nets (ConvNets) with residual connections, max pooling, BatchNorm, and Dropout, which are commonly used in meta-learning practice. Hence, we perform another empirical study on modern ConvNets in Sec. \ref{sec:exp:few-shot}.
\vspace{-.7em}
\subsection{Few-Shot Learning Benchmarks}\label{sec:exp:few-shot}
We conduct experiments on a set of widely used benchmarks for few-shot image classification: mini-ImageNet, tiered-ImageNet, CIFAR-FS and FC100. The first two are derivatives of ImageNet \cite{imagenet}, while the last two are derivatives of CIFAR-100 \cite{cifar}.
\begin{table*}[ht!]

    \caption{
    \textbf{Comparison on four few-shot image classification benchmarks.} Average few-shot test classification accuracy (\%) with 95\% confidence intervals. 32-32-32-32 denotes a 4-layer convolutional neural net with 32 filters in each layer. In each column, \textbf{bold} values are the highest accuracy, or the accuracy no less than $1\%$ compared with the highest one. \\}
    \vspace{-10pt}
    \label{tab:benchmark}
    \vspace{-5pt}
    \begin{center}
    \resizebox{0.95\linewidth}{!}{
    
    \begin{small}
    \begin{tabular}{@{}llc@{}cc@{}c@{}cc@{}}
    \\
    \hline
    \toprule
    & & \phantom{a} & \multicolumn{2}{c}{\textbf{mini-ImageNet 5-way}} & \phantom{ab} & \multicolumn{2}{c}{\textbf{tiered-ImageNet 5-way}} \\
    \cmidrule{4-5} \cmidrule{7-8}
    \textbf{Model} & \textbf{Backbone} && \textbf{1-shot} & \textbf{5-shot} && \textbf{1-shot} & \textbf{5-shot}  \\
    \midrule
    MAML \cite{maml} & 32-32-32-32 &&  48.70 $\pm$ 1.84 & 63.11 $\pm$ 0.92 && 51.67 $\pm$ 1.81 & 70.30 $\pm$ 1.75 \\
    ANIL \cite{raghu2019rapid} & 32-32-32-32 && 48.0 $\pm$ 0.7 & 62.2 $\pm$ 0.5 && - & - \\
    
    
    
    
    
    
    
    
    
    
    R2D2~\cite{r2d2} & 96-192-384-512 && 51.2 $\pm$ 0.6 & 68.8 $\pm$ 0.1 && - & -\\


    
    TADAM \cite{NEURIPS2018_66808e32} & ResNet-12 && 58.50 $\pm$ 0.30 & 76.70 $\pm$ 0.30 && - & - \\
    
    
    
    
    MetaOptNet~\cite{metaOptNet} & ResNet-12 && \textbf{62.64 $\pm$ 0.61} & \textbf{78.63 $\pm$ 0.46 }&& 65.99 $\pm$ 0.72 & 81.56 $\pm$ 0.53 \\
    
    

    \midrule
    MTL-ours & ResNet-12 &&  59.84 $\pm$ 0.22  &  \textbf{77.72 $\pm$ 0.09}  &&  \textbf{67.11 $\pm$ 0.12}  &  \textbf{83.69 $\pm$ 0.02} \\
    \bottomrule
    \hline
    \end{tabular}
    \end{small}
    }
    
    \vspace{+3pt}
    
    \resizebox{0.85\linewidth}{!}{
    \begin{tabular}{@{}llc@{}cc@{}c@{}cc@{}}
    & & \phantom{a} & \multicolumn{2}{c}{\textbf{CIFAR-FS 5-way}} & \phantom{ab} & \multicolumn{2}{c}{\textbf{FC100 5-way}} \\
    \cmidrule{4-5} \cmidrule{7-8}
    \textbf{Model} & \textbf{Backbone} && \textbf{1-shot} & \textbf{5-shot} && \textbf{1-shot} & \textbf{5-shot}  \\
    
    \midrule
    
    MAML \cite{maml} & 32-32-32-32 &&  58.9 $\pm$ 1.9  & 71.5 $\pm$ 1.0  && - & - \\
    R2D2 \cite{r2d2} & 96-192-384-512 && 65.3 $\pm$ 0.2 & 79.4 $\pm$ 0.1 && - & -\\
    TADAM \cite{NEURIPS2018_66808e32} & ResNet-12 && - & - && 40.1 $\pm$ 0.4 & 56.1 $\pm$ 0.4\\
    
    
    
    ProtoNet \cite{snell2017prototypical}& ResNet-12 && \textbf{72.2 $\pm$ 0.7} & \textbf{83.5 $\pm$ 0.5} && 37.5 $\pm$ 0.6 & 52.5 $\pm$ 0.6 \\
    MetaOptNet~\cite{metaOptNet} & ResNet-12 && \textbf{72.6 $\pm$ 0.7} & \textbf{84.3 $\pm$ 0.5} && 41.1 $\pm$ 0.6 & 55.5 $\pm$ 0.6 \\

    \midrule
    MTL-ours & ResNet-12 && 69.5 $\pm$ 0.3 &  \textbf{84.1 $\pm$ 0.1}  && \textbf{42.4 $\pm$ 0.2} &  \textbf{57.7 $\pm$ 0.3}\\
        
    \bottomrule
    \hline
    \end{tabular}
    }
    \end{center}
\end{table*}
\textbf{Benchmarks.}
\vspace{-.7em}
\begin{itemize}[leftmargin=*,align=left]
    \item mini-ImageNet \cite{matching-net}: It contains 60,000 colored images of 84x84 pixels, with 100 classes (each with 600 images) split into 64 training classes, 16 validation classes and 20 test classes.
    \item tiered-ImageNet \cite{ren2018metalearning}: It contains 779,165 colored images of 84x84 pixels, with 608 classes split into 351 training, 97 validation and 160 test classes.
    \item CIFAR-FS \cite{r2d2}: It contains 60,000 colored images of 32x32 pixels, with 100 classes (each with 600 images) split into 64 training classes, 16 validation classes and 20 test classes.
    \item FC100 \cite{NEURIPS2018_66808e32}: It contains 60,000 colored images of 32x32 pixels, with 100 classes split into 60 training classes, 20 validation classes and 20 test classes.
\end{itemize}

\textbf{Network Architecture}~~Following previous meta-learning works \cite{metaOptNet,NEURIPS2018_66808e32,tian2020rethink}, we use ResNet-12 as the backbone, which is a residual neural network with 12 layers \cite{resnet}. 

\textbf{Data Augmentation}~~In training, we adopt the data augmentation used in \citet{metaOptNet} that consists of random cropping, color jittering, and random horizontal flip.

\textbf{Optimization Setup}~~We use RAdam \cite{liu2019radam}, a variant of Adam \cite{adam}, as the optimizer for MTL. We adopt a public PyTorch implementation\footnote{ \url{https://github.com/jettify/pytorch-optimizer}}, and use the default hyper-parameters. Besides, we adopt the ReduceOnPlateau learning rate scheduler\footnote{ \url{https://pytorch.org/docs/stable/optim.html\#torch.optim.lr_scheduler.ReduceLROnPlateau}} with the early stopping regularization\footnote{We stop the training if the validation accuracy does not increase for several epochs.}.

\textbf{Model Selection.} At the end of each training epoch, we evaluate the validation accuracy of the trained MTL model and save a model checkpoint. After training, we select the model checkpoint with the highest validation accuracy, and evaluate it on the test set to obtain the test accuracy.

\textbf{Feature Normalization}~~Following a previous work on few-shot image classification \cite{tian2020rethink}, we normalize features (i.e., last hidden layer outputs) in the meta-test and meta-validations stages. Besides, we also find the feature normalization is effective to the training of MTL on most benchmarks\footnote{It is effective on mini-ImageNet, tiered-ImageNet, and CIFAR-FS, while being ineffective on FC100.}, which might be due to the effectiveness of feature normalization for representation learning \citep{wang2020understand}.

\textbf{Fine-Tuning for Task Adaptation}~~In the meta-validation and meta-testing stages, following Sec. \ref{sec:prelim:fine-tune}, we fine-tune a linear classifier on the outputs of the last hidden layer with the cross-entropy loss. We use the logistic regression classifier with $\ell_2$ regularization from \texttt{scikit-learn} for the fine-tuning \citep{sklearn}. An ablation study on the $\ell_2$ regularization is provided in Appendix \ref{supp:exp:few-shot}.

\textbf{Implementation Details}~~Our implementation is built on the \texttt{learn2learn}\footnote{ \url{http://learn2learn.net/}} package \cite{learn2learn2019}, which provides data loaders and other utilities for meta-learning in PyTorch \cite{pytorch}. We implement MTL on a multi-head version of ResNet-12. Notice that the number of distinct training tasks is combinatorial for 5-way classification on the considered benchmarks, e.g., $\binom{64}{5}=7.6\times 10^{6}$ for mini-ImageNet and $\binom{351}{5}=4.3\times 10^{9}$ for tiered-ImageNet. Hence, due to memory constraints, we cannot construct separate heads for all tasks. Thus, we devise a memory-efficient implementation of the multi-head structure. For instance, on tiered-ImageNet with  351 training classes, we construct a 351-way linear classifier on top of the last hidden layer. Then, for each training task of 5 classes, we select the 5 corresponding row vectors in the weight matrix of the 351-way linear classifier, and merge them to obtain a 5-way linear classifier for this training task.

\textbf{Empirical Results}~~During meta-testing, we evaluate MTL over 3 runs with different random seeds, and report the mean accuracy with the 95\% confidence interval in Table \ref{tab:benchmark}. The accuracy for each run is computed as the mean accuracy over 2000 tasks randomly sampled from the test set. The model selection is made on the validation set.

\textbf{Performance Comparison}~~In Table \ref{tab:benchmark}, we compare MTL with a set of popular meta-learning algorithms on the four benchmarks, in the common setting of 5-way few-shot classification. Notice that MetaOptNet is a state-of-the-art GBML algorithm, and MTL is competitive against it on these benchmarks: across the 8 columns/settings of Table \ref{tab:benchmark}, MTL is worse than MetaOptNet in 2 columns, comparable with MetaOptNet in 2 columns, and outperforms MetaOptNet in 4 columns. Therefore, we can conclude that MTL is competitive with the state-of-the-art of GBML algorithms on few-shot image classification benchmarks.

\textbf{Training Efficiency}~~GBML algorithms are known to be computationally expensive due to the costly second-order bi-level optimization they generally take \cite{hospedales2020metalearning}. In contrast, MTL uses first-order optimization, and as a result, the training of MTL is significantly more efficient. To illustrate this more concretely, we compare the training cost of MTL against MetaOptNet on a AWS server with 4x Nvidia V100 GPU cards\footnote{The \texttt{p3.8xlarge} instance in AWS EC2: {\url{https://aws.amazon.com/ec2/instance-types/p3/}}}. For MetaOptNet \cite{metaOptNet}, we directly run the official PyTorch code\footnote{{\url{https://github.com/kjunelee/MetaOptNet/}}} with the optimal hyper-parameters\footnote{The optimal hyperparameters of MetaOptNet for mini-ImageNet and tiered-Imagenet involve a large batch size that requires 4 GPUs.} provided by the authors. Since the implementations of MetaOptNet and MTL are both written in PyTorch with the same network structure and similar data loaders (both adopting TorchVision dataset wrappers), we believe the efficiency comparison is fair. Note that the two ImageNet derivatives (i.e., mini-ImageNet of 7.2 GB and tiered-ImageNet of 29 GB) are much bigger than that of the two CIFAR-100 derivatives (i.e., CIFAR-FS of 336 MB and FC100 of 336 MB). It is more practically meaningful to reduce the training cost on big datasets like the ImageNet derivatives, thus we only perform the efficiency comparison on mini-ImageNet and tiered-ImageNet.
 \begin{table}[t!]
     \caption{Efficiency Comparison on mini-ImageNet for 5-way 5-shot classification.}
    \label{tab:gpu-hour:all}
    \vspace{+10pt}
    \centering
    \begin{tabular}{c c c }
        \toprule
         & Test Accuracy & GPU Hours \\
        \midrule
        MetaOptNet & 78.63\%  & 85.6 hrs\\
        \midrule
        MTL & 77.72\% & 3.7 hrs\\
        \bottomrule
    \end{tabular}
\end{table}

In Table \ref{tab:gpu-hour:all}, we present the GPU hours for the training of MetaOptNet and MTL with optimal hyper-parameters on mini-ImageNet, showing that the training of MTL is 23x times faster compared with MetaOptNet. 

Figure \ref{fig:speedup-tiered-imagenet} shows the \textit{efficiency-accuracy tradeoff} of MTL vs. MetaOptNet on tiered-Imagenet. The training of MetaOptNet takes $63$ GPU hours, while MTL has various training costs depending on the batch size and the number of epochs. From Figure \ref{fig:speedup-tiered-imagenet} we can see that, even though MTL is only $3.6$x faster when achieving the optimal test accuracy, we can train MTL with a smaller number of epochs or batch size, which reduces the training time at the cost of a small performance drop ($\le 2.2\%$). As shown in Figure \ref{fig:speedup-tiered-imagenet}, while the training of MTL is $11$x faster compared with MetaOptNet, its test accuracy ($81.55\%$) can still match MetaOptNet ($81.56\%$).
\begin{figure}[t!]
    \centering
    \includegraphics[width=0.85\linewidth]{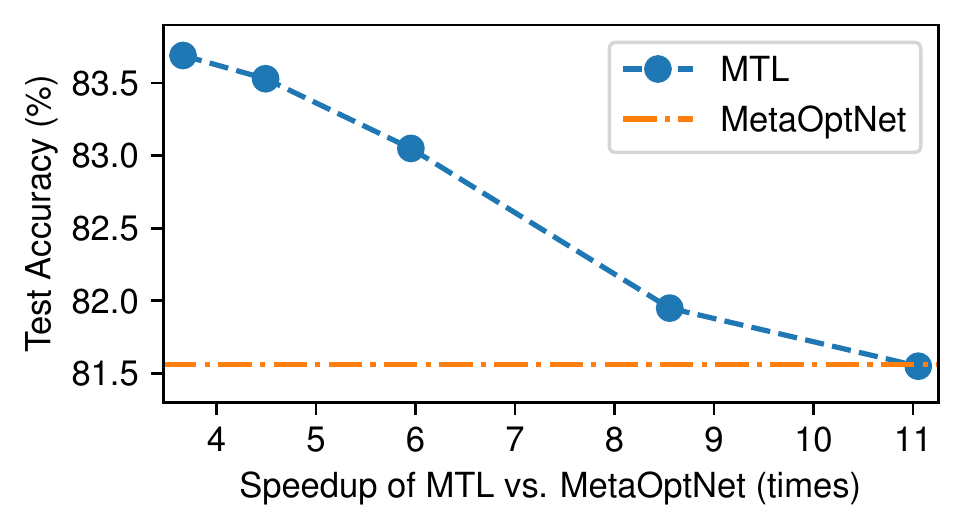}
    \vspace{-0.7em}
    \caption{Efficiency comparison on tiered-ImageNet for 5-way 5-shot classification. The x-axis is the speedup of MTL compared with MetaOptNet, and the y-axis is the mean test accuracy. Notice that we only tune the batch size and number of training epochs for MTL in this comparison.}
    \label{fig:speedup-tiered-imagenet}
    \vspace{-1em}
\end{figure}

\textbf{Remarks on the Empirical Results}~~Traditionally, the training tasks and test tasks for MTL are the same. Our empirical results on few-shot learning reveal that, even as the test tasks are distinct from the training tasks, MTL can still be quite powerful. Recent theoretical studies on MTL show that the joint training of MTL over diverse tasks can learn representations useful for unseen tasks \citep{tripuraneni2020theory,du2021fewshot}, and our few-shot learning experiment supports these theories with positive empirical results. On the other hand, the MTL model we implemented is quite simple, which can be viewed as the original MTL proposal \citep{caruana1997multitask} with a new memory-efficient trick. It is likely that more advanced variants of MTL could achieve even better performance on few-shot learning.
\vspace{-0.5em}

\section{Conclusion}
In this paper, we take an important step towards bridging the gap between MTL and meta-learning, both theoretically and empirically. Theoretically, we show that MTL and gradient-based meta-learning (GBML) share the same optimization formulation. We then further prove that, with sufficiently wide neural networks, the learned predictors from both algorithms give similar predictions on unseen tasks, which implies that it is possible to achieve fast adaptation and efficient training simultaneously. Inspired by our theoretical findings, empirically, we develop a variant of MTL that allows adaptation to unseen tasks, and show that it is competitive against the state-of-the-art GBML algorithms over a set of few-shot learning benchmarks while being significantly more efficient. We believe our work contributes to opening a new path towards models that simultaneously allow efficient training and fast adaptation. 

\vspace{-1em}
\section*{Acknowledgements}
Haoxiang Wang would like to thank Sébastien Arnold and Ruoyu Sun for helpful discussions. This work is partially supported by NSF grant No.1910100, NSF CNS 20-46726 CAR, and Amazon Research Award.

\newpage
\bibliography{main}

\begin{thebibliography}{70}
\providecommand{\natexlab}[1]{#1}
\providecommand{\url}[1]{\texttt{#1}}
\expandafter\ifx\csname urlstyle\endcsname\relax
  \providecommand{\doi}[1]{doi: #1}\else
  \providecommand{\doi}{doi: \begingroup \urlstyle{rm}\Url}\fi

\bibitem[Allen-Zhu et~al.(2019)Allen-Zhu, Li, and Song]{AllenZhu2018ACT}
Allen-Zhu, Z., Li, Y., and Song, Z.
\newblock A convergence theory for deep learning via over-parameterization.
\newblock \emph{International Conference on Machine Learning}, 2019.

\bibitem[Antoniou et~al.(2019)Antoniou, Edwards, and Storkey]{howtotrainmaml}
Antoniou, A., Edwards, H., and Storkey, A.
\newblock How to train your {MAML}.
\newblock In \emph{International Conference on Learning Representations}, 2019.
\newblock URL \url{https://openreview.net/forum?id=HJGven05Y7}.

\bibitem[Argyriou et~al.(2008)Argyriou, Evgeniou, and
  Pontil]{argyriou2008convex}
Argyriou, A., Evgeniou, T., and Pontil, M.
\newblock Convex multi-task feature learning.
\newblock \emph{Machine learning}, 73\penalty0 (3):\penalty0 243--272, 2008.

\bibitem[Arnold et~al.(2020)Arnold, Mahajan, Datta, Bunner, and
  Zarkias]{learn2learn2019}
Arnold, S.~M., Mahajan, P., Datta, D., Bunner, I., and Zarkias, K.~S.
\newblock learn2learn: A library for meta-learning research.
\newblock \emph{arXiv preprint arXiv:2008.12284}, 2020.

\bibitem[Arora et~al.(2019)Arora, Du, Hu, Li, Salakhutdinov, and Wang]{CNTK}
Arora, S., Du, S.~S., Hu, W., Li, Z., Salakhutdinov, R., and Wang, R.
\newblock On exact computation with an infinitely wide neural net.
\newblock \emph{NeurIPS}, 2019.

\bibitem[Ba et~al.(2016)Ba, Kiros, and Hinton]{layernorm}
Ba, J.~L., Kiros, J.~R., and Hinton, G.~E.
\newblock Layer normalization.
\newblock \emph{arXiv preprint arXiv:1607.06450}, 2016.

\bibitem[Bai et~al.(2021)Bai, Chen, Zhou, Zhao, Lee, Kakade, Wang, and
  Xiong]{bai2021how}
Bai, Y., Chen, M., Zhou, P., Zhao, T., Lee, J.~D., Kakade, S.~M., Wang, H., and
  Xiong, C.
\newblock How important is the train-validation split in meta-learning?
\newblock In \emph{ICML}, 2021.

\bibitem[Balcan et~al.(2019)Balcan, Khodak, and Talwalkar]{provable-gbml}
Balcan, M.-F., Khodak, M., and Talwalkar, A.
\newblock Provable guarantees for gradient-based meta-learning.
\newblock In \emph{International Conference on Machine Learning}, pp.\
  424--433, 2019.

\bibitem[Baxter(1998)]{baxter1998theoretical}
Baxter, J.
\newblock Theoretical models of learning to learn.
\newblock In \emph{Learning to learn}, pp.\  71--94. Springer, 1998.

\bibitem[Bertinetto et~al.(2019)Bertinetto, Henriques, Torr, and Vedaldi]{r2d2}
Bertinetto, L., Henriques, J.~F., Torr, P., and Vedaldi, A.
\newblock Meta-learning with differentiable closed-form solvers.
\newblock In \emph{International Conference on Learning Representations}, 2019.
\newblock URL \url{https://openreview.net/forum?id=HyxnZh0ct7}.

\bibitem[Bradbury et~al.(2018)Bradbury, Frostig, Hawkins, Johnson, Leary,
  Maclaurin, Necula, Paszke, Vander{P}las, Wanderman-{M}ilne, and
  Zhang]{jax2018github}
Bradbury, J., Frostig, R., Hawkins, P., Johnson, M.~J., Leary, C., Maclaurin,
  D., Necula, G., Paszke, A., Vander{P}las, J., Wanderman-{M}ilne, S., and
  Zhang, Q.
\newblock {JAX}: composable transformations of {P}ython+{N}um{P}y programs,
  2018.
\newblock URL \url{http://github.com/google/jax}.

\bibitem[Caruana(1997)]{caruana1997multitask}
Caruana, R.
\newblock Multitask learning.
\newblock \emph{Machine learning}, 28\penalty0 (1):\penalty0 41--75, 1997.

\bibitem[Deng et~al.(2009)Deng, Dong, Socher, Li, Li, and Fei-Fei]{imagenet}
Deng, J., Dong, W., Socher, R., Li, L.-J., Li, K., and Fei-Fei, L.
\newblock {ImageNet: A Large-Scale Hierarchical Image Database}.
\newblock In \emph{CVPR}, 2009.

\bibitem[Dong et~al.(2015)Dong, Wu, He, Yu, and Wang]{dong2015multi}
Dong, D., Wu, H., He, W., Yu, D., and Wang, H.
\newblock Multi-task learning for multiple language translation.
\newblock In \emph{Proceedings of the 53rd Annual Meeting of the Association
  for Computational Linguistics and the 7th International Joint Conference on
  Natural Language Processing (Volume 1: Long Papers)}, pp.\  1723--1732, 2015.

\bibitem[Du et~al.(2019)Du, Lee, Li, Wang, and Zhai]{du2019icml}
Du, S.~S., Lee, J.~D., Li, H., Wang, L., and Zhai, X.
\newblock Gradient descent finds global minima of deep neural networks.
\newblock \emph{International Conference on Machine Learning}, 2019.

\bibitem[Du et~al.(2021)Du, Hu, Kakade, Lee, and Lei]{du2021fewshot}
Du, S.~S., Hu, W., Kakade, S.~M., Lee, J.~D., and Lei, Q.
\newblock Few-shot learning via learning the representation, provably.
\newblock In \emph{International Conference on Learning Representations}, 2021.
\newblock URL \url{https://openreview.net/forum?id=pW2Q2xLwIMD}.

\bibitem[Evgeniou \& Pontil(2007)Evgeniou and Pontil]{evgeniou2007multi}
Evgeniou, A. and Pontil, M.
\newblock Multi-task feature learning.
\newblock \emph{Advances in neural information processing systems},
  19:\penalty0 41, 2007.

\bibitem[Fallah et~al.(2020)Fallah, Mokhtari, and Ozdaglar]{maml_nonconvex}
Fallah, A., Mokhtari, A., and Ozdaglar, A.
\newblock On the convergence theory of gradient-based model-agnostic
  meta-learning algorithms.
\newblock In \emph{International Conference on Artificial Intelligence and
  Statistics}, pp.\  1082--1092, 2020.

\bibitem[Finn et~al.(2017)Finn, Abbeel, and Levine]{maml}
Finn, C., Abbeel, P., and Levine, S.
\newblock Model-agnostic meta-learning for fast adaptation of deep networks.
\newblock In \emph{Proceedings of the 34th International Conference on Machine
  Learning-Volume 70}, pp.\  1126--1135. JMLR. org, 2017.

\bibitem[Finn et~al.(2019)Finn, Rajeswaran, Kakade, and Levine]{finn2019online}
Finn, C., Rajeswaran, A., Kakade, S., and Levine, S.
\newblock Online meta-learning.
\newblock In \emph{International Conference on Machine Learning}, pp.\
  1920--1930, 2019.

\bibitem[Goldblum et~al.(2020)Goldblum, Reich, Fowl, Ni, Cherepanova, and
  Goldstein]{goldblum2020unraveling}
Goldblum, M., Reich, S., Fowl, L., Ni, R., Cherepanova, V., and Goldstein, T.
\newblock Unraveling meta-learning: Understanding feature representations for
  few-shot tasks.
\newblock In \emph{International Conference on Machine Learning}, pp.\
  3607--3616. PMLR, 2020.

\bibitem[Grant et~al.(2018)Grant, Finn, Levine, Darrell, and
  Griffiths]{grant2018recasting}
Grant, E., Finn, C., Levine, S., Darrell, T., and Griffiths, T.
\newblock Recasting gradient-based meta-learning as hierarchical bayes.
\newblock In \emph{International Conference on Learning Representations}, 2018.
\newblock URL \url{https://openreview.net/forum?id=BJ_UL-k0b}.

\bibitem[Hanin \& Nica(2020)Hanin and Nica]{Hanin2020Finite}
Hanin, B. and Nica, M.
\newblock Finite depth and width corrections to the neural tangent kernel.
\newblock In \emph{International Conference on Learning Representations}, 2020.
\newblock URL \url{https://openreview.net/forum?id=SJgndT4KwB}.

\bibitem[He et~al.(2016)He, Zhang, Ren, and Sun]{resnet}
He, K., Zhang, X., Ren, S., and Sun, J.
\newblock Deep residual learning for image recognition.
\newblock In \emph{Proceedings of the IEEE conference on computer vision and
  pattern recognition}, pp.\  770--778, 2016.

\bibitem[Henderson \& Searle(1981)Henderson and Searle]{henderson1981deriving}
Henderson, H.~V. and Searle, S.~R.
\newblock On deriving the inverse of a sum of matrices.
\newblock \emph{Siam Review}, 23\penalty0 (1):\penalty0 53--60, 1981.

\bibitem[Hospedales et~al.(2020)Hospedales, Antoniou, Micaelli, and
  Storkey]{hospedales2020metalearning}
Hospedales, T., Antoniou, A., Micaelli, P., and Storkey, A.
\newblock Meta-learning in neural networks: A survey, 2020.

\bibitem[Hsu et~al.(2020)Hsu, Chen, and Lee]{hsu2020meta}
Hsu, J.-Y., Chen, Y.-J., and Lee, H.-y.
\newblock Meta learning for end-to-end low-resource speech recognition.
\newblock In \emph{ICASSP 2020-2020 IEEE International Conference on Acoustics,
  Speech and Signal Processing (ICASSP)}, pp.\  7844--7848. IEEE, 2020.

\bibitem[Hu et~al.(2020)Hu, Zhang, Chen, and He]{hu2020biased}
Hu, Y., Zhang, S., Chen, X., and He, N.
\newblock Biased stochastic gradient descent for conditional stochastic
  optimization.
\newblock \emph{arXiv preprint arXiv:2002.10790}, 2020.

\bibitem[Hui \& Belkin(2021)Hui and Belkin]{hui2020evaluation}
Hui, L. and Belkin, M.
\newblock Evaluation of neural architectures trained with square loss vs
  cross-entropy in classification tasks.
\newblock In \emph{ICLR}, 2021.

\bibitem[Jacot et~al.(2018)Jacot, Gabriel, and Hongler]{ntk}
Jacot, A., Gabriel, F., and Hongler, C.
\newblock Neural tangent kernel: Convergence and generalization in neural
  networks.
\newblock In \emph{Advances in neural information processing systems}, pp.\
  8571--8580, 2018.

\bibitem[Ji et~al.(2020)Ji, Yang, and Liang]{ji2020multistep}
Ji, K., Yang, J., and Liang, Y.
\newblock Multi-step model-agnostic meta-learning: Convergence and improved
  algorithms.
\newblock \emph{arXiv preprint arXiv:2002.07836}, 2020.

\bibitem[Kendall et~al.(2018)Kendall, Gal, and Cipolla]{kendall2018multi}
Kendall, A., Gal, Y., and Cipolla, R.
\newblock Multi-task learning using uncertainty to weigh losses for scene
  geometry and semantics.
\newblock In \emph{Proceedings of the IEEE conference on computer vision and
  pattern recognition}, pp.\  7482--7491, 2018.

\bibitem[Khodak et~al.(2019)Khodak, Balcan, and Talwalkar]{adaptive-GBML}
Khodak, M., Balcan, M.-F.~F., and Talwalkar, A.~S.
\newblock Adaptive gradient-based meta-learning methods.
\newblock In \emph{Advances in Neural Information Processing Systems}, pp.\
  5915--5926, 2019.

\bibitem[Kingma \& Ba(2015)Kingma and Ba]{adam}
Kingma, D.~P. and Ba, J.
\newblock Adam: A method for stochastic optimization.
\newblock In \emph{ICLR}, 2015.

\bibitem[Krizhevsky(2009)]{cifar}
Krizhevsky, A.
\newblock Learning multiple layers of features from tiny images.
\newblock 2009.

\bibitem[Lee et~al.(2018)Lee, Sohl-dickstein, Pennington, Novak, Schoenholz,
  and Bahri]{lee2018deep}
Lee, J., Sohl-dickstein, J., Pennington, J., Novak, R., Schoenholz, S., and
  Bahri, Y.
\newblock Deep neural networks as gaussian processes.
\newblock In \emph{International Conference on Learning Representations}, 2018.
\newblock URL \url{https://openreview.net/forum?id=B1EA-M-0Z}.

\bibitem[Lee et~al.(2019{\natexlab{a}})Lee, Xiao, Schoenholz, Bahri,
  Sohl-Dickstein, and Pennington]{lee2019wide}
Lee, J., Xiao, L., Schoenholz, S.~S., Bahri, Y., Sohl-Dickstein, J., and
  Pennington, J.
\newblock Wide neural networks of any depth evolve as linear models under
  gradient descent.
\newblock \emph{NeurIPS}, 2019{\natexlab{a}}.

\bibitem[Lee et~al.(2019{\natexlab{b}})Lee, Maji, Ravichandran, and
  Soatto]{metaOptNet}
Lee, K., Maji, S., Ravichandran, A., and Soatto, S.
\newblock Meta-learning with differentiable convex optimization.
\newblock In \emph{CVPR}, 2019{\natexlab{b}}.

\bibitem[Lin et~al.(2021)Lin, Zhao, Zhang, Baoxing, and Yuan]{lin2021to}
Lin, Z., Zhao, Z., Zhang, Z., Baoxing, H., and Yuan, J.
\newblock To learn effective features: Understanding the task-specific
  adaptation of {\{}maml{\}}, 2021.
\newblock URL \url{https://openreview.net/forum?id=FPpZrRfz6Ss}.

\bibitem[Liu et~al.(2020)Liu, Jiang, He, Chen, Liu, Gao, and Han]{liu2019radam}
Liu, L., Jiang, H., He, P., Chen, W., Liu, X., Gao, J., and Han, J.
\newblock On the variance of the adaptive learning rate and beyond.
\newblock In \emph{Proceedings of the Eighth International Conference on
  Learning Representations (ICLR 2020)}, April 2020.

\bibitem[Luo et~al.(2019)Luo, Ma, Ideker, Zhao, Su, and Liu]{luo2019mitigating}
Luo, Y., Ma, J., Ideker, X.~T., Zhao, J., Su, P.~Y., and Liu, Y.
\newblock Mitigating data scarcity in protein binding prediction using
  meta-learning.
\newblock In \emph{Research in Computational Molecular Biology: 23rd Annual
  International Conference, RECOMB 2019, Washington, DC, USA, May 5-8, 2019,
  Proceedings}, volume 11467, pp.\  305. Springer, 2019.

\bibitem[Maurer et~al.(2016)Maurer, Pontil, and
  Romera-Paredes]{maurer2016benefit}
Maurer, A., Pontil, M., and Romera-Paredes, B.
\newblock The benefit of multitask representation learning.
\newblock \emph{The Journal of Machine Learning Research}, 17\penalty0
  (1):\penalty0 2853--2884, 2016.

\bibitem[Nichol et~al.(2018{\natexlab{a}})Nichol, Achiam, and
  Schulman]{nichol2018first}
Nichol, A., Achiam, J., and Schulman, J.
\newblock On first-order meta-learning algorithms.
\newblock \emph{arXiv preprint arXiv:1803.02999}, 2018{\natexlab{a}}.

\bibitem[Nichol et~al.(2018{\natexlab{b}})Nichol, Achiam, and
  Schulman]{reptile}
Nichol, A., Achiam, J., and Schulman, J.
\newblock On first-order meta-learning algorithms.
\newblock \emph{arXiv preprint arXiv:1803.02999}, 2018{\natexlab{b}}.

\bibitem[Novak et~al.(2019)Novak, Xiao, Bahri, Lee, Yang, Abolafia, Pennington,
  and Sohl-dickstein]{novak2019bayesian}
Novak, R., Xiao, L., Bahri, Y., Lee, J., Yang, G., Abolafia, D.~A., Pennington,
  J., and Sohl-dickstein, J.
\newblock Bayesian deep convolutional networks with many channels are gaussian
  processes.
\newblock In \emph{International Conference on Learning Representations}, 2019.
\newblock URL \url{https://openreview.net/forum?id=B1g30j0qF7}.

\bibitem[Novak et~al.(2020)Novak, Xiao, Hron, Lee, Alemi, Sohl-Dickstein, and
  Schoenholz]{neuraltangents2020}
Novak, R., Xiao, L., Hron, J., Lee, J., Alemi, A.~A., Sohl-Dickstein, J., and
  Schoenholz, S.~S.
\newblock Neural tangents: Fast and easy infinite neural networks in python.
\newblock In \emph{International Conference on Learning Representations}, 2020.
\newblock URL \url{https://github.com/google/neural-tangents}.

\bibitem[Oreshkin et~al.(2018)Oreshkin, Rodr\'{\i}guez~L\'{o}pez, and
  Lacoste]{NEURIPS2018_66808e32}
Oreshkin, B., Rodr\'{\i}guez~L\'{o}pez, P., and Lacoste, A.
\newblock Tadam: Task dependent adaptive metric for improved few-shot learning.
\newblock In Bengio, S., Wallach, H., Larochelle, H., Grauman, K.,
  Cesa-Bianchi, N., and Garnett, R. (eds.), \emph{Advances in Neural
  Information Processing Systems}, volume~31, pp.\  721--731. Curran
  Associates, Inc., 2018.
\newblock URL
  \url{https://proceedings.neurips.cc/paper/2018/file/66808e327dc79d135ba18e051673d906-Paper.pdf}.

\bibitem[Paszke et~al.(2019)Paszke, Gross, Massa, Lerer, Bradbury, Chanan,
  Killeen, Lin, Gimelshein, Antiga, et~al.]{pytorch}
Paszke, A., Gross, S., Massa, F., Lerer, A., Bradbury, J., Chanan, G., Killeen,
  T., Lin, Z., Gimelshein, N., Antiga, L., et~al.
\newblock Pytorch: An imperative style, high-performance deep learning library.
\newblock \emph{Advances in Neural Information Processing Systems},
  32:\penalty0 8026--8037, 2019.

\bibitem[Pedregosa et~al.(2011)Pedregosa, Varoquaux, Gramfort, Michel, Thirion,
  Grisel, Blondel, Prettenhofer, Weiss, Dubourg, Vanderplas, Passos,
  Cournapeau, Brucher, Perrot, and Duchesnay]{sklearn}
Pedregosa, F., Varoquaux, G., Gramfort, A., Michel, V., Thirion, B., Grisel,
  O., Blondel, M., Prettenhofer, P., Weiss, R., Dubourg, V., Vanderplas, J.,
  Passos, A., Cournapeau, D., Brucher, M., Perrot, M., and Duchesnay, E.
\newblock Scikit-learn: Machine learning in {P}ython.
\newblock \emph{Journal of Machine Learning Research}, 12:\penalty0 2825--2830,
  2011.

\bibitem[Raghu et~al.(2020)Raghu, Raghu, Bengio, and Vinyals]{raghu2019rapid}
Raghu, A., Raghu, M., Bengio, S., and Vinyals, O.
\newblock Rapid learning or feature reuse? towards understanding the
  effectiveness of maml.
\newblock In \emph{International Conference on Learning Representations}, 2020.

\bibitem[Rajeswaran et~al.(2019)Rajeswaran, Finn, Kakade, and Levine]{imaml}
Rajeswaran, A., Finn, C., Kakade, S.~M., and Levine, S.
\newblock Meta-learning with implicit gradients.
\newblock In \emph{Advances in Neural Information Processing Systems}, pp.\
  113--124, 2019.

\bibitem[Ren et~al.(2018)Ren, Ravi, Triantafillou, Snell, Swersky, Tenenbaum,
  Larochelle, and Zemel]{ren2018metalearning}
Ren, M., Ravi, S., Triantafillou, E., Snell, J., Swersky, K., Tenenbaum, J.~B.,
  Larochelle, H., and Zemel, R.~S.
\newblock Meta-learning for semi-supervised few-shot classification.
\newblock In \emph{International Conference on Learning Representations}, 2018.
\newblock URL \url{https://openreview.net/forum?id=HJcSzz-CZ}.

\bibitem[Ruder(2017)]{ruder2017overview}
Ruder, S.
\newblock An overview of multi-task learning in deep neural networks, 2017.

\bibitem[Saunshi et~al.(2020)Saunshi, Zhang, Khodak, and
  Arora]{saunshi2020sample}
Saunshi, N., Zhang, Y., Khodak, M., and Arora, S.
\newblock A sample complexity separation between non-convex and convex
  meta-learning, 2020.

\bibitem[Snell et~al.(2017)Snell, Swersky, and Zemel]{snell2017prototypical}
Snell, J., Swersky, K., and Zemel, R.
\newblock Prototypical networks for few-shot learning.
\newblock In \emph{Advances in Neural Information Processing Systems}, pp.\
  4077--4087, 2017.

\bibitem[Thrun \& Pratt(1998)Thrun and Pratt]{learningtolearn}
Thrun, S. and Pratt, L.
\newblock Learning to learn: Introduction and overview.
\newblock In \emph{Learning to learn}, pp.\  3--17. Springer, 1998.

\bibitem[Tian et~al.(2020)Tian, Wang, Krishnan, Tenenbaum, and
  Isola]{tian2020rethink}
Tian, Y., Wang, Y., Krishnan, D., Tenenbaum, J.~B., and Isola, P.
\newblock Rethinking few-shot image classification: a good embedding is all you
  need?
\newblock \emph{ECCV}, 2020.

\bibitem[Tripuraneni et~al.(2020)Tripuraneni, Jordan, and
  Jin]{tripuraneni2020theory}
Tripuraneni, N., Jordan, M., and Jin, C.
\newblock On the theory of transfer learning: The importance of task diversity.
\newblock In Larochelle, H., Ranzato, M., Hadsell, R., Balcan, M.~F., and Lin,
  H. (eds.), \emph{Advances in Neural Information Processing Systems},
  volume~33, pp.\  7852--7862. Curran Associates, Inc., 2020.
\newblock URL
  \url{https://proceedings.neurips.cc/paper/2020/file/59587bffec1c7846f3e34230141556ae-Paper.pdf}.

\bibitem[Vinyals et~al.(2016)Vinyals, Blundell, Lillicrap, Wierstra,
  et~al.]{matching-net}
Vinyals, O., Blundell, C., Lillicrap, T., Wierstra, D., et~al.
\newblock Matching networks for one shot learning.
\newblock In \emph{Advances in neural information processing systems}, pp.\
  3630--3638, 2016.

\bibitem[Wang et~al.(2019)Wang, Singh, Michael, Hill, Levy, and
  Bowman]{wang2018glue}
Wang, A., Singh, A., Michael, J., Hill, F., Levy, O., and Bowman, S.~R.
\newblock {GLUE}: A multi-task benchmark and analysis platform for natural
  language understanding.
\newblock In \emph{International Conference on Learning Representations}, 2019.
\newblock URL \url{https://openreview.net/forum?id=rJ4km2R5t7}.

\bibitem[Wang et~al.(2020)Wang, Sun, and Li]{meta-ntk}
Wang, H., Sun, R., and Li, B.
\newblock Global convergence and induced kernels of gradient-based
  meta-learning with neural nets.
\newblock \emph{arXiv preprint arXiv:2006.14606}, 2020.

\bibitem[Wang \& Isola(2020)Wang and Isola]{wang2020understand}
Wang, T. and Isola, P.
\newblock Understanding contrastive representation learning through alignment
  and uniformity on the hypersphere.
\newblock In \emph{Proceedings of the 37th International Conference on Machine
  Learning}, 2020.

\bibitem[Xiao et~al.(2020)Xiao, Pennington, and Schoenholz]{xiao2020dis}
Xiao, L., Pennington, J., and Schoenholz, S.~S.
\newblock Disentangling trainability and generalization in deep learning.
\newblock \emph{ICML}, 2020.

\bibitem[Xu et~al.(2020)Xu, Chen, and Karbasi]{xu2020meta}
Xu, R., Chen, L., and Karbasi, A.
\newblock Meta learning in the continuous time limit.
\newblock \emph{arXiv preprint arXiv:2006.10921}, 2020.

\bibitem[Zhang \& Yang(2017)Zhang and Yang]{overview-mtl}
Zhang, Y. and Yang, Q.
\newblock {An overview of multi-task learning}.
\newblock \emph{National Science Review}, 5\penalty0 (1):\penalty0 30--43, 09
  2017.
\newblock ISSN 2095-5138.
\newblock \doi{10.1093/nsr/nwx105}.
\newblock URL \url{https://doi.org/10.1093/nsr/nwx105}.

\bibitem[Zhang \& Yeung(2010)Zhang and Yeung]{zhang2010convex}
Zhang, Y. and Yeung, D.~Y.
\newblock A convex formulation for learning task relationships in multi-task
  learning.
\newblock In \emph{Proceedings of the 26th Conference on Uncertainty in
  Artificial Intelligence, UAI 2010}, pp.\  733, 2010.

\bibitem[Zhang et~al.(2014)Zhang, Luo, Loy, and Tang]{zhang2014facial}
Zhang, Z., Luo, P., Loy, C.~C., and Tang, X.
\newblock Facial landmark detection by deep multi-task learning.
\newblock In \emph{European conference on computer vision}, pp.\  94--108.
  Springer, 2014.

\bibitem[Zhao et~al.(2020)Zhao, Stretcu, Smola, and Gordon]{zhao2020efficient}
Zhao, H., Stretcu, O., Smola, A.~J., and Gordon, G.~J.
\newblock Efficient multitask feature and relationship learning.
\newblock In \emph{Uncertainty in Artificial Intelligence}, pp.\  777--787.
  PMLR, 2020.

\bibitem[Zhou et~al.(2019{\natexlab{a}})Zhou, Yuan, Xu, and
  Yan]{zhou2019metalearning}
Zhou, P., Yuan, X., Xu, H., and Yan, S.
\newblock Efficient meta learning via minibatch proximal update.
\newblock \emph{Neural Information Processing Systems}, 2019{\natexlab{a}}.

\bibitem[Zhou et~al.(2019{\natexlab{b}})Zhou, Yuan, Xu, Yan, and
  Feng]{zhou2019efficient}
Zhou, P., Yuan, X., Xu, H., Yan, S., and Feng, J.
\newblock Efficient meta learning via minibatch proximal update.
\newblock In \emph{Advances in Neural Information Processing Systems}, pp.\
  1534--1544, 2019{\natexlab{b}}.

\end{thebibliography}
\bibliographystyle{icml2021}


\onecolumn
\newpage
\appendix
\icmltitle{Appendix}
\section*{Overview of the Appendix}
The appendix mainly consists of three parts. In Section~\ref{supp:background} we provide more detailed introduction to the set up of meta-learning as well as neural tangent kernels that are missing from the main text due to page limit. In Section~\ref{supp:proof} we provide all the missing proofs of the lemmas and theorems presented in the main paper. In Section~\ref{supp:exp} we discuss in depth about the experiments in the paper. For the convenience of readers, we also provide a copy of the reference at the end of this appendix. 

\section{More on Meta-Learning and Neural Net Setup}\label{supp:background}
In this section, we will provide more information on
\begin{itemize}
    \item Appendix \ref{supp:background:query-support-split}: Query-support split of meta-learning.
    \item Appendix \ref{supp:background:unified-framework}: Unified framework for gradient-based meta-learning that optimizes all layers in the inner loop.
    \item Appendix \ref{supp:background:ntk-parameterization}: NTK parameterization.
\end{itemize}

\subsection{Query-Support Split}\label{supp:background:query-support-split}
Sec. \ref{sec:prelim:mtl} introduces meta-training in the setting without query-support split. In this section, we adopt the notation of Sec. \ref{sec:prelim:mtl}, and describe meta-training in the setting with query-support split below.

The $n$ labelled samples in each training task is divided into two sets, $n_q$ query samples and $n_s$ support samples, i.e., for $i\in[N]$, the $i$-th task consists of 
\begin{align*}
&\text{$n_q$ Query Samples \& Labels:} &X_i^q \in \bR^{n_q \times d}, ~Y_i^q \in \bR^{n_q k}\\
&\text{$n_s$ Support Samples \& Labels:} &X_i^s \in \bR^{n_s \times d},~ Y_i^s \in \bR^{n_q k}
\end{align*}
The optimization objective of ANIL on the training data $\{X_i^q, Y_i^q, X_i^s, Y_i^s\}_{i=1}^N$ is
\begin{align}
    &\min_{\theta} \loss_\anil(\theta) \coloneqq \sum_{i\in[N]} \ell(\phitop_{\theta^\body}(X_i^q) ~ w_i', ~Y_i^q) \\
    &\text{s.t. }w_i'=~\innerloop(w, \phitop_{\theta^\body}(X_i^s), Y_i^s, \tau,\lambda)
\end{align}
It is clear that the $\innerloop$ operation is performed on \textit{support} data $(X_i^s,Y_i^s)$, while the loss evaluation is on the \textit{query} data $(X_i^q,Y_i^q)$.

\subsection{Unified Framework for Gradient-Based Meta-Learning that Optimizes All Layers in the Inner Loop}\label{supp:background:unified-framework}

For GBML algorithms that optimize all layers in the inner loop, their objectives can be summarized into the following unified framework. In contrast to \eqref{eq:meta=mtl:meta-obj}, we have
\begin{align}\label{eq:meta=mtl:meta-obj:all-layers}
    \min_{\theta} \overbrace{\Big[\min_{\{\theta_i\}_{i=1}^N} \sum_{i\in[N]} \ell \Big(f_{\theta_i}(X_i), Y_{i}\Big) + R(\theta_i)\Big]}^{\loss_\gbml(\theta)}~.
\end{align}
Note that similar to~\eqref{eq:meta=mtl:meta-obj}, the parameters $\{\theta_i\}_{i=1}^N$ in~\eqref{eq:meta=mtl:meta-obj:all-layers} are transient, in the sense that GBML algorithms do not explicitly save them during training. In contrast, $\theta$ contains all parameters to optimize in \eqref{eq:meta=mtl:meta-obj:all-layers}, and $\theta$ is optimized over $\ell_\gbml(\theta)$, which is obtained by plugging in the minimizer of $\{\theta_i\}_{i=1}^N$ on the regularized loss. In other words, \eqref{eq:meta=mtl:meta-obj:all-layers} is a bi-level optimization problem, with outer-loop optimization on network parameters $\theta$ and inner-loop optimization on the transient parameters $\{\theta_i\}_{i=1}^N$.

\subsection{NTK Parameterization}\label{supp:background:ntk-parameterization}
NTK parameterization is a neural net parameterization that can be used to provide theoretical analyses of neural net optimization and convergence \cite{lee2019wide,xiao2020dis}. The training dynamics and predictions of NTK-parameterized neural nets are the same as those of standard neural nets \cite{lee2019wide}, up to a width-dependent factor in the learning rate. In what follows, we take a single-head neural net as an example to describe the NTK parameterization. Notice that multi-head networks share the same parameterization with single-head networks, and the only difference is that $N$-head networks have $N$ copies of the output heads (parameterized in the same way as the output heads of single-head networks).

In this paper, we consider a fully-connected feed-forward network with $L$
layers. Each hidden layer has width $l_{i}$, for $i = 1, ..., L - 1$. The readout layer (i.e., output layer) has width $l_{L} = k$. At each layer $i$, for arbitrary input $x\in \mathbb R^{d}$, we denote the pre-activation and post-activation functions by $h^i(x), z^i(x)\in\mathbb R^{l_i}$. The relations between layers in this network are
\begin{align}
\label{eq:recurrence}
\begin{cases}
    h^{i+1}&=z^i W^{i+1} + b^{i+1}
    \\
    z^{i+1}&=\activation \left(h^{i+1}\right) 
    \end{cases}
    \,\, \textrm{and} 
    \,\,
    \begin{cases}
  W^i_{\mu,\nu}& =  \omega_{\mu \nu}^i \sim \mathcal{N} (0, \frac {\sigma_\omega} {\sqrt{l_i}} )
    \\
    b_\nu^i &= \beta_\nu^i \sim \mathcal{N} (0,\sigma_b  )
\end{cases}
,
\end{align}
where $W^{i+1}\in \mathbb R^{l_i\times l_{i+1}}$ and $b^{i+1}\in\mathbb R^{l_{i+1}}$ are the weight and bias of the layer, $\omega_{\mu \nu}^l$ and $ b_\nu^l $ are trainable variables drawn i.i.d. from zero-mean Gaussian distributions at initialization (i.e.,
$\frac{\sws}{l_i}$ and $\sbs$ are variances for weight and bias, and $\activation$ is a point-wise activation function.
\section{Proof}\label{supp:proof}
We present all the missing proofs from the main paper, summarized as follows:
\begin{itemize}
    \item Appendix \ref{supp:proof:global-convergence}: Proves the \textbf{global convergence} of MTL and ANIL, and demonstrates that neural net output and meta-output functions are linearized under over-parameterization.
    \item Appendix \ref{supp:proof:dynamics}: Studies the \textbf{training dynamics} of MTL and ANIL, and derives analytic expressions for their predictors.
    \item Appendix \ref{supp:proof:predictors&kernels}: Derives the expression of \textbf{kernels} for MTL and ANIL, and proves \textbf{Lemma \ref{lemma:test-predict}}.
    \item Appendix \ref{supp:proof:relu-kernel-structures}: Characterizes the \textbf{structures and spectra} of ANIL and MTL kernels for deep \textbf{ReLU} nets.
    \item Appendix \ref{supp:proof:main-theorem}: Proves our main theorem, i.e., \textbf{Theorem \ref{thm:closeness}.}
    \item Appendix \ref{supp:proof:residual}: Extends Theorem \ref{thm:closeness} to \textbf{residual} ReLU networks.
\end{itemize}

\textbf{Shorthand.} As described in Sec. \ref{sec:prelim:fine-tune}, for both MTL and ANIL, we randomly initialize a test head $\wtest$ for fine-tuning in the test phase. Now, we define the following shorthand for convenience.
\begin{itemize}
    \item $\thetatest = \{\theta^\body, \wtest\}$: a parameter set including first $L-1$ layers' parameters of $\theta$ and the test head $\wtest$. 
    \item $\thetamtltest = \{\thetamtl^\body, \wmtltest\}$: a parameter set including first $L-1$ layers' parameters of $\theta$ and the test head $\wtest$.
\end{itemize}

\subsection{Global Convergence of ANIL and MTL with Over-parameterized Deep Neural Nets}\label{supp:proof:global-convergence}
Throughout the paper, we use the squared loss as the objective function of training neural nets: $\ell (\hat y, y)\defeq\frac{1}{2}\|\hat y - y\|_2^2$. To ease the presentation, we define the following meta-output functions.
\begin{definition}[Meta-Output Functions] On any task $\task = (\xyxy)$, for the given adaptation steps $\tau$, we define the meta-output function as 
     \begin{align}
        \label{eq:meta-output}
        F^\tau_\theta(\xxy) = f_{\thetatest}\left(\sX\right) \in \mathbb R^{n k}
        \end{align}
        where the adapted parameters $\thetatest$ is obtained as follows: 
        use $\theta$ as the initial parameter and update it by $\tau$ steps of gradient descent on support samples and labels $(\sX',\sY')$, with learning rate $\lambda$ and loss function $\ell$. Mathematically, $\forall j=0,...,\tau-1$, we have
        \begin{align}\label{eq:meta-adaption-descrete}
            &\theta=\theta_0, ~~\thetatest=\theta_\tau, \text{ and }\theta_{j+1} = \theta_{j} - \lambda \nabla_{\theta_{j}} \ell(f_{\theta_j}(\sX'),\sY')
        \end{align}
\end{definition}

\paragraph{Shorthand} To make the notation uncluttered, we define some shorthand for the meta-output function,
    \begin{itemize}
        \item $F^\tau_\theta(\X,\X,\Y) \triangleq \left(F^\tau_\theta(X_i,X_i,Y_i)\right)_{i=1}^N$: the concatenation of meta-outputs on \textit{all} training tasks. 
        \item $F^\tau_t \triangleq F^\tau_{\theta_t}$: shorthand for the meta-output function with parameters $\theta_t$ at training time $t$.
    \end{itemize}

\paragraph{ANIL Loss} 
With the squared loss function, the training objective of ANIL is expressed as
\begin{align} \label{eq:MAML-obj}
\mathcal L_\anil(\theta) &= \sum_{i=1}^N \ell (F^\tau_\theta(X_i,X_i,Y_i), Y_i) =\frac{1}{2}\sum_{i=1}^N\|F^\tau_\theta(X_i,X_i,Y_i)-Y_i\|_2^2
=\frac{1}{2}\|F^\tau_\theta(\X,\X,\Y)-\Y\|_2^2 
\end{align}
\paragraph{MTL Loss} 
With the squared loss function, the objective of MTL is
\begin{align}
    \mathcal L_\mtl (\thetamtl) &= \sum_{i=1}^N \ell (\fmtl_\thetamtl(X_i,i), Y_i)
    =\frac{1}{2}\sum_{i=1}^N\|\fmtl_\thetamtl(X_i,i)-Y_i\|_2^2 
    =\frac{1}{2}\|\fmtl_\thetamtl(\X)-\Y\|_2^2 
\end{align}
where we define the notation $\fmtl_\thetamtl(\X)$ to be $\vecop(\{\fmtl_\thetamtl(X_i, i)\}_{i=1}^N)$.
\paragraph{Tangent Kernels} Now, we define tangent kernels for MTL and ANIL, following \citet{meta-ntk}. Denote $\width$ as the minimum width across hidden layers, i.e., $\width = \min_{l\in[L-1]} h_l$. Then, the tangent kernels of MTL and ANIL are defined as
\begin{align}
        \metaNTK_\mtl  &= \lim_{\width \rightarrow \infty} \nabla_{\thetamtl_0} \fmtl_{\thetamtl_0}(\X) \cdot  \nabla_{\thetamtl_0} \fmtl_{\thetamtl_0}(\X)^\top\\
        \metaNTK_\anil  &= \lim_{\width\rightarrow \infty}\nabla_{\theta_0} F^\tau_{\theta_0}(\X,\X,\Y) \cdot  \nabla_{\theta_0} F^\tau_{\theta_0}(\X,\X,\Y)^\top
\end{align}
Notice that by \citet{meta-ntk}, we know both kernels are deterministic positive-definite matrices, independent of the initializations $\theta_0$ and $\thetamtl_0$.

Next, we present the following theorem that characterizes the global convergence of the above two algorithms on over-parametrized neural networks.  

\begin{theorem}[Global Convergence of ANIL and MTL with Over-parameterized Deep Neural Nets] \label{thm:global-convergence}
Define 
\begin{equation*}
\eta_0 = \min\left\{\frac{2}{\lev(\metaNTK_\mtl)+\Lev(\metaNTK_\anil)},\frac{2}{\lev(\metaNTK_\mtl)+\Lev(\metaNTK_\anil)}\right\}.    
\end{equation*}
For arbitrarily small $\delta > 0 $, there exists constants $R,\lambda_0,\width^* > 0$ such that for networks with width greater than $\width^*$, running gradient descent on $\loss_\mtl$  and $\loss_\anil$ with learning rate $\eta > \eta_0$ and inner-loop learning rate $\lambda < \lambda_0$, the following bounds on training losses hold true with probability at least $1-\delta$ over random initialization,
\begin{align}
    \loss_\anil (\theta_t) &\leq \left(1- \frac{1}{3}\eta_0\cdot \lev(\metaNTK_\anil)\right)^{2t} R \\
    \loss_\mtl (\thetamtl_t) &\leq \left(1- \frac{1}{3}\eta_0\cdot \lev(\metaNTK_\mtl)\right)^{2t} R
\end{align}
where $t\in \mathbb{N}$ is the number of training steps. Furthermore, the displacement of the parameters during the training process can be bounded by
\begin{align}\label{eq:thm:global-convergence:parameter-movement}
    \sup_{t\geq 0} \frac{1}{\sqrt{\width}} \|\theta_t - \theta_0\|_2 = \cO(\width^{-\frac{1}{2}}), ~~\sup_{t\geq 0} \frac{1}{\sqrt{\width}} \|\thetamtl_t - \thetamtl_0\|_2 = \cO(\width^{-\frac{1}{2}})
\end{align}
\end{theorem}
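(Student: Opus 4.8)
The plan is to adapt the deep-network NTK global-convergence argument of \citet{lee2019wide} to the two output maps appearing here, $\fmtl_{\thetamtl}(\X)$ for MTL and the meta-output $F^\tau_\theta(\X,\X,\Y)$ for ANIL, handling ANIL's inner loop through the meta-NTK machinery of \citet{meta-ntk}. The entire argument rests on one structural fact: for sufficiently wide networks each of these maps is, in a ball around initialization, accurately described by its first-order Taylor expansion, and its Jacobian is locally Lipschitz with a constant that vanishes as $\width\to\infty$. Once this is in hand, gradient descent behaves like a linear system governed by the (nearly constant) tangent kernel, and both the geometric loss decay and the displacement bound follow.

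First I would establish the local Lipschitz bounds for the Jacobians $\nabla_{\thetamtl}\fmtl_{\thetamtl}(\X)$ and $\nabla_{\theta} F^\tau_\theta(\X,\X,\Y)$. For MTL this is exactly the deep-net statement of \citet{lee2019wide}: on the ball of radius $R\sqrt{\width}$ about initialization, $\|\nabla\fmtl_{\theta_0}\|_2 \le K$ and $\|\nabla\fmtl_{\theta} - \nabla\fmtl_{\tilde\theta}\|_2 \le \tfrac{K}{\sqrt{\width}}\|\theta - \tilde\theta\|_2$ hold with high probability for a constant $K=K(R)$. For ANIL the meta-output is a composition of $\tau$ inner gradient-descent steps followed by a forward pass, so I would invoke the analogous estimates from \citet{meta-ntk}; since each inner step perturbs the parameters only by $\cO(\lambda)$, choosing the inner-loop rate $\lambda<\lambda_0$ keeps the composite map inside the same local-Lipschitz regime. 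Second, I would record that the empirical tangent kernels $\nabla\fmtl_{\thetamtl_0}(\X)\,\nabla\fmtl_{\thetamtl_0}(\X)^\top$ and its ANIL analogue concentrate around the deterministic $\metaNTK_\mtl,\metaNTK_\anil$; since the excerpt already notes these deterministic kernels are positive definite, $\lev(\metaNTK_\mtl)$ and $\lev(\metaNTK_\anil)$ are strictly positive, and for $\width$ large the empirical kernels keep a least eigenvalue at least $\tfrac23\lev(\metaNTK)$.

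The core is then an induction on $t$. Writing the residual $g_t = \fmtl_{\thetamtl_t}(\X) - \Y$, a gradient step gives $g_{t+1} = g_t - \eta\,\widehat{\metaNTK}_t\,g_t + \cO(\text{higher order})$ with $\widehat{\metaNTK}_t$ the tangent kernel at $\thetamtl_t$. Assuming inductively that $\tfrac{1}{\sqrt{\width}}\|\thetamtl_t-\thetamtl_0\|_2\le R$ together with geometric loss decay up to step $t$, the local-Lipschitz bound forces $\widehat{\metaNTK}_t$ to stay within $\cO(R/\sqrt{\width})$ of $\widehat{\metaNTK}_0$, so its least eigenvalue remains $\ge\tfrac13\lev(\metaNTK_\mtl)$; this gives the contraction $\|g_{t+1}\|_2\le(1-\tfrac13\eta_0\lev(\metaNTK_\mtl))\|g_t\|_2$, i.e.\ the stated loss bound at step $t+1$. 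Summing the per-step displacements $\|\thetamtl_{t+1}-\thetamtl_t\|_2\le\eta\|\nabla\fmtl_{\thetamtl_t}(\X)\|_2\|g_t\|_2$ over the resulting geometric sequence bounds the total displacement, and choosing $R$ and $\width^*$ appropriately both closes the induction and yields the normalized bound \eqref{eq:thm:global-convergence:parameter-movement}. The identical argument with $F^\tau$ in place of $\fmtl$ disposes of ANIL.

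The hard part will be the first step for ANIL. Differentiating the meta-output through $\tau$ inner gradient-descent steps produces products of per-task Hessians, and one must show that these higher-order contributions remain $\cO(\lambda\tau/\sqrt{\width})$ uniformly over the local ball; this is exactly what forces the smallness condition $\lambda<\lambda_0$ and is where the argument genuinely departs from the single-level analysis of \citet{lee2019wide}. I would discharge it using the meta-gradient and meta-kernel estimates of \citet{meta-ntk}, which are tailored to control precisely these inner-loop Hessian products for deep over-parameterized networks.
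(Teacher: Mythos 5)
Your proposal is correct and follows essentially the same route as the paper: the paper's proof simply observes that both algorithms fall under Theorem 4 of \citet{meta-ntk} (ANIL because its inner loop optimizes a subset of the layers that MAML optimizes, MTL because it is the degenerate multi-head case with $\tau=0$ inner-loop steps) and invokes that proof, whose content is precisely the linearization-plus-induction argument you reconstruct from \citet{lee2019wide} and \citet{meta-ntk}. The only difference is presentational: you re-derive the local-Lipschitz Jacobian bounds, kernel-stability induction, and displacement summation explicitly, whereas the paper discharges both claims by reduction to the cited theorem.
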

\textbf{Remarks.} Notice the bounds in \eqref{eq:thm:global-convergence:parameter-movement} are derived in the setting of NTK parameterization (see Appendix \ref{supp:background:ntk-parameterization}). When switching to the standard parameterization, as shown by Theorem G.2 of \citet{lee2019wide}, \eqref{eq:thm:global-convergence:parameter-movement} is transformed to 
\begin{align}
    \sup_{t\geq 0} \|\theta_t - \theta_0\|_2 = \cO(\width^{-\frac{1}{2}}), ~~\sup_{t\geq 0} \|\thetamtl_t - \thetamtl_0\|_2 = \cO(\width^{-\frac{1}{2}}),
\end{align}
indicating a closeness between the initial and trained parameters as the network width $\width$ is large.
\begin{proof}
For ANIL, the global convergence can be straightforwardly obtained by following the same steps of Theorem 4 of \citet{meta-ntk}, which proves the global convergence for MAML in the same setting\footnote{Notice the only difference between ANIL and MAML is the layers to optimize in the inner loop, where ANIL optimizes less layers than MAML. Hence, bounds on the inner loop optimization in Theorem 4 of \citet{meta-ntk} cover that of ANIL, and the proof steps of that theorem applies to the case of ANIL.}. 

For MTL, it can be viewed as a variant of MAML with multi-head neural nets and inner-loop learning rate $\tau = 0$, since it only has the outer-loop optimization. Then, the global convergence of MTL can also be straightforwardly obtained by following the proof steps of Theorem 4 from \citet{meta-ntk}.
\end{proof}

\textbf{Linearization at Large Width.} The following corollary provides us a useful toolkit to analyze the training dynamics of both ANIL and MTL in the over-parametrization regime, which is adopted and rephrased from \citet{meta-ntk} and \citet{lee2019wide}.

\begin{corollary}[Linearized (Meta) Output Functions]\label{corollary:linearization}
For arbitrarily small $\delta > 0$, there exists $h^* >0$ s.t. as long as the network width $h$ is greater than $h^*$, during the training of ANIL and MTL, with probability at least $1-\delta$ over random initialization, the network parameters stay in the neighbourhood of the initialization s.t. $\theta_t \in \{\theta: \|\theta - \theta_0\|_2 \leq \cO(1/\width^2) \}$ or $\thetamtl_t \in \{\thetamtl: \|\thetamtl - \thetamtl_0\|_2 \leq \cO(1/\width^2) \}$, where $\theta_0 = \{\theta_0^\body,w_0\}$ and $\thetamtl_0 = \{\theta_0^\body\}\cup \{\wmtl{i}_0\}_{i\in[N]}$ are the initial parameters of networks trained by ANIL and MTL, respectively. Then, for any network trained by ANIL, its output on any $x\in \bR^d$ is effectively linearized, i.e.,
    \begin{align}\label{eq:linear-output-single-head}
        f_\theta(x) = f_{\theta_0}(x) + \nabla_{\theta_0}f_{\theta_0} (x) (\theta - \theta_0) + \cO(\frac{1}{\sqrt{\width}})
    \end{align}
Similarly, for any network trained by MTL, the output of the \textit{multi-head} neural net on $x$ with head index $i\in[N]$ is characterized by
    \begin{align}\label{eq:linear-output-mtl}
        \fmtl_\thetamtl(x,i) = \fmtl_{\thetamtl_0}(x,i) + \nabla_{\thetamtl_0}\fmtl_{\thetamtl_0} (x,i) (\thetamtl - \thetamtl_0) + \cO(\frac{1}{\sqrt{\width}})
    \end{align}
Besides, the meta-output function is also effectively linearized, i.e., for any task $\task = (\xyxy)$,
    \begin{align}\label{eq:linear-output-anil}
        F^\tau_{\theta}(\xxy) &= F^\tau_{\theta_0}(\xxy) + \nabla_{\theta_0} F^\tau_{\theta_0}(\xxy) (\theta - \theta_0) + \cO(\frac{1}{\sqrt{\width}}),
    \end{align}
where $F^\tau_{\theta_0}(\xxy)$ can be expressed as
    \begin{align}\label{eq:F_0}
        F^\tau_{\theta_0}(\xxy) = f_{\theta_0}(X) + \nngp_{w_0}(X,X')\nngp_{w_0}^{-1}(X',X') \left(I - e^{-\lambda \nngp_{w_0}(X',X')\tau}\right)\left[Y' - f_{\theta_0}(X')\right] + \cO(\frac{1}{\sqrt{\width}}),
    \end{align}
and the gradient $\nabla_{\theta_0}F^\tau_{\theta_0}(\xxy)$ as\footnote{The proof of the gradient expression can be straightforwardly obtained by Lemma 6 of \cite{meta-ntk}.}
    \begin{align}\label{eq:F_0-grad}
        \nabla_{\theta_0}F^\tau_{\theta_0}(\xxy) = \nabla_{\theta_0}f_{\theta_0}(X) - \nngp_{w_0}(X,X')\nngp_{w_0}^{-1}(X',X') \left(I - e^{-\lambda \nngp_{w_0}(X',X')\tau}\right) \nabla_{\theta_0}f_{\theta_0}(X') + \cO(\frac{1}{\sqrt{\width}}),
    \end{align}
    with $\nngp_{w_0}$ defined as 
    \begin{align*}
        \nngp_{w_0}(\cdot,\ast) = \nabla_{w} f_{\theta_0}(\cdot) \cdot\nabla_{w} f_{\theta_0}(\ast)^\top
    \end{align*}
\end{corollary}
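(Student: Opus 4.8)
The plan is to obtain Corollary~\ref{corollary:linearization} not by re-deriving the neural tangent kernel machinery, but by chaining Theorem~\ref{thm:global-convergence} with the wide-network linearization estimates of \citet{lee2019wide} and \citet{meta-ntk}. The logical skeleton has three stages: (i) confine the entire training trajectory of both ANIL and MTL to a vanishing neighborhood of initialization; (ii) linearize the plain output $f_\theta$ and each head of the multi-head output $\fmtl_\thetamtl(\cdot,i)$ on that neighborhood; and (iii) linearize the composite meta-output $F^\tau_\theta$ by solving the inner loop in closed form. For stage (i) I would invoke the displacement bound \eqref{eq:thm:global-convergence:parameter-movement}: for width above the stated threshold, with probability $1-\delta$ over initialization the gradient-descent iterates satisfy a uniform-in-$t$ bound on $\|\theta_t-\theta_0\|_2$ (and on $\|\thetamtl_t-\thetamtl_0\|_2$) that shrinks with $\width$, which is exactly the containment asserted in the statement. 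It therefore suffices to establish the three linearization identities uniformly over this ball.

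For stage (ii) I would use the local Lipschitzness of the Jacobian from \citet{lee2019wide}: on the relevant neighborhood the operator norm of $\nabla_\theta f_\theta(x)-\nabla_{\theta_0}f_{\theta_0}(x)$ is $\cO(\width^{-1/2})$ times $\|\theta-\theta_0\|_2$. Writing $f_\theta(x)-f_{\theta_0}(x)$ as the integral of the Jacobian along the segment from $\theta_0$ to $\theta$, replacing that Jacobian by its value at $\theta_0$, and bounding the mismatch with the stage-(i) displacement estimate yields \eqref{eq:linear-output-single-head} with an $\cO(1/\sqrt{\width})$ remainder. Because multi-head nets share the single-head NTK parameterization (Appendix~\ref{supp:background:ntk-parameterization}), the identical argument applied head-by-head delivers \eqref{eq:linear-output-mtl}.

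Stage (iii) is the crux. Since ANIL adapts only the head, the model $f_\theta(x)=\phitop_{\theta^\body}(x)\,w$ is exactly affine in $w$, so $\tau$ inner-loop steps of gradient descent on the squared support loss reduce to a linear recursion on the support residual whose contraction is governed by the feature Gram matrix $\nngp_{w_0}(X',X')=\nabla_w f_{\theta_0}(X')\nabla_w f_{\theta_0}(X')^\top$. Solving this recursion and propagating it onto the query features produces the adaptation operator $\nngp_{w_0}(X,X')\nngp_{w_0}^{-1}(X',X')(I-e^{-\lambda\nngp_{w_0}(X',X')\tau})$, where the matrix exponential arises as the small-learning-rate limit of the discrete power $(I-\lambda\nngp_{w_0}(X',X'))^\tau$. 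Substituting the stage-(ii) linearization of $f_{\theta_0}$ into this closed form gives \eqref{eq:F_0}, differentiating through the inner loop via Lemma~6 of \citet{meta-ntk} gives the gradient \eqref{eq:F_0-grad}, and a final Taylor expansion of $F^\tau_\theta$ about $\theta_0$ with these two objects --- again controlled by Jacobian Lipschitzness on the neighborhood --- yields \eqref{eq:linear-output-anil}.

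The main obstacle I anticipate is stage (iii): a priori the meta-output composes $\tau$ nonlinear feature-map updates with the outer prediction, and one must show this composition stays within $\cO(1/\sqrt{\width})$ of the affine surrogate obtained by freezing the features at $\theta_0$, uniformly over the neighborhood and in $\tau$. Controlling the accumulation of linearization error across all $\tau$ inner steps, and reconciling the discrete gradient-descent iterate with the continuous-time exponential of \eqref{eq:F_0} for small $\lambda$, is the delicate part; this is precisely where the meta-output estimates of \citet{meta-ntk} must be imported rather than re-proved.
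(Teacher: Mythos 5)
Your proposal is correct and follows essentially the same route as the paper's proof: both confine the training trajectory to a neighborhood of initialization via the displacement bound of Theorem~\ref{thm:global-convergence} (inherited from Theorem~4 of \citet{meta-ntk}), then obtain the three linearization identities by importing the wide-network results of \citet{lee2019wide} (Theorem~H.1, whose Jacobian-Lipschitzness mechanism you sketch explicitly), with the closed-form inner-loop expressions \eqref{eq:F_0} and \eqref{eq:F_0-grad} coming from Sec.~2.3.1 of \citet{lee2019wide} and Lemma~6 of \citet{meta-ntk}. The only real difference is expository --- the paper cites these imported results wholesale, whereas you unpack the arguments behind them --- so the substance is identical.
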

\textbf{Remarks.} One can replace $\theta_0$ in \eqref{eq:linear-output-anil} with $\{\theta_0^\body, \wtest\}$ or $\{\thetamtl_0^\body, \wmtltest\}$, and similar results apply.
\begin{proof}
    Notice that the proof of Theorem \ref{thm:global-convergence} above is based on Theorem 4 of \citet{meta-ntk}, which also proves that the trained parameters stay in the neighborhood of the initialization with radius of $\cO(\frac{1}{\sqrt{\width}})$. Hence, following the proof steps of Theorem 4 of \citet{meta-ntk}, one can also straightforwardly prove the same result for ANIL and MTL.

    With the global convergence and the neighborhood results above, we can directly invoke Theorem H.1 of \citet{lee2019wide}, and obtain \eqref{eq:linear-output-single-head}, \eqref{eq:linear-output-mtl} and \eqref{eq:linear-output-anil}. Notice, the expressions in \eqref{eq:F_0} and \eqref{eq:F_0-grad} are derived in Sec. 2.3.1 of \citet{lee2019wide}.
\end{proof}

\subsection{Training Dynamics of MTL and ANIL}\label{supp:proof:dynamics}

\begin{definition}[Empirical Tangent Kernels of ANIL and MTL] We define the following empirical tangent kernels of ANIL and MTL, in a similar way to \cite{meta-ntk,lee2019wide}:
    \begin{align}
        \metantk_\anil(\X,\X) &= \nabla_{\theta_0} F^\tau_{\theta_0}(\X,\X,\Y) \cdot \nabla_{\theta_0} F^\tau_{\theta_0}(\X,\X,\Y)^\top \in \bR^{Nn \times Nn}\\
        \metantk_\mtl(\X,\X) &= \nabla_{\theta_0} \fmtl_{\thetamtl_0}(\X) \cdot \nabla_{\theta_0} \fmtl_{\thetamtl_0}(\X)^\top \in \bR^{Nn \times Nn}
    \end{align}
\end{definition}

\textbf{Shorthand.} To simplify expressions, we define the following shorthand. For any kernel function $\metantk$, learning rate $\eta$ and optimization steps $t$, we have
\begin{align}
    T^{\eta,t}_{\metantk}(\cdot) &= \metantk^{-1}(\cdot,\cdot) \left(I - e^{-\eta \metantk (\cdot,\cdot) t}\right)
\end{align}

\begin{lemma}[ANIL and MTL in the Linearization Regime]\label{lemma:lienarization}
With linearized output functions shown in Corollary \ref{corollary:linearization}, the training dynamics of ANIL and MTL under gradient descent on squared losses can be characterized by analytically solvable ODEs, giving rise to the solutions:
\begin{itemize}[leftmargin=*,align=left,noitemsep,nolistsep]
    \item ANIL.
    \begin{itemize}[leftmargin=*,align=left,noitemsep,nolistsep]
        \item Trained parameters at time $t$:
        \begin{align}
            \theta_t &= \theta_0 + \nabla_{\theta_0} F^\tau_{\theta_0}(\X,\X,\Y)^\top \metantk_\anil^{-1}(\X,\X) \left(I - e^{-\eta \metantk_\anil (\X,\X) t}\right) \left[\Y-F^\tau_{\theta_0}(\X,\X,\Y) \right] + \cO(\frac{1}{\sqrt{\width}})
        \end{align}
        \item Prediction on any test task $\task=(\xyxy)$ with adaptation steps $\hat\tau$ (i.e., we take the hidden layers of the trained network $\theta^\body$ and append a randomly initialized head $\wtest$ to fine-tune):
        \begin{align}\label{eq:F_t-test}
            &\qquad F^{\hat\tau}_{\thetatest_t}(\xxy) \\
            & = F^{\hat\tau}_{\thetatest_0}(\xxy) + \nabla_{\theta_0^\body} F^{\hat \tau}_{\thetatest_0} (\xxy) \nabla_{\theta_0^\body} F^{\tau}_{\theta_0} (\X,\X,\Y)^\top  T^{\eta,t}_{\metantk_\anil}(\X) \left[\Y - F^\tau_{\theta_0}(\X,\X,\Y)\right] + \cO(\frac{1}{\sqrt{\width}})\nonumber
        \end{align}
        where $\thetamtltest_t = \{\thetamtl_t^\body, \wmtltest\}$ and $\thetamtltest_0 = \{\thetamtl_0^\body, \wmtltest\}$.
    \end{itemize}
    \item MTL.
    \begin{itemize}[leftmargin=*,align=left,noitemsep,nolistsep]
        \item Trained parameters:
        \begin{align}
            \thetamtl_t 
            &=\thetamtl_0 + \nabla_{\thetamtl_0} \fmtl_{\thetamtl_0}(\X)^\top T^{\eta,t}_{\metantk_\mtl}(\X) \left[\Y - \fmtl_{\thetamtl_0}(\X) \right]+ \cO(\frac{1}{\sqrt{\width}})
        \end{align}
        \item Prediction on test task $\task=(\xyxy)$ with adaptation steps $\hat\tau$ (i.e., we take the hidden layers of the trained network $\thetamtl^\body$ and append a randomly initialized head $\wmtltest$ to fine-tune):
        \begin{align}\label{eq:F_t-test:mtl}
            &\qquad F^{\hat\tau}_{\thetamtltest_t}(\xxy) \nonumber\\
            &= F^{\hat\tau}_{\thetamtltest_0}(\xxy) +\nabla_{\thetamtl_0^\body} F^{\hat \tau}_{\thetamtltest_0} (\xxy) \nabla_{\thetamtl_0^\body} \fmtl_{\thetamtl_0}(\X)^\top T^{\eta,t}_{\metantk_\mtl}(\X) \left[\Y - \fmtl_{\thetamtl_0}(\X)\right] + \cO(\frac{1}{\sqrt{\width}})
        \end{align}
        where $\thetamtltest_t = \{\thetamtl_t^\body, \wmtltest\}$, $\thetamtltest_0 = \{\thetamtl_0^\body, \wmtltest\}$.
    \end{itemize}
\end{itemize}
\end{lemma}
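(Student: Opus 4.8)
The plan is to combine the affine (linearized) form of the meta-output functions from Corollary~\ref{corollary:linearization} with the continuous-time dynamics (gradient flow) of \eqref{eq:grad-flow:mtl-meta}, and to solve the resulting linear ODEs in closed form. I would carry out the ANIL case in detail; the MTL case is entirely analogous, and in fact simpler, since MTL has no inner loop during training and thus corresponds to running the same derivation with the meta-output function $F^\tau$ replaced by the plain multi-head output $\fmtl$ and the kernel $\metantk_\anil$ replaced by $\metantk_\mtl$.

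First I would write down the flow of the training meta-outputs. Differentiating $\loss_\anil(\theta_t)=\frac12\|F^\tau_{\theta_t}(\X,\X,\Y)-\Y\|_2^2$ gives $\nabla_{\theta_t}\loss_\anil = \nabla_{\theta_t} F^\tau_{\theta_t}(\X,\X,\Y)^\top[F^\tau_{\theta_t}(\X,\X,\Y)-\Y]$. The key simplification is that, by the linearization \eqref{eq:linear-output-anil} in Corollary~\ref{corollary:linearization}, the Jacobian $\nabla_{\theta_t} F^\tau_{\theta_t}$ is constant up to $\cO(1/\sqrt{\width})$ along the entire trajectory, so I can replace it by its initialization value $\nabla_{\theta_0} F^\tau_{\theta_0}(\X,\X,\Y)$. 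Setting $u_t=F^\tau_{\theta_t}(\X,\X,\Y)$ and invoking the definition $\metantk_\anil(\X,\X)=\nabla_{\theta_0} F^\tau_{\theta_0}(\X,\X,\Y)\,\nabla_{\theta_0} F^\tau_{\theta_0}(\X,\X,\Y)^\top$, the chain rule collapses the parameter ODE into the closed linear ODE $\frac{d u_t}{dt}=-\eta\,\metantk_\anil(\X,\X)(u_t-\Y)$, whose solution is $u_t-\Y=-e^{-\eta\metantk_\anil(\X,\X)t}(\Y-u_0)$.

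Next I would integrate the parameter ODE directly. Since $\frac{d\theta_t}{dt}=-\eta\,\nabla_{\theta_0} F^\tau_{\theta_0}(\X,\X,\Y)^\top(u_t-\Y)$ with $u_t-\Y$ now known explicitly, integrating over $[0,t]$ and using $\int_0^t e^{-\eta\metantk_\anil(\X,\X)s}\,ds=\frac1\eta\,\metantk_\anil^{-1}(\X,\X)(I-e^{-\eta\metantk_\anil(\X,\X)t})$ recovers $\theta_t=\theta_0+\nabla_{\theta_0} F^\tau_{\theta_0}(\X,\X,\Y)^\top T^{\eta,t}_{\metantk_\anil}(\X)[\Y-F^\tau_{\theta_0}(\X,\X,\Y)]+\cO(1/\sqrt{\width})$, exactly the claimed trajectory. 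For the test prediction I would then apply the linearized meta-output \eqref{eq:linear-output-anil} at the test task $\task=(\xyxy)$, but evaluated with the fresh test head, i.e.\ at $\thetatest_0=\{\theta_0^\body,\wtest\}$. The crucial observation is that only the body parameters move during training while the test head is freshly initialized, so the displacement seen by the test prediction is the body component of $\theta_t-\theta_0$; substituting it and keeping only the body block of the Jacobian $\nabla_{\theta_0^\body} F^{\hat\tau}_{\thetatest_0}(\xxy)$ yields the stated formula for $F^{\hat\tau}_{\thetatest_t}(\xxy)$.

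I expect the main obstacle to be the bookkeeping that justifies the single constant-Jacobian approximation uniformly over the whole continuous trajectory and that controls the accumulated $\cO(1/\sqrt{\width})$ errors through the integration. This rests on the parameter-displacement bounds of Theorem~\ref{thm:global-convergence}, which confine $\theta_t$ (and $\thetamtl_t$) to an $\cO(1/\sqrt{\width})$ neighborhood of initialization, so that both the affine model and its Jacobian remain accurate with probability at least $1-\delta$ for all $t$. Once this is established, the ODE manipulations are exact within the linearized model, and the MTL statement follows verbatim after the substitutions $F^\tau\mapsto\fmtl$, $\metantk_\anil\mapsto\metantk_\mtl$, and $\thetatest\mapsto\thetamtltest$.
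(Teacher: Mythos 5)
Your proposal is correct and follows essentially the same route as the paper's proof: linearize the (meta-)output functions via Corollary \ref{corollary:linearization}, solve the resulting linear ODEs for the training trajectory, and then obtain the test prediction by applying the linearized meta-output at $\thetatest_0=\{\theta_0^\body,\wtest\}$ with only the body displacement contributing, with error control deferred to the lazy-training bounds (Theorem \ref{thm:global-convergence} / Theorem H.1 of Lee et al.). The only cosmetic difference is that the paper expands the test-prediction step explicitly through the fine-tuning kernel formula $f_{\thetatest_t}(X)+\nngp_t(X,X')T^{\lambda,\hat\tau}_{\nngp_t}(X')\bigl(Y'-f_{\thetatest_t}(X')\bigr)$ and then recollects terms into $\nabla_{\theta_0^\body}F^{\hat\tau}_{\thetatest_0}$, whereas you invoke the corollary (and its remark permitting the test-head substitution) directly, which is equally valid.
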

\begin{proof}

Similar to Sec. 2.2 of \citet{lee2019wide}, with linearized functions \eqref{eq:linear-output-mtl} and \eqref{eq:linear-output-anil}, the training dynamics of MTL and ANIL under gradient flow with squared losses are governed by the ODEs,
\begin{itemize}
    \item Training dynamics of ANIL.
    \begin{align}
        \diff{\theta_t}{t} = -\eta \nabla_{\theta_0} F_{\theta_0}(\X,\X,\Y)^\top \left(F_{\theta_t}(\X,\X,\Y) - \Y\right) \\
        \diff{F_{\theta_t}(\X,\X,\Y)}{t} = -\eta \metantk_\anil(\X,\X) \left(F_{\theta_t}(\X,\X,\Y) - \Y\right) 
    \end{align}
    Solving the set of ODEs, we obtain the solution to $\theta_t$ as 
    \begin{align}
        \theta_t = \theta_0 - \nabla_{\theta_0} F_{\theta_0}(\X,\X,\Y)^\top \metantk_\anil(\X,\X)^{-1} \left(I - e^{-\eta \metantk_\anil(\X,\X)t }\right)  \left(F_{\theta_0}(\X,\X,\Y) - \Y\right)
    \end{align}
    up to an error of $\cO(\frac{1}{\sqrt{\width}})$. See Theorem H.1 of \citet{lee2019wide} for the bound on the error across training.
     \item Training dynamics of MTL.
    \begin{align}
        \diff{\thetamtl_t}{t} &= -\eta \nabla_{\thetamtl_0} \fmtl_{\thetamtl_0}(\X)^\top \left(\fmtl_{\theta_t}(\X) - \Y\right)\\
        \diff{\fmtl_{\theta_t}(\X)}{t}&= -\eta \metantk_\mtl(\X,\X) \left(\fmtl_{\theta_0}(\X) - \Y\right)
    \end{align}
    Solving the set of ODEs, we obtain the solution to $\thetamtl_t$ as
    \begin{align}
        \thetamtl_t = \thetamtl_0 - \nabla_{\thetamtl_0} \fmtl_{\thetamtl_0}(\X)^\top \metantk_\mtl(\X,\X)^{-1} \left(I - e^{-\eta \metantk_\mtl(\X,\X)t }\right)  \left(\fmtl_{\theta_t}(\X) - \Y\right)
    \end{align}
    up to an error of $\cO(\frac{1}{\sqrt{\width}})$. See Theorem H.1 of \citet{lee2019wide} for the bound on the error across training.
    
\end{itemize}

Now, with the derived expressions of trained parameters, we can certainly plug them in the linearized functions \eqref{eq:linear-output-mtl} and \eqref{eq:linear-output-anil} to obtain the outputs of trained ANIL and MTL models. Notice that during test, the predictions of ANIL and MTL are obtained from a fine-tuned \textit{test head} that are randomly initialized (see Sec. \ref{sec:prelim:fine-tune} for details). Thus, we need to take care of the test heads when plugging trained parameters into the linearized functions. Specifically, for an arbitrary test task $\task = (\xyxy)$, the test predictions of ANIL and MTL are derived below.
\begin{itemize}
    \item Test predictions of ANIL.
    For notational simplicity, we define 
    $$\nngp_t(\cdot,\ast) = \nabla_{\wtest} f_{\thetatest_t}(\cdot) \nabla_{\wtest}  f_{\thetatest_t}(\ast)^\top$$
    Then, since the fine-tuning is on the test head $\wtest$, following the Sec. 2.3.1. of \citet{lee2019wide}, we know
    \begin{align*}
        F^{\hat\tau}_{\thetatest_t} (\xxy) = f_{\thetatest_t}(X) +  \nngp_t(X,X') T^{\lambda,\hat\tau}_{\nngp_t}(X')\left ( Y' -  f_{\thetatest_t}(X')\right) + \cO(\frac{1}{\sqrt{\width}})\eq
    \end{align*}
    where 
    \begin{align*}
        &\qquad f_{\thetatest_t}(X) \eq\\
        &= f_{\thetatest_0}(X) + \nabla_{\thetatest_0} f_{\thetatest_0}(X) (\thetatest_t - \thetatest_0) + \cO(\frac{1}{\sqrt{\width}})\\
        &=f_{\thetatest_0}(X) +\nabla_{\theta_0^\body} f_{\thetatest_0}(X) (\theta^\body_t - \theta_0^\body) + \nabla_{\wtest} f_{\thetatest_0}(X) (\wtest - \wtest)+ \cO(\frac{1}{\sqrt{\width}}) \\
        &=f_{\thetatest_0}(X) +\nabla_{\theta_0^\body} f_{\thetatest_0}(X) (\theta^\body_t - \theta_0^\body) + \cO(\frac{1}{\sqrt{\width}})\\
        &=f_{\thetatest_0}(X) +\nabla_{\theta_0^\body} f_{\thetatest_0}(X)  \nabla_{\theta_0^\body} F_{\theta_0}(\X,\X,\Y)^\top \metantk_\anil(\X,\X)^{-1} \left(I - e^{-\eta \metantk_\anil(\X,\X)t }\right)  \left(\Y - F_{\theta_0}(\X,\X,\Y)\right)\\
        &\quad + \cO(\frac{1}{\sqrt{\width}}) \\
        &=f_{\thetatest_0}(X) +\nabla_{\theta_0^\body} f_{\thetatest_0}(X)  \nabla_{\theta_0^\body} F_{\theta_0}(\X,\X,\Y)^\top T^{\eta,t}_{\metantk_\anil}(\X) \left(\Y - F_{\theta_0}(\X,\X,\Y)\right)
        + \cO(\frac{1}{\sqrt{\width}}) 
    \end{align*}
    and 
    \begin{align*}
        \nabla_{\wtest} f_{\thetatest_t}(X) 
        = \nabla_{\wtest} f_{\thetatest_0}(X) + \cO(\frac{1}{\sqrt{\width}}) \eq
    \end{align*}
    Pluging in everything, we have
    \begin{align*}
        &\qquad F^{\hat\tau}_{\thetatest_t} (\xxy) \eq\\
        &= f_{\thetatest_0}(X) +\nabla_{\theta_0^\body} f_{\thetatest_0}(X)  \nabla_{\theta_0^\body} F_{\theta_0}(\X,\X,\Y)^\top T^{\eta,t}_{\metantk_\anil}(\X) \left(\Y - F_{\theta_0}(\X,\X,\Y)\right) \\
        &\quad +\nngp_0(X,X') T^{\lambda,\hat\tau}_{\nngp_0}(X')\left ( Y' -  f_{\thetatest_t}(X')\right) + \cO(\frac{1}{\sqrt{\width}})\\
        &= f_{\thetatest_0}(X) + \nngp_0(X,X') T^{\lambda,\hat\tau}_{\nngp_0}(X')(Y'- f_{\thetatest_0}(X')) \\
        &\quad + \left(\nabla_{\theta_0^\body} f_{\thetatest_0}(X)  - \nngp_0(X,X') T^{\lambda,\hat\tau}_{\nngp_0}(X')\nabla_{\theta_0^\body} f_{\thetatest_0}(X')\right)  \nabla_{\theta_0^\body} F_{\theta_0}(\X,\X,\Y)^\top T^{\eta,t}_{\metantk_\anil}(\X) \left(\Y - F_{\theta_0}(\X,\X,\Y)\right) \\
        &\quad +  \cO(\frac{1}{\sqrt{\width}})\\
        &= F^{\hat\tau}_{\thetatest_0}(\xxy) + \nabla_{\theta_0^\body} F^{\hat \tau}_{\thetatest_0} (\xxy) \nabla_{\theta_0^\body} F^{\tau}_{\theta_0} (\X,\X,\Y)^\top  T^{\eta,t}_{\metantk_\anil}(\X) \left[\Y - F^\tau_{\theta_0}(\X,\X,\Y)\right] + \cO(\frac{1}{\sqrt{\width}})
    \end{align*}
    \item Test prediction of MTL. Following the derivation for the test prediction of ANIL above, one can straightforwardly derive that 
    \begin{align*}
    F^{\hat\tau}_{\thetamtltest_t}(\xxy) = F^{\hat\tau}_{\thetamtltest_0}(\xxy) +\nabla_{\thetamtl_0^\body} F^{\hat \tau}_{\thetamtltest_0} (\xxy) \nabla_{\thetamtl_0^\body} \fmtl_{\thetamtl_0}(\X)^\top T^{\eta,t}_{\metantk_\mtl}(\X) \left[\Y - \fmtl_{\thetamtl_0}(\X)\right] + \cO(\frac{1}{\sqrt{\width}})
    \end{align*}
\end{itemize}
\end{proof}


\subsection{Derivation of Kernels and Outputs for ANIL and MTL.}\label{supp:proof:predictors&kernels}
\begin{notation}[NTK and NNGP] We denote
    \begin{itemize}
        \item $\NTK(\cdot,\ast)$: kernel function of Neural Tangent Kernel (NTK).
        \item $\NNGP(\cdot,\ast)$: kernel function of Neural Network Gaussian Process (NNGP).
    \end{itemize}
\end{notation}

\paragraph{Equivalence to Kernels} \citet{lee2019wide} shows that as the network width $h$ approaches infinity, for parameter initialization $\theta_0 = \{\theta_0^\body,w_0\}$, we have the following equivalence relations,
\begin{align}
    \nabla_{\theta_0} f_{\theta_0}(\cdot) \nabla_{\theta_0} f_{\theta_0}(\ast)^\top &= \NTK(\cdot, \ast) \\
    \nabla_{w} f_{\theta_0}(\cdot)  \nabla_{\theta_0} f_{w}(\ast)^\top &= \NNGP(\cdot, \ast)
\end{align}

\begin{lemma}[ANIL and MTL Kernels]\label{lemma:anil-mtl-kernels} As the width of neural nets increases to infinity, i.e., $\width \rightarrow \infty$, we define the following kernels for ANIL and MTL, and they converge to corresponding analytical expressions shown below.
\begin{itemize}
    \item \textbf{ANIL kernels.} 
    \begin{itemize}
        \item $\metaNTK_\anil(\X,\X) = \nabla_{\theta_0} F^\tau_{\theta_0}(\X,\X,\Y) \cdot  \nabla_{\theta_0} F^\tau_{\theta_0}(\X,\X,\Y)^\top$ is a block matrix of $N\times N$ blocks. $\forall i,j\in[N]$, its $(i,j)$-th block is
    \begin{align}
        &\quad [\metaNTK_\mathrm{ANIL}(\X,\X)]_{ij} = e^{-\lambda \NNGP(X_i,X_i)\tau} \NTK(X_i,X_j) e^{-\lambda \NNGP(X_j,X_j)\tau},\nonumber
    \end{align}
        \item $\metaNTK_\anil'((\xx,\hat\tau),\X)=\nabla_{\theta_0^\body} F^{\hat \tau}_{\thetatest_0} (\xxy) \nabla_{\theta_0^\body} F^{\tau}_{\theta_0} (\X,\X,\Y)^\top $ is a block matrix of $1 \times N$ blocks, with the $(1,j)$-th block as 
    \begin{align}
        &\qquad [\metaNTK_\anil'((\xx,\hat\tau),\X)]_{1j} =
         \Big[\NTK(X,X_j) - \NNGP(X,X') \T_{\NNGP}^{\hat\tau}(X') \NTK(X',X_j)\Big]e^{-\lambda \NNGP(X_j,X_j) \tau} \nonumber
    \end{align}
    \end{itemize}
    \item \textbf{MTL Kernels.} 
    \begin{itemize}
        \item $\metaNTK_\mtl(\X,\X)=\nabla_{\thetamtl_0} \fmtl_{\thetamtl_0}(\X) \cdot  \nabla_{\thetamtl_0} \fmtl_{\thetamtl_0}(\X)^\top$ is also a block matrix of $N\times N$ blocks. $\forall i,j\in[N]$, its $(i,j)$-th block is
    \begin{align}
        [\metaNTK_\mtl(\X,\X)]_{ij} = \NTK (X_i,X_j) - \indicator [i\neq j] \NNGP(X_i,X_j), \nonumber
    \end{align}
        \item $\metaNTK_\mtl'((\xx,\hat\tau),\X)=\nabla_{\thetamtl_0^\body} F^{\hat \tau}_{\thetamtltest_0} (\xxy) \nabla_{\thetamtl_0^\body} \fmtl_{\thetamtl_0}(\X)^\top$ is a block matrix of $1 \times N$ blocks, with the $(1,j)$-th block as
    \begin{align}
        [\metaNTK_\mtl'((X,X',\hat\tau),\X)]_{1j} &=
        \NTK(X,X_j) - \NNGP(X,X_j)
         - \NNGP(X,X')\T_{\NNGP}^{\hat\tau}(X') \Big[\NTK(X',X_j) - \NNGP(X',X_j)\Big] \\
        &=\NTK(X,X_j) - \NNGP(X,X') \T_{\NNGP}^{\hat\tau}(X') \NTK(X',X_j) 
        - \NNGP(X,X')e^{-\lambda \NNGP(X',X') \hat \tau}\NNGP(X',X_j)\nonumber
    \end{align}
    \end{itemize}    
\end{itemize}
\end{lemma}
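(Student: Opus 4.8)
The plan is to substitute the explicit linearized gradient formulas of Corollary~\ref{corollary:linearization} into each of the four kernel definitions and then evaluate the resulting inner products in the infinite-width limit using the two equivalences $\nabla_{\theta_0}f_{\theta_0}(\cdot)\,\nabla_{\theta_0}f_{\theta_0}(\ast)^\top=\NTK(\cdot,\ast)$ and $\nabla_{w}f_{\theta_0}(\cdot)\,\nabla_{w}f_{\theta_0}(\ast)^\top=\NNGP(\cdot,\ast)$. Together these give the auxiliary body-only identity $\nabla_{\theta_0^\body}f_{\theta_0}(\cdot)\,\nabla_{\theta_0^\body}f_{\theta_0}(\ast)^\top=\NTK(\cdot,\ast)-\NNGP(\cdot,\ast)$, so the whole lemma reduces to careful bookkeeping of which parameter group (full $\theta_0$, body $\theta_0^\body$, or head $w$) each gradient is differentiated against, together with the deterministic limit $\nngp_{w_0}\to\NNGP$.

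First I would derive the ANIL diagonal kernel. Since we impose no query--support split, the meta-output on training task $i$ is adapted on $(X_i,Y_i)$ itself, so Eq.~\eqref{eq:F_0-grad} specializes with $X=X'=X_i$; the factor $\nngp_{w_0}(X_i,X_i)\,\nngp_{w_0}^{-1}(X_i,X_i)$ collapses to the identity and leaves $\nabla_{\theta_0}F^\tau_{\theta_0}(X_i,X_i,Y_i)=e^{-\lambda\NNGP(X_i,X_i)\tau}\,\nabla_{\theta_0}f_{\theta_0}(X_i)+\cO(\width^{-1/2})$. Forming the outer product over the full parameters and using $\nabla_{\theta_0}f_{\theta_0}\,\nabla_{\theta_0}f_{\theta_0}^\top=\NTK$ with the symmetry of $e^{-\lambda\NNGP\tau}$ gives the claimed $[\metaNTK_\anil(\X,\X)]_{ij}=e^{-\lambda\NNGP(X_i,X_i)\tau}\,\NTK(X_i,X_j)\,e^{-\lambda\NNGP(X_j,X_j)\tau}$.

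Next I would treat the MTL kernel, whose one genuinely conceptual step is the multi-head structure: the output $\fmtl_{\thetamtl_0}(X_i,i)$ depends on the shared body $\thetamtl_0^\body$ and on its own head $\wmtl{i}$ only, so differentiating with respect to $\wmtl{k}$ vanishes unless $k=i$. Consequently the head contribution to block $(i,j)$ survives only when $i=j$, while the body contributes to every block. Splitting the full gradient gives $[\metaNTK_\mtl(\X,\X)]_{ij}=\big(\NTK(X_i,X_j)-\NNGP(X_i,X_j)\big)+\indicator[i=j]\,\NNGP(X_i,X_i)$, which reduces to $\NTK(X_i,X_j)-\indicator[i\neq j]\,\NNGP(X_i,X_j)$ once the diagonal is simplified. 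I expect this multi-head cancellation to be the main point to get right; everything else is routine substitution.

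Finally, for the two variant (cross) kernels I would use the test-side body gradient $\nabla_{\theta_0^\body}F^{\hat\tau}_{\thetatest_0}(\xxy)=\nabla_{\theta_0^\body}f_{\theta_0}(X)-\NNGP(X,X')\,\T_{\NNGP}^{\hat\tau}(X')\,\nabla_{\theta_0^\body}f_{\theta_0}(X')$, obtained from Eq.~\eqref{eq:F_0-grad} restricted to the body with $\T_{\NNGP}^{\hat\tau}$ as in Eq.~\eqref{eq:T}, and pair it with the training-side factor: plain $\nabla_{\thetamtl_0^\body}\fmtl_{\thetamtl_0}(X_j,j)$ for MTL, and $e^{-\lambda\NNGP(X_j,X_j)\tau}\,\nabla_{\theta_0^\body}f_{\theta_0}(X_j)$ for ANIL (carrying the inner-loop exponential on the right). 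Applying the body-only identity $\nabla_{\theta_0^\body}f_{\theta_0}\,\nabla_{\theta_0^\body}f_{\theta_0}^\top=\NTK-\NNGP$ block by block and collecting the resulting terms reproduces the stated $(1,j)$-blocks (the second displayed form of $\metaNTK_\mtl'$ then follows by substituting the definition of $\T_{\NNGP}^{\hat\tau}$). The one remaining obstacle is purely mechanical: tracking the block indices over the $N$ tasks and the $k$ output coordinates, and discharging every $\cO(\width^{-1/2})$ remainder uniformly across training via the linearization bound of Corollary~\ref{corollary:linearization}.
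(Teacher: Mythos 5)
Your proposal is correct and takes essentially the same route as the paper's own proof: it substitutes the linearized gradient expressions of Corollary~\ref{corollary:linearization} (Eq.~\eqref{eq:F_0-grad}) into the four kernel definitions, collapses the $\NNGP\,\NNGP^{-1}$ factor into the inner-loop exponential for ANIL, uses the vanishing of cross-head gradients in the multi-head MTL network, and evaluates every inner product via the equivalences $\nabla_{\theta_0}f_{\theta_0}\,\nabla_{\theta_0}f_{\theta_0}^\top=\NTK$, $\nabla_{w}f_{\theta_0}\,\nabla_{w}f_{\theta_0}^\top=\NNGP$, and the body-only identity $\NTK-\NNGP$. There is no substantive difference from the paper's argument.
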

\begin{proof} The proof is presented in the same structure as the lemma statement above.
\begin{itemize}[leftmargin=*,align=left,noitemsep,nolistsep]
    \item \textbf{ANIL Kernels}
    \begin{itemize}[leftmargin=*,align=left,noitemsep,nolistsep]
        \item $\metaNTK_\anil(\X,\X)$. With \eqref{eq:F_0-grad}, we know
        \begin{align}
            \nabla_{\theta_0} F^\tau_{\theta_0}(\X,\X,\Y) &= \left(\nabla_{\theta_0} F^\tau_{\theta_0}(X_i,X_i,Y_i) \right)_{i=1}^N \nonumber\\
            &= \left(\nabla_{\theta_0} f_{\theta_0}(X_i) - \NNGP(X_i,X_i)\NNGP^{-1}(X_i,X_i)\left(I - e^{-\lambda \NNGP(X_i,X_i)\tau}\right) \nabla_{\theta_0} f_{\theta_0}(X_i) \right)_{i=1}^N\nonumber \\
            &=\left(e^{-\lambda \NNGP(X_i,X_i)\tau} \nabla_{\theta_0} f_{\theta_0}(X_i) \right)_{i=1}^N
        \end{align}
        Thus, the $(i,j)$-th block of $\metaNTK_\anil(\X,\X) = \nabla_{\theta_0} F^\tau_{\theta_0}(\X,\X,\Y) \cdot  \nabla_{\theta_0} F^\tau_{\theta_0}(\X,\X,\Y)^\top$ is
        \begin{align}
            \left[\metaNTK_\anil(\X,\X)\right]_{ij} &= \nabla_{\theta_0} F^\tau_{\theta_0}(X_i,X_i,Y_i) \nabla_{\theta_0} F^\tau_{\theta_0}(X_j,X_j,Y_j)^\top \nonumber \\
            &= e^{-\lambda \NNGP(X_i,X_i)\tau} \nabla_{\theta_0} f_{\theta_0}(X_i) f_{\theta_0}(X_j)^\top  e^{-\lambda \NNGP(X_j,X_j)\tau} \nonumber \\
            & = e^{-\lambda \NNGP(X_i,X_i)\tau} \NTK(X_i,X_j)  e^{-\lambda \NNGP(X_j,X_j)\tau}
        \end{align}
        Then, the whole matrix can be expressed as
        \begin{align}\label{eq:anil-kernel:supp}
            \metaNTK_\anil(\X,\X) = \diag{\{e^{-\lambda \NNGP(X_i,X_i)\tau}\}_{i=1}^N}\cdot \NTK(\X,\X) \cdot \diag{\{e^{-\lambda \NNGP(X_j,X_j)\tau} \}_{j=1}^N}
        \end{align}
        where $\diag{\{e^{-\lambda \NNGP(X_i,X_i)\tau}\}_{i=1}^N}$ is a diagonal block matrix with the $i$-th block as $e^{-\lambda \NNGP(X_i,X_i)\tau}$.
        \item $\metaNTK_\anil'((\xx,\hat\tau),\X)$. With \eqref{eq:F_0-grad}, we can derive that 
        \begin{align}
            &\qquad\left[\metaNTK_\anil'((\xx,\hat\tau),\X)\right]_{1j}\nonumber \\
            &=  \nabla_{\theta_0^\body}F^{\hat\tau}_{\thetatest_0}(\xxy) \nabla_{\theta_0^\body}F^{\tau}_{\theta_0}(X_j,X_j,Y_j)^\top\nonumber\\
            &=\left(\nabla_{\theta_0^\body} f_{\thetatest_0}(X) - \NNGP(X,X')\NNGP^{-1}(X',X')\left(I - e^{-\lambda \NNGP(X',X')\hat\tau}\right) \nabla_{\theta_0^\body} f_{\theta_0^\body}(X') \right) \nonumber\\
            &\quad\cdot \left(\nabla_{\theta_0^\body} f_{\theta_0}(X_j) - \NNGP(X_j,X_j)\NNGP^{-1}(X_j,X_j)\left(I - e^{-\lambda \NNGP(X_j,X_j)\tau}\right) \nabla_{\theta_0^\body} f_{\theta_0}(X_j) \right)^\top\nonumber\\
            &= \left(\nabla_{\theta_0^\body} f_{\thetatest_0}(X) - \NNGP(X,X')\NNGP^{-1}(X',X')\left(I - e^{-\lambda \NNGP(X',X')\hat\tau}\right) \nabla_{\theta_0^\body} f_{\theta_0}(X') \right) \cdot \nabla_{\theta_0^\body} f_{\theta_0}(X_j)^\top e^{-\lambda \NNGP(X_j,X_j)\tau} \nonumber \\
            &= \left[\left(\NTK(X,X_j) -\NNGP(X,X_j)\right)- \NNGP(X,X')\NNGP^{-1}(X',X')\left(I - e^{-\lambda \NNGP(X',X')\hat\tau}\right)\left(\NTK(X',X_j) - \NNGP(X',X_j)\right)\right] e^{-\lambda \NNGP(X_j,X_j)\tau} \nonumber \\
           &= \left[\left(\NTK(X,X_j) -\NNGP(X,X_j)\right)- \NNGP(X,X')T^{\lambda,\hat\tau}_{\NNGP}(X')\left(\NTK(X',X_j) - \NNGP(X',X_j)\right)\right] e^{-\lambda \NNGP(X_j,X_j)\tau}
            \label{eq:anil-kernel-test:supp}
        \end{align}
        where we used the equivalence 
        \begin{align}
            \nabla_{\theta_0^\body} f_{\thetatest_0}(\cdot) \cdot \nabla_{\theta_0^\body} f_{\theta_0}(\ast)^\top = \NTK(\cdot,\ast) - \NNGP(\cdot,\ast)
        \end{align}
        in the infinite width limit at initialization.
    \end{itemize}
    \item \textbf{MTL}
    \begin{itemize}[leftmargin=*,align=left,noitemsep,nolistsep]
        \item $\metaNTK_\mtl(\X,\X)=\nabla_{\thetamtl_0} \fmtl_{\thetamtl_0}(\X) \cdot  \nabla_{\thetamtl_0} \fmtl_{\thetamtl_0}(\X)^\top$. Notice that for any input with head index $i$, we have
        \begin{align*}
            \nabla_{\thetamtl_0} \fmtl_{\thetamtl_0}(\cdot,i) &= \nabla_{\thetamtl_0^\body} \fmtl_{\thetamtl_0}(\cdot,i) + \sum_{j=1}^{N+1} \nabla_{\wmtl{j}} \fmtl_{\thetamtl_0}(\cdot,i) \\
            &=\nabla_{\thetamtl_0^\body} \fmtl_{\thetamtl_0}(\cdot,i) + \nabla_{\wmtl{i}} \fmtl_{\thetamtl_0}(\cdot,i) \eq\label{eq:fmtl_0-grad}
        \end{align*}
        since for $j\neq i$, we have $\nabla_{\wmtl{j}} \fmtl_{\thetamtl_0}(x,i)=0$ based on the multi-head structure.
        
        Thus, we can write down the $(i,j)$-th block of $\metaNTK_\mtl(\X,\X)$ as
        \begin{align*}
            \qquad [\metaNTK_\mtl(\X,\X)]_{ij}  
            &= \nabla_{\thetamtl_0} \fmtl_{\thetamtl_0}(X_i,i)\nabla_{\thetamtl_0} \fmtl_{\thetamtl_0}(X_j,j)^\top  \\
            &= \nabla_{\thetamtl_0^\body}\fmtl_{\thetamtl_0}(X_i,i)\nabla_{\thetamtl_0^\body} \fmtl_{\thetamtl_0}(X_j,j)^\top 
            + \nabla_{\wmtl{i}}\fmtl_{\thetamtl_0}(X_i,i)\nabla_{\wmtl{j}} \fmtl_{\thetamtl_0}(X_j,j)^\top
        \end{align*}
        Note that for $i\neq j$, we have $\nabla_{\wmtl{i}}\fmtl_{\thetamtl_0}(X_i,i)\nabla_{\wmtl{j}} \fmtl_{\thetamtl_0}(X_j,j)^\top = 0$, since $\wmtl{i}$ and $\wmtl{j}$ are in different dimensions of $\thetamtl$. Thus,
        \begin{itemize}
            \item as $i\neq j$, we have\footnote{The following equivalence can be straightforwardly dervied based on Appendix D and E of \cite{lee2019wide}.}
        \begin{align*}
            [\metaNTK_\mtl(\X,\X)]_{ij}  
            = \nabla_{\thetamtl_0^\body}\fmtl_{\thetamtl_0}(X_i,i)\nabla_{\thetamtl_0^\body} \fmtl_{\thetamtl_0}(X_j,j)^\top 
            = \NTK(X_i,X_j) - \NNGP(X_i,X_j)
        \end{align*}
            \item as $i = j$, we have
            \begin{align*}
                [\metaNTK_\mtl(\X,\X)]_{ii} =  \nabla_{\thetamtl_0^\body}\fmtl_{\thetamtl_0}(X_i,i)\nabla_{\thetamtl_0^\body} \fmtl_{\thetamtl_0}(X_i,i)^\top 
                + \nabla_{\wmtl{i}}\fmtl_{\thetamtl_0}(X_i,i)\nabla_{\wmtl{i}} \fmtl_{\thetamtl_0}(X_i,i)^\top 
                = \NTK(X_i,X_i)
            \end{align*}
        \end{itemize}
        In conclusion, for $i,j\in[N]$, we have
        \begin{align*}
            [\metaNTK_\mtl(\X,\X)]_{ij}  
            = \NTK(X_i,X_j) - \indicator[i\neq j] \NNGP(X_i,X_j)
        \end{align*}
        Thus, 
        \begin{align}\label{eq:mtl-kernel:supp}
            \metaNTK_\mtl(\X,\X) = \NTK(\X,\X) - \NNGP(\X,\X) + \diag{\{\NNGP(X_i,X_i)\}_{i=1}^N}
        \end{align}

    \end{itemize}
    \item $\metaNTK_\mtl'((\xx,\hat\tau),\X)=\nabla_{\thetamtl_0^\body} \Fmtl^{\hat \tau}_{\thetamtl_0} (\xxy) \nabla_{\thetamtl_0^\body} \fmtl_{\thetamtl_0}(\X)^\top$. 

    Based on \eqref{eq:fmtl_0-grad}, following \eqref{eq:anil-kernel-test:supp}, we can express the $(1,j)$-th block of $\metaNTK_\mtl'((\xx,\hat\tau),\X)$ as
    \begin{align*}
        &\qquad\left[\metaNTK_\mtl'((\xx,\hat\tau),\X)\right]_{1j} \\
        &=  \nabla_{\thetamtl_0^\body}F^{\hat\tau}_{\thetamtltest_0}(\xxy) \nabla_{\thetamtl_0^\body} \fmtl_{\thetamtl_0}(\X)^\top\\
        &=\left(\nabla_{\thetamtl_0^\body} f_{\thetamtltest}(X) - \NNGP(X,X')\NNGP^{-1}(X',X')\left(I - e^{-\lambda \NNGP(X',X')\hat\tau}\right) \nabla_{\thetamtl_0^\body} f_{\thetamtltest}(X') \right)\cdot
         \nabla_{\thetamtl_0^\body} \fmtl_{\thetamtl_0}(X_j,j)^\top\\
        &=\nabla_{\thetamtl_0^\body} f_{\thetamtltest}(X) \nabla_{\thetamtl_0^\body} \fmtl_{\thetamtl_0}(X_j,j)^\top- \NNGP(X,X')\NNGP^{-1}(X',X')\left(I - e^{-\lambda \NNGP(X',X')\hat\tau}\right) \nabla_{\thetamtl_0^\body} f_{\thetamtltest}(X') \nabla_{\thetamtl_0^\body} \fmtl_{\thetamtl_0}(X_j,j)^\top\\
        &= \Big[\NTK(X,X_j) - \NNGP(X,X_j)\Big] - \NNGP(X,X')\NNGP^{-1}(X',X')\left(I - e^{-\lambda \NNGP(X',X')\hat\tau}\right) \Big[\NTK(X',X_j) - \NNGP(X',X_j)\Big]\\
        & = \NTK(X,X_j) - \NNGP(X,X')\T^{\hat\tau}_{\NNGP}(X')\NTK(X',X_j) -  \NNGP(X,X')\NNGP^{-1}(X',X') e^{-\lambda \NNGP(X',X')\hat\tau} \NNGP(X',X_j)\eq \label{eq:mtl-kernel-test:supp}
    \end{align*}
\end{itemize}
\end{proof}
\textbf{Remarks.} Notice that \eqref{eq:anil-kernel-test:supp} and \eqref{eq:mtl-kernel-test:supp} indicate the following relation:
    \begin{align*}
        \left[\metaNTK_\anil'((\xx,\hat\tau),\X)\right]_{1j} = \left[\metaNTK_\mtl'((\xx,\hat\tau),\X)\right]_{1j} e^{-\lambda \NNGP(X_j,X_j)\tau}
    \end{align*}
    Furthermore, it is straightforward to show that
    \begin{align}\label{eq:anil-mtl-test-kernel-relation}
        \metaNTK_\anil'((\xx,\hat\tau),\X) = \metaNTK_\mtl'((\xx,\hat\tau),\X) \cdot \diag{\{e^{-\lambda \NNGP(X_j,X_j)\tau}\}_{j=1}^N}
    \end{align}
    where $\diag{\{e^{-\lambda \NNGP(X_j,X_j)\tau}\}_{j=1}^N}$ is a diagonal block matrix with the $j$-th block as $e^{-\lambda \NNGP(X_j,X_j)\tau}$.
\subsubsection{Proof of Lemma \ref{lemma:test-predict}} \label{supp:proof:proof-of-lemma-test-predict}
Now, we can prove Lemma \ref{lemma:test-predict} shown in Sec. \ref{sec:mtl=meta:functional}, by leveraging Lemma \ref{lemma:lienarization} and Lemma \ref{lemma:anil-mtl-kernels} that we just proved. In particular, without loss of generality, following \citet{CNTK}, we assume the outputs of randomly initialized networks have a much smaller magnitude compared with the magnitude of training labels such that $\|f_{\theta_0}(x)\|_2 \leq \|y\|_2 \leq \cO(\width^{-\frac 1 2})$. Notice this can be always achieved by choosing smaller initialization scale or scaling down the neural net output \cite{CNTK}, without any effect on the training dynamics and the predictions, up to a width-dependent factor on the learning rate. Below, we present the steps of the proof in detail.

\begin{proof}[Proof of Lemma \ref{lemma:test-predict}]
Plugging the kernels expressions derived by Lemma \ref{lemma:anil-mtl-kernels} into \eqref{eq:F_t-test} and \eqref{eq:F_t-test:mtl}, and combining with the fact that $\lim_{\width \rightarrow \infty} \nngp_{w_0}\rightarrow \NNGP$ (proved by Corollary 1 of \citet{lee2019wide}), we obtain the expressions of \eqref{eq:lemma:test-pred:F_anil} and \eqref{eq:lemma:test-pred:F_mtl} in Lemma \ref{lemma:test-predict} in the infinite width limit. Notice that we consider sufficiently large width $\width$, then the discrepancy between the infinite-width kernels and their finite-width counter-parts (i.e., the finite-width correction) is bounded by $\cO(\frac{1}{\sqrt{\width}})$ with arbitrarily large probability, indicated by Theorem 1 of \citet{Hanin2020Finite}. Thus, the finite-width correction terms are absorbed into the $\cO(\frac{1}{\sqrt{\width}})$ error terms in \eqref{eq:F_t-test} and \eqref{eq:F_t-test:mtl}.

\end{proof}
\subsubsection{Discrepancy between Predictions of ANIL and MTL}\label{supp:proof:pred-diff}
Based on \eqref{eq:anil-kernel:supp}, \eqref{eq:mtl-kernel:supp}, and \eqref{eq:anil-mtl-test-kernel-relation}, for small $\lambda \tau$, the discrepancy between ANIL and MTL predictions can be written as (Note: we consider neural nets trained under ANIL and MTL for infinite time $t=\infty$, then take their parameters $\theta_\infty$ and $\thetamtl_\infty$ for test on any task $\task=(\xyxy)$),
\begin{align*}
    & \qquad F_\anil(\xxy) - F_\mtl(\xxy) \\
    &= F_{\thetatest_{\infty}}(\xxy) - F_{\thetamtltest_{\infty}}(\xxy)\\
    &= \left[ \metaNTK_\anil'((\xx,\hat\tau),\X) \metaNTK_\anil^{-1}(\X,\X)-\metaNTK_\mtl'((\xx,\hat\tau),\X) \metaNTK_\mtl^{-1}(\X,\X) \right]\Y \\
    &\quad - \metaNTK_\anil'((\xx,\hat\tau),\X) \metaNTK_\anil^{-1}(\X,\X)\overbrace{G_\tau(\XXY)}^{ = \cO(\lambda \tau \Lev(\NNGP)) } +\cO(\frac{1}{\sqrt h})\\
    &= \Big[\metaNTK_\mtl'((\xx,\hat\tau),\X) \cdot \diag{\{e^{-\lambda \NNGP(X_j,X_j)\tau}\}_{j=1}^N} \diag{\{e^{-\lambda \NNGP(X_i,X_i)\tau}\}_{i=1}^N}^{-1}\NTK(\X,\X)^{-1} \diag{\{e^{-\lambda \NNGP(X_i,X_i)\tau}\}_{i=1}^N}^{-1} \\
    & \quad - \metaNTK_\mtl'((\xx,\hat\tau),\X) \metaNTK_\mtl^{-1}(\X,\X)\Big]\Y + \cO(\lambda \tau)+\cO(\frac{1}{\sqrt h})\\
    &=\metaNTK_\mtl'((\xx,\hat\tau),\X)\left[\NTK(\X,\X)^{-1} \diag{\{e^{-\lambda \NNGP(X_i,X_i)\tau}\}_{i=1}^N}^{-1} - \metaNTK_\mtl^{-1}(\X,\X)\right]\Y + \cO(\lambda \tau)+\cO(\frac{1}{\sqrt h})\\
    &= \metaNTK_\mtl'((\xx,\hat\tau),\X)\Big[\NTK(\X,\X)^{-1} \underbrace{\diag{\{e^{-\lambda \NNGP(X_i,X_i)\tau}\}_{i=1}^N}^{-1}}_{= I + \cO(\lambda \tau \Lev(\NNGP))} - \metaNTK_\mtl^{-1}(\X,\X)\Big]\Y + \cO(\lambda \tau \Lev(\NNGP))+\cO(\frac{1}{\sqrt h})\\
    &= \metaNTK_\mtl'((\xx,\hat\tau),\X)\Big[\NTK(\X,\X)^{-1} - \metaNTK_\mtl^{-1}(\X,\X)\Big]\Y + \cO(\lambda \tau \Lev(\NNGP))+\cO(\frac{1}{\sqrt h})\eq\label{eq:anil-mtl-diff:general}
\end{align*}
where $\Lev(\NNGP)\triangleq \max\left(\{\Lev(\NNGP(X_i,X_i))\}_{i=1}^N\right)$.

\textbf{Remarks.} \eqref{eq:anil-mtl-diff:general} indicates that for small $\lambda \tau$, the discrepancy between ANIL's and MTL's test predictions is determined by 
\begin{align}\label{eq:ntk-mtl-inv-diff}
\NTK(\X,\X)^{-1} - \metaNTK_\mtl^{-1}(\X,\X).    
\end{align}
Thus, if this difference vanishes in some limit, ANIL and MTL will output almost the same predictions on any test task.

\subsection{Kernel Structures for Deep ReLU Nets}\label{supp:proof:relu-kernel-structures}

\textbf{Setup.} As described by Sec. \ref{sec:mtl=meta:functional}, we focus on networks that adopt ReLU activation and He's initialization, and we consider the inputs are normalized to have unit variance, without loss of generality. Besides, we also assume any pair of samples in the training set are distinct.

\textbf{NTK and NNGP Kernel Structures.} \citet{xiao2020dis} shows that for ReLU networks with He's initialization and unit-variance inputs, the corresponding NTK and NNGP kernels have some special structures. Specifically, at large depth, the spectra of these kernels can be characterized explicitly, as shown by Lemma \ref{lemma:ntk-nngp-large-depth} below, which is adopted and rephrased from the Appendix C.1 of \citet{xiao2020dis}.
\begin{lemma}[Kernel Structures of NTK and NNGP]\label{lemma:ntk-nngp-large-depth} For sufficiently large depth $L$, NTK and NNGP kernels have the following expressions\footnote{Notice that we use the little-o notation here: $f(x) = o(g(x))$ indicates that $g(x)$ grows much faster than $f(x)$. Thus the $o(\cdot)$ terms are negligible here.} (Note: we use the superscript $^{(L)}$ to mark the kernels' dependence on the depth $L$)
    \begin{align}
        \NTK^{(L)}(\X,\X) &= L\left(\frac{1}{4} \ones{Nn}{Nn} + \frac{3}{4} I \right)+ \bA^{(L)}_{\X,\X} \label{eq:large-depth:ntk}\\
        \NNGP^{(L)}(\X,\X) &= \ones{Nn}{Nn} + \frac{1}{L^2} \bB^{(L)}_{\X,\X}\label{eq:large-depth:nngp}
    \end{align}
where $\bA^{(L)}_{\X,\X}, \bB^{(L)}_{\X,\X} \in \bR^{Nn \times Nn}$ is a symmetric matrix with elements of $\cO(1)$.

The eigenvalues of $\NTK^{(L)}(\X,\X)$ and $\NNGP^{(L)}(\X,\X)$ are all positive since $\NTK$ and $\NNGP$ are guaranteed to be positive definite, and these eigenvalues can be characterized as
\begin{align}
    \begin{cases}
    \Lev(\NTK(\X,\X)) = \frac{Nn+3}{4} L + \cO(1) \\
    \bulkev(\NTK(\X,\X)) = \frac{3}{4} L + \cO(1) 
    \end{cases}
    \qquad 
    \begin{cases}
        \Lev(\NNGP(\X,\X)) = Nn + \cO(\frac 1 L^2) \\
        \bulkev(\NNGP(\X,\X)) = \cO(\frac{1}{L^2}) 
        \end{cases}
\end{align}
where $\bulkev(\cdot)$ denotes the eigenvalues besides the largest eigenvalue.
\end{lemma}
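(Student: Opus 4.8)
The plan is to establish the two kernel expressions by a layerwise analysis of the recursions that define $\NNGP$ and $\NTK$ for ReLU networks with He's initialization and unit-variance inputs, and then read off the spectra from the leading-order matrices together with a matrix-perturbation (Weyl) argument. Since the statement is the one from Appendix C.1 of \citet{xiao2020dis}, I would follow and adapt that derivation. First I would set up the recursions: with He's initialization ($\sigma_w^2 = 2$, $\sigma_b = 0$) the NNGP diagonal is preserved across layers, so $\Sigma^{(l)}(x,x)=1$ for every $l$, while the off-diagonal correlation $\rho^{(l)}$ evolves under the arc-cosine map $\rho^{(l+1)} = \frac{1}{\pi}\big(\sqrt{1-(\rho^{(l)})^2} + (\pi - \arccos\rho^{(l)})\rho^{(l)}\big)$. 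I would also record the derivative kernel $\dot\Sigma^{(l+1)} = \frac{\pi - \arccos\rho^{(l)}}{\pi}$ and the NTK recursion $\Theta^{(l+1)} = \Theta^{(l)}\dot\Sigma^{(l+1)} + \Sigma^{(l+1)}$.

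The core step for the NNGP structure is the fixed point $\rho^\ast = 1$ of the correlation map. Writing $\delta_l = 1-\rho^{(l)}$ and Taylor-expanding near $\rho=1$ (using $\arccos(1-\delta) = \sqrt{2\delta}(1+\cO(\delta))$ and $\sqrt{1-(1-\delta)^2}=\sqrt{2\delta}(1+\cO(\delta))$), I expect to obtain $\delta_{l+1} = \delta_l - \frac{2\sqrt2}{3\pi}\delta_l^{3/2} + \cO(\delta_l^2)$. Viewing this as a discretization of $\frac{d\delta}{dl} = -c\,\delta^{3/2}$ yields $\delta_l = \cO(1/l^2)$, hence $\NNGP^{(L)}(X_i,X_j) = 1 - \cO(1/L^2)$ for distinct inputs and $=1$ on the diagonal, which is exactly \eqref{eq:large-depth:nngp} with $\bB^{(L)}_{\X,\X}$ having $\cO(1)$ entries and vanishing diagonal.

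Feeding this rate into the derivative kernel gives $\dot\Sigma^{(l+1)} = 1$ on the diagonal and $\dot\Sigma^{(l+1)} = 1 - \frac{\arccos\rho^{(l)}}{\pi} = 1 - \frac{3}{l} + o(1/l)$ off the diagonal (the coefficient $3$ follows from $\sqrt{2\delta_l}\approx 3\pi/l$). Substituting into the NTK recursion, the diagonal accumulates additively, $\Theta^{(L)}(x,x) = L + \cO(1)$, whereas off the diagonal an integrating-factor solution of $\Theta^{(l+1)} = (1-\tfrac{3}{l})\Theta^{(l)} + 1$ (continuous surrogate $\frac{d\Theta}{dl} = -\frac{3}{l}\Theta + 1$, integrating factor $l^3$) gives slope $\tfrac14$, i.e. $\Theta^{(L)}(X_i,X_j) = \frac{L}{4} + \cO(1)$. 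Together these reproduce the leading term $L(\frac14\ones{Nn}{Nn} + \frac34 I)$ of \eqref{eq:large-depth:ntk}, with $\bA^{(L)}_{\X,\X}$ collecting the $\cO(1)$ remainders.

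Finally I would compute the spectra of the leading matrices. The matrix $\frac14\ones{Nn}{Nn} + \frac34 I$ has eigenvalue $\frac{Nn}{4}+\frac34$ along the all-ones direction and $\frac34$ on its orthogonal complement; scaling by $L$ and applying Weyl's inequality to absorb the $\cO(1)$ perturbation $\bA^{(L)}_{\X,\X}$ gives $\Lev(\NTK) = \frac{Nn+3}{4}L + \cO(1)$ and $\bulkev(\NTK) = \frac34 L + \cO(1)$. Likewise $\ones{Nn}{Nn}$ has top eigenvalue $Nn$ and all remaining eigenvalues $0$, and the $\frac{1}{L^2}\bB^{(L)}_{\X,\X}$ perturbation shifts these by $\cO(1/L^2)$, giving $\Lev(\NNGP) = Nn + \cO(1/L^2)$ and $\bulkev(\NNGP) = \cO(1/L^2)$. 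The main obstacle is the delicate part: establishing the precise $1/l^2$ convergence rate of the correlation and the exact $3/l$ coefficient in $\dot\Sigma^{(l+1)}$ \emph{uniformly} over all pairs of distinct training inputs, and making the passage from the discrete recursion to its continuous approximation rigorous so that the $\cO(1)$ and $\cO(1/L^2)$ error terms are genuinely uniform — this is exactly where the fine-grained estimates of \citet{xiao2020dis} are required.
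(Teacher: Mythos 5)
Your proposal is correct: the recursion setup, the fixed-point expansion $\delta_{l+1} = \delta_l - \frac{2\sqrt{2}}{3\pi}\delta_l^{3/2} + \cO(\delta_l^2)$ giving $\delta_l = \cO(1/l^2)$, the resulting $1 - 3/l$ asymptotics for the derivative kernel, the integrating-factor solution yielding the off-diagonal slope $\tfrac14$ (versus slope $1$ on the diagonal), and the Weyl-perturbation reading of the spectra all reproduce the derivation in Appendix C.1 of \citet{xiao2020dis}. That is exactly the source this paper relies on---the paper offers no proof of this lemma, stating only that it is ``adopted and rephrased'' from that appendix---so your reconstruction takes essentially the same (indeed, the only) route, merely spelled out in more detail than the paper itself provides.
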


\textbf{Discrepancy between Kernel Inverses.} As shown by Appendix \ref{supp:proof:pred-diff}, the discrepancy between the predictions of ANIL and MTL is controlled by \eqref{eq:ntk-mtl-inv-diff}, i.e., $\NTK^\inv(\X,\X) - \metaNTK_\mtl^\inv(\X,\X)$. In the lemma below, we study \eqref{eq:ntk-mtl-inv-diff} in the setting of ReLU nets with He's initialization, and prove a bound over the operator norm of \eqref{eq:ntk-mtl-inv-diff}.

\begin{lemma}[Discrepancy between Kernel Inverses]\label{lemma:kernel-inv-diff}
There exists $L^*\in \mathbb{N}^{+}$ s.t. for $L\geq L^*$, 

\begin{align}\label{eq:med-depth:evals-condition}
    \begin{cases}
    \Lev\left(\NTK^{(L)}(\X,\X)\right) &\simeq \cO(NnL)  \gg \eval_2\left(\NTK^{(L)}(\X,\X)\right)\\
    \frac{1}{Nn}\bone{Nn}^\top \NTK^{(L)}(\X,\X)\bone{Nn} &\simeq \cO(NnL) \gg \eval_2\left(\NTK^{(L)}(\X,\X)\right)\\
    \Lev\left(\NTK^{(L)}(\X,\X)\right) &\geq \cO(L) \cdot \Lev \left(\NNGP^{(L)}(\X,\X)\right)\\
    \end{cases}
\end{align}
where $\eval_2(\cdot)$ denotes the second largest eigenvalue. Then, we have
\begin{align}\label{eq:ntk-mtl:F-norm:supp}
    \|\NTK(\X,\X)^{-1} - \metaNTK_\mtl^{-1}(\X,\X)\|_{\op} \leq \cO(\frac {1}{L^2})
\end{align}
    
\end{lemma}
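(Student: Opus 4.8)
The plan is to reduce the claim to a product of three operator-norm factors via the resolvent (sandwich) identity, and then read off the scaling of each factor from the explicit large-depth spectra in Lemma \ref{lemma:ntk-nngp-large-depth}. Setting $\Delta \defeq \metaNTK_\mtl(\X,\X) - \NTK(\X,\X)$, the identity
\begin{align*}
\NTK(\X,\X)^{-1} - \metaNTK_\mtl^{-1}(\X,\X) = \NTK(\X,\X)^{-1}\,\Delta\,\metaNTK_\mtl^{-1}(\X,\X)
\end{align*}
gives $\|\NTK^{-1} - \metaNTK_\mtl^{-1}\|_{\op} \le \|\NTK^{-1}\|_{\op}\,\|\Delta\|_{\op}\,\|\metaNTK_\mtl^{-1}\|_{\op}$ by submultiplicativity. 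By the exact kernel expression of Lemma \ref{lemma:anil-mtl-kernels}, $\metaNTK_\mtl(\X,\X) = \NTK(\X,\X) - \NNGP(\X,\X) + \diag{\{\NNGP(X_i,X_i)\}_{i=1}^N}$, so $\Delta$ is precisely the negated off-block-diagonal part of the NNGP Gram matrix.

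First I would show $\|\Delta\|_{\op} = \cO(1)$ in $L$. Substituting $\NNGP^{(L)}(\X,\X) = \ones{Nn}{Nn} + \frac{1}{L^2}\bB^{(L)}_{\X,\X}$, the leading part of $\Delta$ is $\diag{\{\ones{n}{n}\}_{i=1}^N} - \ones{Nn}{Nn}$, whose eigenvalues are $-(N-1)n$ (along $\bone{Nn}$), $n$, and $0$, so its operator norm equals $(N-1)n$. The $\frac{1}{L^2}\bB$ term adds only $\cO(Nn/L^2)$, giving $\|\Delta\|_{\op} = (N-1)n + \cO(Nn/L^2) = \cO(1)$ with $N,n$ treated as constants. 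Next I would bound the two inverse norms. For $\NTK$, Lemma \ref{lemma:ntk-nngp-large-depth} makes every eigenvalue at least $\bulkev(\NTK) = \frac{3}{4}L + \cO(1)$, hence $\lev(\NTK) = \Omega(L)$ and $\|\NTK^{-1}\|_{\op} = \cO(1/L)$. For $\metaNTK_\mtl$ I would apply Weyl's inequality with the previous step: $\lev(\metaNTK_\mtl) \ge \lev(\NTK) - \|\Delta\|_{\op} \ge \frac{3}{4}L - \cO(Nn)$, which is $\Omega(L)$ once $L \ge L^*$ for a threshold $L^*$ chosen so the bounded perturbation is dominated; thus $\|\metaNTK_\mtl^{-1}\|_{\op} = \cO(1/L)$. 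Multiplying the three factors yields $\cO(1/L)\cdot \cO(1)\cdot \cO(1/L) = \cO(1/L^2)$. The hypotheses \eqref{eq:med-depth:evals-condition}, which themselves follow from Lemma \ref{lemma:ntk-nngp-large-depth}, certify exactly these scalings: the first two identify the single dominant NTK eigenvalue as $\Theta(NnL)$ with eigenvector $\approx \bone{Nn}$ (witnessed by the Rayleigh quotient $\frac{1}{Nn}\bone{Nn}^\top \NTK \bone{Nn}$) while forcing $\eval_2 = \Theta(L)$, and the third keeps $\NNGP$, hence $\Delta$, a lower-order perturbation.

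The crux is the $\cO(1)$-versus-$\Theta(L)$ separation: the quadratic decay $1/L^2$ arises only because $\Delta$ stays bounded as $L\to\infty$ while both kernels grow like $L$, so each inverse contributes a factor $1/L$; were $\|\Delta\|_{\op}$ of order $L$ one would obtain merely $1/L$. The technical lynchpin is therefore the Weyl step guaranteeing $\lev(\metaNTK_\mtl) = \Omega(L)$, valid once $L$ is large enough that the bounded perturbation cannot collapse the spectrum. A mild subtlety to verify is that the largest-magnitude eigenvalue of $\Delta$ is aligned with the top eigenvector $\bone{Nn}$ of $\NTK$; this alignment is harmless for the sandwich bound, since along $\bone{Nn}$ the relevant factor is actually $1/\Lev(\NTK) = \cO(1/(NnL))$, making that direction contribute only $\cO(1/(NnL^2))$, smaller than the $\cO(n/L^2)$ coming from the bulk subspace $\bone{Nn}^\perp$. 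Because $\bone{Nn}$ is (to leading order) a common eigenvector of $\NTK$, $\NNGP$, and $\Delta$, no large cross term between the top direction and the bulk appears, the off-diagonal coupling being only $\cO(1/L^2)$, so the $\cO(1/L^2)$ bound holds and is tight.
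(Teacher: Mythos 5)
Your proof is correct, but it takes a genuinely different and more elementary route than the paper's. You apply the resolvent identity $\NTK^{-1}-\metaNTK_\mtl^{-1}=\NTK^{-1}\Delta\,\metaNTK_\mtl^{-1}$ once, globally, and close the argument with three norm estimates: the explicit eigendecomposition of the leading part of $\Delta$ (eigenvalues $-(N-1)n$, $n$, $0$, hence $\|\Delta\|_{\op}=\cO(1)$ with $N,n$ constant), the bulk lower bound $\lev(\NTK)=\tfrac{3}{4}L+\cO(1)$ from Lemma \ref{lemma:ntk-nngp-large-depth}, and Weyl's inequality to transfer that lower bound to $\metaNTK_\mtl$. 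The paper instead splits the perturbation in two: it routes the triangle inequality through an intermediate matrix $\wdNTK=\NTK+I-\tfrac{1}{L^2}\widetilde{\bB}^{(L)}_{\X,\X}$, treats the rank-one all-ones piece $\ones{N}{N}$ exactly via the Woodbury identity (which is where the eigenvalue conditions \eqref{eq:med-depth:evals-condition} are actually consumed), handles the remaining bounded piece with the identity $(A+B)^{-1}=A^{-1}-A^{-1}B(A+B)^{-1}$, and runs a separate case analysis for $n=1$ and $n>1$. What the paper's extra machinery buys is sharper $N$-dependence: the Woodbury step shows that the all-ones part of the perturbation, despite having norm $\Theta(Nn)$, contributes only $\cO(\tfrac{1}{NL^2})$ because it aligns with the NTK's top eigenvector of size $\Theta(NnL)$ --- your sandwich bound would degrade to $\cO(N/L^2)$ if $N$ were allowed to grow. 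Under the paper's own convention that $N,n$ are constants, this distinction is immaterial and both arguments yield $\cO(1/L^2)$; your approach is shorter, avoids the case split, and only uses the hypotheses \eqref{eq:med-depth:evals-condition} implicitly through the spectra of Lemma \ref{lemma:ntk-nngp-large-depth}, while your closing observation about the alignment of $\Delta$ with $\bone{Nn}$ is exactly the structure the paper exploits quantitatively via Woodbury (it is, as you say, not needed for the upper bound itself).
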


\begin{proof}
From \eqref{eq:mtl-kernel:supp}, we know (Note: we omit the superscript $^{(L)}$ for simplicity in this proof)
\begin{align*}
    \metaNTK_\mtl(\X,\X) & = \NTK(\X,\X) - \NNGP(\X,\X) + \diag{\{\NNGP(X_i,X_i)\}_{i=1}^N} \\
    & = \NTK(\X,\X) - \wdNNGP(\X,\X)
\end{align*}
where we denote $\wdNNGP (\X,\X) = \NNGP (\X,\X) + \diag{\{\NNGP(X_i,X_i)\}_{i=1}^N}$ for simplicity.

\underline{\textbf{Case I:} $n=1$.}

In this case, obviously, for each $i\in[N]$, we have $\NNGP(X_i,X_i) = 1 + \cO(\frac 1 L^2) \in \bR$. We can define a perturbed NNGP matrix as
\begin{align}\label{eq:perturbed-nngp}
    \wdNNGP(\X,\X) &= \NNGP(\X,\X) - \diag{\{\NNGP(X_i,X_i)\}_{i=1}^N}\\
     & = \ones{N}{N} - I + \frac{1}{L^2}  \widetilde{\bB}^{(L)}_{\X,\X} 
\end{align}
where we define $\widetilde{\bB}^{(L)}_{\X,\X}  = \bB^{(L)}_{\X,\X} - \left(\diag{\{\NNGP(X_i,X_i)\}_{i=1}^N} - I\right)$, i.e., $\bB^{(L)}_{\X,\X}$ with the $\cO(\frac 1 L^2)$ terms from $\diag{\{\NNGP(X_i,X_i)\}_{i=1}^N}$.

For convenience, let us define a perturbed NTK matrix as 
\begin{align}\label{eq:perturbed-ntk}
    \wdNTK(\X,\X) &= \NTK(\X,\X) - \left(\wdNNGP(\X,\X) - \ones{N}{N}\right)\nonumber\\
    &=\NTK(\X,\X) + I - \frac{1}{L^2}  \widetilde{\bB}^{(L)}_{\X,\X}.
\end{align}
Obviously, we have
\begin{align*}
    \|\NTK(\X,\X)^{-1} - \metaNTK_\mtl^{-1}(\X,\X)\|_{\op} &= \|\NTK(\X,\X)^{-1} - \wdNTK^{-1}(\X,\X) + \wdNTK^{-1}(\X,\X) - \metaNTK_\mtl^{-1}(\X,\X)\|_{\op}\\
    &\leq \|\NTK(\X,\X)^{-1} - \wdNTK^{-1}(\X,\X)\|_{\op} + \|\wdNTK^{-1}(\X,\X) - \metaNTK_\mtl^{-1}(\X,\X)\|_{\op}\eq\label{eq:ntk-mtl-inv:norm:bound}
\end{align*}
Thus, we can prove \eqref{eq:ntk-mtl:F-norm:supp} by providing bounds for $\|\NTK(\X,\X)^{-1} - \wdNTK^{-1}(\X,\X)\|_{\op}$ and $\|\wdNTK^{-1}(\X,\X) - \NTK^{-1}(\X,\X)\|_{\op}$ separately.
\begin{itemize}
    \item \textbf{Bound $\|\metaNTK_\mtl^{-1} - \wdNTK^{-1}(\X,\X)\|_{\op}$.}

By the Woodbury identity, we have
\begin{align*}
    \metaNTK_\mtl^{-1}(\X,\X) 
    &= \left(\NTK(\X,\X) - \wdNNGP(\X,\X)\right)^{-1}\\
    &= \Big(\big[\overbrace{\NTK(\X,\X) + I - \frac{1}{L^2}  \widetilde{\bB}^{(L)}_{\X,\X} - o(\frac{1}{L^2})}^{\wdNTK(\X,\X) \triangleq }\big] - \ones{N}{N}\big)^{-1}\\
    &= \left(\wdNTK(\X,\X) - \ones{N}{N}\right)^{-1}\\
    &= \wdNTK(\X,\X)^{-1} - \rho \cdot \wdNTK(\X,\X)^{-1} \ones{N}{N} \wdNTK(\X,\X)^{-1}
\end{align*}
where $$\rho = \frac{1}{1 - \bone{N}^\top \wdNTK(\X,\X)^{-1} \bone{N} }$$

By \eqref{eq:med-depth:evals-condition} and some eigendecomposition analysis, we can easily derive that 
\begin{align*}
    \rho = \frac{1}{1 - \bone{N}^\top \wdNTK(\X,\X)^{-1} \bone{N} } &\simeq \frac{1}{1 - \cO(\frac{1}{L})}\\
    \wdNTK(\X,\X)^{-1} \ones{N}{N} \wdNTK(\X,\X)^{-1}&\simeq \cO\left(\frac{1}{N^2L^2}\right)\ones{N}{N} 
\end{align*}
Thus 
\begin{align}
    \metaNTK_\mtl^{-1}(\X,\X) = \wdNTK(\X,\X)^{-1} - \cO\left(\frac{1}{N^2L^2(1 - \cO(\frac{1}{L}))}\right)\ones{N}{N} 
\end{align}
where the last term is negligible since its \textit{maximum} eigenvalue is $\cO(\frac{1}{NL^2(1 - \cO(\frac{1}{L})}))$, while the \textit{minimum} eigenvalue for the first term is $\cO(\frac{1}{NL})$.

Thus, we can write
\begin{align}\label{eq:mtl-pertrbed-ntk:norm:bound}
    \|\metaNTK_\mtl^{-1}(\X,\X) - \wdNTK(\X,\X)^{-1}\|_{\op} = \|\cO\left(\frac{1}{N^2L^2(1 - \cO(\frac{1}{L}))}\right)\ones{N}{N} \|_{\op} \leq \cO(\frac{1}{NL^2})
\end{align}

\item Bound $\|\wdNTK^{-1}(\X,\X) - \NTK^{-1}(\X,\X)\|_{\op}$

By \eqref{eq:large-depth:ntk}, \eqref{eq:perturbed-ntk}, we know
\begin{align*}
    \wdNTK(\X,\X) &= \Big(\overbrace{ \frac{L}{4} \ones{N}{Nn} + \frac{3L}{4} I + \bA^{(L)}_{\X,\X} }^{\NTK(\X,\X)}\Big) - \Big(\overbrace{\ones{N}{N} - I + \frac{1}{L^2}  \widetilde{\bB}^{(L)}_{\X,\X}}^{\wdNNGP(\X,\X) - \ones{N}{N}}\Big) \\
    &= \left(\frac{L}{4} - 1\right)\ones{N}{N} + \left(\frac{3L}{4} + 1\right)I + \left(\bA^{(L)}_{\X,\X} - \frac{1}{L^2}  \widetilde{\bB}^{(L)}_{\X,\X}\right)
\end{align*}
By observation, it is obvious that for relatively large $L$, the perturbation $\ones{N}{N} - I + \frac{1}{L^2}  \widetilde{\bB}^{(L)}_{\X,\X}$ has minimal effect, e.g., the spectrum of $\wdNTK(\X,\X)$ is almost identical to $\NTK(\X,\X)$.

Now, let us bound the inverse of the perturbed matrix $\wdNTK(\X,\X)$ formally. 

Leveraging the identity $(A+B)^\inv = A^\inv A^\inv B (A+B)^\inv$ from \cite{henderson1981deriving}. Defining $$\hDelta = \wdNTK(\X,\X) - \NTK(\X,\X)=\ones{N}{N} - I + \frac{1}{L^2} \widetilde{\bB}^{(L)}_{\X,\X}$$
then we have
\begin{align*}
  &\qquad \left\|\wdNTK(\X,\X)^{-1} - \NTK(\X,\X)^\inv \right\|_{op}\\
  &= \left\|\left(\NTK(\X,\X) + \hDelta\right)^{-1}\right\|_{op}\\
  &=\left\|\NTK(\X,\X)^\inv + \NTK(\X,\X)^\inv \hDelta \left(\NTK(\X,\X) + \Delta \right)^\inv -\NTK(\X,\X)^\inv \right\|_{op}\\
  & = \left\|\NTK(\X,\X)^\inv \hDelta \wdNTK(\X,\X)^\inv \right\|_{op}\\
  & = \left\|\NTK(\X,\X)^\inv \left(\ones{N}{N} - I + \frac{1}{L^2} \widetilde{\bB}^{(L)}_{\X,\X}\right) \wdNTK(\X,\X)^\inv \right\|_{op} \\
  &\leq \left\| \wdNTK(\X,\X)^{-1}\ones{N}{N} \wdNTK(\X,\X)^{-1} \right\|_\op 
    + \left\| \NTK(\X,\X)^{-1}I\wdNTK(\X,\X)^{-1} \right\|_\op
    + \frac 1 L^2 \left\| \wdNTK(\X,\X)^{-1}\widetilde{\bB}^{(L)}_{\X,\X} \wdNTK(\X,\X)^{-1} \right\|_\op\eq\label{eq:perturbed-NTK:norm:bound:leq}\\
    &\leq \cO\left(\frac{1}{NL^2}\right) +\cO\left(\frac{1}{L^2}\right) + \cO\left(\frac{1}{L^4}\right) \\
    &\leq \cO\left(\frac{1}{L^2}\right)\eq\label{eq:perturbed-NTK:norm:bound}
\end{align*}
\end{itemize}

Finally, combining \eqref{eq:ntk-mtl-inv:norm:bound}, \eqref{eq:mtl-pertrbed-ntk:norm:bound} and \eqref{eq:perturbed-NTK:norm:bound}, we have 
\begin{align*}
 \|\NTK(\X,\X)^{-1} - \metaNTK_\mtl^{-1}(\X,\X)\|_{\op}
    &\leq \|\NTK(\X,\X)^{-1} - \wdNTK^{-1}(\X,\X)\|_{\op} + \|\wdNTK^{-1}(\X,\X) - \metaNTK_\mtl^{-1}(\X,\X)\|_{\op}    \\
    &\leq \cO(\frac{1}{NL^2}) + \cO(\frac{1}{L^2}) = \cO(\frac{1}{L^2})\label{eq:n=1:final-bound}\eq
\end{align*}

\underline{\textbf{Case II:} $n>1$.}

Compared to the case of $n=1$, the only difference with \eqref{eq:n=1:final-bound} is caused by the term $\left\| \NTK(\X,\X)^{-1}I\wdNTK(\X,\X)^{-1} \right\|_\op$ in \eqref{eq:perturbed-NTK:norm:bound:leq} is converted to 
$$\left\|\NTK(\X,\X)^{-1}\diag{\{\ones{n}{n}\}_{i=1}^N}\wdNTK(\X,\X)^{-1} \right\|_\op$$
Since $$\|\diag{\{\ones{n}{n}\}_{i=1}^N}\|_{op} = \|\ones{n}{n}\|_{op}=  n = \cO(1)~,$$
we have
\begin{align*}
    \left\| \NTK(\X,\X)^{-1}\diag{\{\ones{n}{n}\}_{i=1}^N}\wdNTK(\X,\X)^{-1} \right\|_\op &\leq \left\|\wdNTK(\X,\X)^{-1}\right\|_\op^2
    \left\|\diag{\{\ones{n}{n}\}_{i=1}^N}\right\|_\op\\
    &\leq \cO(\frac{1}{L^2})
\end{align*}
\end{proof}

\subsection{Proof of Theorem \ref{thm:closeness}}\label{supp:proof:main-theorem}

The proof of Theorem \ref{thm:closeness} can be straightforwardly derived based on Lemma \ref{lemma:kernel-inv-diff}.
\begin{proof}
By \eqref{eq:anil-mtl-diff:general}, \eqref{eq:ntk-mtl:F-norm:supp}, we have
\begin{align*}
    &\qquad \|F_\anil(\xxy) - F_\mtl(\xxy)\|_2 \\
    &\leq \|\metaNTK_\mtl'((\xx,\hat\tau),\X)\|_\op \|\NTK(\X,\X)^{-1} - \metaNTK_\mtl^{-1}(\X,\X)\|_\op \|\Y\|_2 + \cO(\lambda \tau \Lev(\NNGP)) +\cO(\frac{1}{\sqrt h})\\
    &\leq \cO(\frac{1}{L}) + \cO(\lambda \tau) +\cO(\frac{1}{\sqrt h})
\end{align*}
where we used the facts that $\|\metaNTK_\mtl'((\xx,\hat\tau),\X)\|_\op = 
\cO(L)$, which can be straightforwardly derived from Lemma \ref{lemma:anil-mtl-kernels} and \ref{lemma:ntk-nngp-large-depth}.
\end{proof}

\subsection{Extension to Residual ReLU Networks}\label{supp:proof:residual}

Corollary \ref{corollary:resnets} states that the theoretical results of Theorem \ref{thm:closeness} apply to residual ReLU networks and residual ReLU networks with LayerNorm. The proof of this corollary is simply derived from Appendix C.2 and C.4 of \citet{xiao2020dis}. 
\begin{proof}
For residual ReLU networks, the corresponding NTK and NNGP have a factor of $e^{L}$ compared \eqref{eq:large-depth:ntk} and \eqref{eq:large-depth:nngp}, which has no effect on the predictors $F_\anil$ and $F_\mtl$, since the factors from the kernel and kernel inverse cancel out (e.g., $e^{L}\metaNTK_\mtl'((X,X',\hat\tau),\X) \cdot (e^{L} \metaNTK_\mtl(\X,\X))^{-1} = \metaNTK_\mtl'((X,X',\hat\tau),\X)  \metaNTK_\mtl(\X,\X)^{-1}$). Thus, Theorem \ref{thm:closeness} applies to this class of networks.

For residual ReLU networks with LayerNorm, Appendix C.3 of \citet{xiao2020dis} shows the kernel structures of NTK and NNGP is the same as ReLU networks without residual connections. Thus, Theorem \ref{thm:closeness} directly applies to this class of networks.
\end{proof}

\section{Details of Experiments}\label{supp:exp}
In this section, we will provide more details about the experiment in Sec. \ref{sec:exp}. Specifically,
\begin{itemize}
    \item Appendix \ref{supp:exp:theory-validate}: presents more experimental details about Sec. \ref{sec:exp:theory-validate}, the empirical validation of Theorem \ref{thm:closeness}.
    \item Appendix \ref{supp:exp:few-shot}: presents more experimental details about Sec. \ref{sec:exp:few-shot}, the empirical study on few-shot image classification benchmarks.
\end{itemize}
\subsection{Empirical Validation of Theorem \ref{thm:closeness}}\label{supp:exp:theory-validate}

\textbf{Implementation.} We implement MTL and ANIL kernels with Neural Tangents \cite{neuraltangents2020}, a codebase built on JAX \cite{jax2018github}, which is a package designed for high-performance machine learning research in Python. Since MTL and ANIL kernel functions are composite kernel functions built upon NTK and NNGP functions, we directly construct NTKs and NNGPs using Neural Tangents and then compose them into MTL and ANIL kernels.

\textbf{About Figure \ref{fig:thm-validate}.} Note that the value at $L=10$ in the first image is a little smaller than the value at $\lambda\tau = 0$ in the second image. That is because the random seeds using in the two images are different. Even though we take an average over 5 random seeds when plotting each image, there still exists some non-negligible variance. 

\subsection{Experiments on Few-Shot Image Classification Benchmarks} \label{supp:exp:few-shot}


\textbf{Fine-Tuning in Validation and Test.} In the meta-validation and meta-testing stages, following Sec. \ref{sec:prelim:fine-tune}, we fine-tune a linear classifier on the features (i.e., outputs of the last hidden layer) with the cross-entropy loss and a $\ell_2$ regularization. Specifically, similar to \citet{tian2020rethink}, we use the logistic regression classifier from sklearn for the fine-tuning \cite{sklearn}, and we set the $\ell_2$ regularization strength to be $0.33$ based on the following ablation study on $\ell_2$ penalty (i.e., Table \ref{tab:l2}.
\begin{table}[b]
    \centering
    \setlength\tabcolsep{1.7pt}
    \begin{tabular}{c c c c c c c c c}
        {\small $\ell_2$ Penalty}  &\small $0.0001$& \small{$0.001$}& \small{0.01}& \small{0.1} & \small{0.33} & \small{1} & \small{3}\\
        \midrule
        {\small Test Accuracy(\%)} & \small 76.86 & \small 77.02 & \small 77.28 & \small 77.61 &\small \textbf{77.72} & \small 77.55 &\small 76.82\\
    \end{tabular}
    \caption{Ablation study of the $\ell_2$ penalty on the fine-tuned linear layer. Evaluated on mini-ImageNet (5-way 5-shot classification).
    }\label{tab:l2}
\end{table}    

\nocite{finn2019online,provable-gbml,adaptive-GBML,hu2020biased,xu2020meta}
\nocite{maml_nonconvex,ji2020multistep,imaml,zhou2019metalearning}

\end{document}